\relax
\documentclass[letterpaper]{article} 
\usepackage{aaai21}  
\usepackage{times}  
\usepackage{helvet} 
\usepackage{courier}  
\usepackage[hyphens]{url}  
\usepackage{graphicx} 
\urlstyle{rm} 
\usepackage{natbib}  
\usepackage{caption} 
\frenchspacing  
\setlength{\pdfpagewidth}{8.5in}  
\setlength{\pdfpageheight}{11in}  

\setcounter{secnumdepth}{2} 

\usepackage[utf8]{inputenc} 
\usepackage[T1]{fontenc}    
\usepackage{hyperref}       
\usepackage{url}            
\usepackage{booktabs}       
\usepackage{amsfonts}       
\usepackage{nicefrac}       
\usepackage{microtype}      
\usepackage{xcolor}         

\usepackage{graphicx}
\usepackage{epstopdf}
\usepackage{mathtools}
\usepackage{amsthm, amssymb}
\usepackage[ruled, vlined, linesnumbered]{algorithm2e}
\usepackage{caption, subcaption}
\usepackage{enumitem}
\usepackage{color}
\usepackage{comment}

\newtheorem{theorem}{Theorem}
\newtheorem{lemma}[theorem]{Lemma}
\newtheorem{claim}[theorem]{Claim}
\newtheorem{corollary}[theorem]{Corollary}

\newtheorem{observation}[theorem]{Observation}

\newtheorem{remark}[theorem]{Remark}

\newtheorem{definition}{Definition}

\newtheorem{example}{Example}

\renewcommand{\v}{\mathbf{v}}
\newcommand{\w}{\mathbf{w}}
\renewcommand{\a}{\mathbf{a}}
\renewcommand{\b}{\mathbf{b}}
\newcommand{\g}{\mathbf{g}}
\newcommand{\Top}{\operatorname{\tt Top}}
\newcommand{\E}{\mathcal{E}}
\newcommand{\A}{\mathcal{A}}
\newcommand{\B}{\mathcal{B}}

\newcommand{\Q}{\mathcal{Q}}

\newcommand{\eps}{\epsilon}
\newcommand{\abs}[1]{\left| #1 \right|}

\DeclareMathOperator{\bE}{{\mathop{\mathbb{E}}}}
\DeclareMathOperator{\Geo}{{\mathop{\mathrm{Geo}}}}

\newcommand{\Prune}{\textsc{Prune}}
\newcommand{\Explore}{\textsc{Explore}}
\newcommand{\ExploreS}{\textsc{ExploreSet}}

\title{Instance-Sensitive Algorithms for Pure Exploration in Multinomial Logit Bandit\thanks{N. Karpov and Q. Zhang are supported in part by CCF-1844234 and CCF-2006591.}}

\author {
	Nikolai Karpov \quad \quad
	Qin Zhang  \\
}
\affiliations {
	 Indiana University Bloomington \\
	\{nkarpov, qzhangcs\}@indiana.edu
}

\begin{document}

\maketitle

\begin{abstract}
Motivated by real-world applications such as fast fashion retailing and online advertising, the Multinomial Logit Bandit (MNL-bandit) is a popular model in online learning and operations research, and has attracted much attention in the past decade.   In this paper, we give efficient algorithms for {\em pure exploration} in MNL-bandit.  Our algorithms achieve {\em instance-sensitive} pull complexities.  We also complement the upper bounds by an almost matching lower bound.
\end{abstract}

\section{Introduction}
\label{sec:intro}

We study a model in online learning called {\em multinomial logit bandit} ({\em MNL-bandit} for short), where we have $N$ substitutable items $\{1, 2, \ldots, N\}$, each of which is associated with a known reward $r_i \in (0, 1]$ and an {\em unknown} preference parameter $v_i \in (0, 1]$. We further introduce a null item $0$ with reward $r_0 = 0$, which stands for the case of ``no-purchase''. We set $v_0 = 1$, that is, we assume that the no-purchase decision is the most frequent case, which is a convention in the MNL-bandit literature and can be justified by many real-world applications to be mentioned shortly. 

Denote $[n] \triangleq \{1, 2, \ldots, n\}$.  Given a subset (called {\em an assortment}) $S \subseteq [N]$, the probability that one chooses $i \in S \cup \{0\}$ is given by 
$$
p_i(S) = \frac{v_i}{v_0 + \sum_{j \in S} v_j} = \frac{v_i}{1 + \sum_{j \in S} v_j}.
$$
Intuitively, the probability of choosing the item $i$ in $S$ is proportional to its preference $v_i$.  This choice model is called the {\em MNL choice model},  introduced independently by Luce~\cite{Luce59} and Plackett~\cite{Plackett75}. 
We are interested in finding an assortment $S \subseteq [N]$ such that the following expected reward is maximized.

\begin{definition}[expected reward]
\label{def:reward}
Given an assortment $S \subseteq [N]$ and a vector of item preferences $\v = (v_1, \ldots, v_N)$,  the expected reward of $S$ with respect to $\v$ is defined to be
\begin{equation*}
\label{eq:reward}
 R(S, \v) = \sum_{i \in S} r_i p_i(S) = \sum_{i \in S} \frac{r_i v_i}{1 + \sum_{j \in S} v_j}.
\end{equation*}
\end{definition}  

The MNL-bandit problem was initially motivated by fast fashion retailing and online advertising, and finds many applications in online learning, recommendation systems, and operations research (see \cite{avadhanula2019} for an overview). For instance, in fast fashion retailing, each item corresponds to a product and its reward is simply the revenue generated by selling the product. The assumption that $v_0 \ge \max\{v_1, \ldots, v_N\}$ can be justified by the fact that most customers do not buy anything in a shop visit. A similar phenomenon is also observed in online advertising where it is most likely that a user does not click any of the ads on a webpage when browsing. We naturally want to select a set of products/ads $S \subseteq [N]$ to display in the shop/webpage so that $R(S, \v)$, which corresponds to revenue generated by customer/user per visit, is maximized. 

We further pose a capacity constraint $K$ on the cardinality of $S$, since in most applications the size of the assortment cannot exceed a certain size. For example, the number of products presented at a retail shop is capped due to shelf space constraints, and the number of ads placed on a webpage cannot exceed a certain threshold.

In the MNL-bandit model, we need to {\em simultaneously} learn the item preference vector $\v$ and find the assortment with the maximum expected reward under $\v$. We approach this by repeatedly selecting an assortment to present to the user, observing the user's choice, and then trying to update the assortment selection policy. We call each observation of the user choice given an assortment a {\em pull}. We are interested in minimizing the number of pulls, which is the most expensive part of the learning process.

In bandit theory we are interested in two objectives. The first is called {\em regret minimization}: given a pull budget $T$, try to minimize the accumulated difference (called {\em regret}) between the sum of expected rewards of the optimal strategy in the $T$ pulls and that of the proposed learning algorithm; in the optimal strategy we always present the best assortment (i.e., the assortment with the maximum expected reward) to the user at each pull. The second is called {\em pure exploration}, where the goal is simply to identify the best assortment. 

Regret minimization in MNL-bandit has been studied extensively in the literature~\cite{RSS10,SZ13,DGT13,AAGZ16,AAGZ17,CW18}. The algorithms proposed in \cite{RSS10, SZ13} for the regret minimization problem make use of an ``exploration then exploitation'' strategy, that is, they first try to find the best assortment and then stick to it. However, they need the prior knowledge of the {\em gap} between the expected reward of the optimal assortment and that of the second-best assortment, which, in our opinion, is {\em unrealistic} in practice since the preference vector $\v$ is unknown at the beginning. We will give a more detailed discussion on these works in Section~\ref{sec:related}. 

In this paper we focus on pure exploration. Pure exploration is useful in many applications. For example, the retailer may want to perform a set of customer preference tests (e.g., crowdsourcing) to select a good assortment before the actual store deployment. We propose algorithms for pure exploration in MNL-bandit {\em without} any prior knowledge of preference vector. Our algorithms achieve {\em instance-sensitive} pull complexities which we elaborate next.

\vspace{2mm}
\noindent{\bf Instance Complexity.\ }
Before presenting our results, we give a few definitions and introduce {\em instance complexities} for pure exploration in MNL-bandit.

\begin{definition}[best assortment $S_\v$ and optimal expected reward $\theta_\v$]
\label{def:Sv}
Given a capacity parameter $K$ and a vector of item preferences $\v$, let 
\begin{equation*}
S_\v \triangleq \arg \max_{S \subseteq [N]: \abs{S} \le K} R(S, \v)
\end{equation*}
denote the best assortment with respect to $\v$. If the solution is not unique then we choose the one with the smallest cardinality which is unique (see the discussion after Lemma~\ref{lem:opt}). Let $\theta_\v \triangleq R(S_\v, \v)$ be the optimal expected reward.
\end{definition}

Denote $\eta_i \triangleq (r_i - \theta_{\v}) v_i$; we call $\eta_i$ the {\em advantage} of item $i$.  Suppose we have sorted the $N$ items according to $\eta_i$, let $\eta^{(j)}$ be the $j$-th largest value in the sorted list. 

\begin{definition}[reward gap $\Delta_i$]
\label{def:gap}
For any item $i \in [N] \backslash S_\v$, we define its reward gap to be
\begin{equation*}
\label{eq:h-1}
\Delta_i \triangleq  \left\{
  \begin{array}{ll}
   \eta^{(K)} - \eta_i, & \text{if} \abs{S_\v} = K,\\
    - \eta_i, & \text{if} \abs{S_\v} < K.
  \end{array}
  \right.
\end{equation*}
and for any item $i \in S_\v$, we define 
\begin{equation}
\label{eq:h-2}
\Delta_i \triangleq \bar{\Delta} = \min\left\{\left(\eta^{(K)} - \eta^{(K+1)}\right),\min_{j \in S_{\v}} \{r_j - \theta_{\v}\} \right\}.
\end{equation}
\end{definition}

Definition~\ref{def:gap} may look a bit cumbersome. The extra term $\min_{j \in S_{\v}} \{r_j - \theta_{\v}\}$ in \eqref{eq:h-2} is added for a technical reason when handling the case that $\abs{S_\v} < K$; we will discuss this in more detail in Remark~\ref{rem:extra}. If $\abs{S_\v} = K$, then the task of finding the best assortment is equivalent to the task of identifying the $K$ items with the largest advantage values $\eta_i$, and the reward gap in Definition~\ref{def:gap} can be simplified as
\begin{eqnarray*}
\begin{array}{l}
\Delta_i =  \left\{
  \begin{array}{ll}
   \eta^{(K)} - \eta_i, & \forall i \in [N] \backslash S_\v,\\
   \eta^{(K)} - \eta^{(K+1)}, & \forall i \in S_\v.
  \end{array}
  \right.
\end{array}
\end{eqnarray*}

We now give two instance complexities for pure exploration in MNL-bandit.  The second can be viewed as a refinement of the first.
\begin{definition}[instance complexity $H_1$]
\label{def:H1}
We define the first instance complexity for pure exploration in MNL-bandit to be
$$H_1 \triangleq \sum\nolimits_{i \in [N]} \frac{1}{\Delta_i^2}.$$
\end{definition}
In this paper we assume that $\forall i \in [N], \Delta_i \neq 0$, since otherwise the complexity $H_1$ will be infinity.  This assumption implies that the best assortment is unique, which is also an essential  assumption for  works of literature whose pull complexities are based on ``assortment-level'' gaps, as we will discuss in Section~\ref{sec:related}.

Definition~\ref{def:H1} bears some similarity to the instance complexity defined for pure exploration in the {\em multi-armed bandits} (MAB) model, where we have $N$ items each of which is associated with an unknown distribution, and the goal is to identify the item whose distribution has the largest mean. 
In MAB the instance complexity is defined to be $H_{\tt MAB} = \sum_{i=2}^N 1/\Delta^2_i$, where $\Delta_i = \mu^{(1)} - \mu^{(i)}$ where $\mu^{(1)}$ is  the largest mean of the $N$ items and $\mu^{(i)}$ is the $i$-th largest mean of the $N$ items \cite{ABM10}.  Our definition of $\Delta_i$ is more involved due to the more complicated combinatorial structure of the MNL-bandit model.

\begin{definition}[instance complexity $H_2$]
\label{def:H2}
$$H_2 \triangleq \sum\nolimits_{i \in [N]} \frac{v_i + 1/K}{\Delta_i^2} +\max_{i \in [N]} \frac{1}{\Delta_i^2}.$$
\end{definition}
It is easy to see that $H_2 = O(H_1)$ (more precisely, $\frac{H_1}{K} \le H_2 \le 3 H_1$).  We comment that the $\max_{i \in [N]} \frac{1}{\Delta_i^2}$ term is needed only when $\abs{S_\v} < K$. 

\vspace{2mm}
\noindent{\bf Our Results.\ }  We propose two fixed-confidence algorithms for pure exploration in MNL-bandit.  The first one (Algorithm~\ref{alg:basic} in Section~\ref{sec:basic}) gives a pull complexity of $O\left(K^2 H_1 \ln\left({ \frac{N}{\delta} \ln (KH_1)}\right)\right)$ where $\delta$ is the confidence parameter.  We then modify the algorithm using a more efficient preference exploration procedure at each pull, and improve the asymptotic pull complexity to $O\left(K^2 H_2  \ln\left({ \frac{N}{\delta} \ln (KH_2)}\right)\right)$. The second algorithm is presented in Algorithm~\ref{alg:improve} in Section~\ref{sec:improve}.  

Both algorithms can be implemented efficiently: the time complexity of Algorithm~\ref{alg:basic} is bounded by $\tilde{O}(T + N^2)$ where $T$ is the pull complexity and `$\tilde{O}()$' hides some logarithmic factors. That of Algorithm~\ref{alg:improve} is bounded by $\tilde{O}(TN + N^2)$.\footnote{When we talk about {\em time complexity}, we only count the running time of the algorithm itself, and do not include the time for obtaining the pull results which depends on users' response time.}

As we shall discuss in Remark~\ref{rem:batch}, though having a larger pull complexity, Algorithm~\ref{alg:basic} still has the advantage that it better fits the {\em batched} model where we try to minimize the number of changes of the learning policy.

To complement our upper bounds, we prove that $\Omega(H_2/K^2)$ pulls is needed in order to identify the best assortment with probability at least $0.6$. This is presented in Section~\ref{sec:lb}.  Note that when $K$ is a constant, our upper and lower bounds match up to a logarithmic factor.


\subsection{Related Work.}
\label{sec:related}

Regret minimization in MNL-bandit was first studied by Rusmevichientong et al.~\cite{RSS10} in the setting of {\em dynamic assortment selection} under the MNL choice model.  Since then there have been a number of follow-ups that further improve the regret bound and/or remove some artificial assumptions~\cite{SZ13,DGT13,AAGZ16,AAGZ17,CW18}.  

As mentioned previously, the algorithms in \cite{RSS10,SZ13} also have a component of  identifying the best assortment.  In \cite{RSS10,SZ13}, the following ``assortment-level'' gap was introduced:
$$\Delta_{\tt asso} = \theta_\v - \max_{S \subseteq [N], \abs{S} \le K, S \neq S_\v} R(S, \v),$$ 
that is, the difference between the reward of the best assortment and that of the second-best assortment. The pull complexity of the component in \cite{SZ13} for finding the best assortment can be written as $\tilde{O}(KN/\Delta_{\tt asso}^2)$, where`$\tilde{O}()$' hides some logarithmic factors. This result is better than that in \cite{RSS10}.  There are two critical differences between these results and our results: 
(1) More critically, in \cite{RSS10,SZ13} it is assumed that the ``assortment-level'' gap $\Delta_{\tt asso}$ is known at the beginning, which is {\em not} practical since the fact that the preference vector is unknown at the beginning is a key feature of the MNL-bandit problem. 
(2) Our reward gaps $\Delta_i$ are defined at the ``item-level''; the instance complexity $H_1$ (or $H_2$) is defined as the sum of the inverse square of these item-level gaps and the total pull complexity is $\tilde{O}(K^2 H_1)$ (or $\tilde{O}(K^2 H_2)$). Though the two complexities are not directly comparable, the following example shows that for certain input instances, our pull complexity is significantly better. 
\begin{example}
$K = 1, r_1 = \ldots = r_N = 1, v_1 = 1, v_2 = 1-1/\sqrt{N}, v_3 = \ldots = v_N = 1/\sqrt{N}$.  We have $KN/\Delta_{asso} = \Omega(N^2)$, while $K^2 H_1 = O(N)$.  Thus, the pull complexity of the algorithm in \cite{RSS10} is {\em quadratic} of ours (up to logarithmic factors).
\end{example}

The best assortment identification problem has also been studied in the static setting (e.g., \cite{TR04,AVJ14}), where the user preference vector $\v$ is known as a priori and our task is to conduct an offline computation to find the assortment that maximizes the expected reward.  We refer readers to \cite{KF07} for an overview of this setting.

Chen et al.\cite{CLM18} studied the problem of top-$k$ ranking under the MNL choice model (but without the ``no purchase'' option). Their problem is different from ours: They aimed to find the $k$ items in $[N]$ with the largest preference $v_i$ (instead of the advantage $\eta_i = (r_i - \theta_\v) v_i$). In some sense, their problem can be thought of as a special case of ours, where $r_1 = r_2 = \ldots = r_N$ (that is, the rewards of all items are the same); but in their model, there is {\em no} null item.  It seems difficult to extend their approach to our  setting. 
We would also like to mention the work on battling-bandits by Saha and Gopalan~\cite{SG18}, who considered the problem of using the MNL choice model as one of the natural models to draw a winner from a set of items.  But their problem settings and the notion of the optimal solution are again different from the problem we consider here. 

Pure exploration has been studied extensively in the model of  MAB~\cite{EMM02,MT04,ABM10,GGLB11,GGL12,KKS13,JMNB14,KCG16,GK16,Russo16,CLQ17}.
MNL-bandit can be viewed as an MAB-type model with $\sum_{j \in [K]}{N \choose j}$ items, each corresponding to an assortment $S \subseteq [N]$ with $\abs{S}\le K$. However, these items may ``intersect'' with each other since assortments may contain the same items. Due to such dependencies, the algorithms designed for pure exploration in the MAB model cannot be adopted to the MNL-bandit model.  
Audibert et al.~\cite{ABM10} designed an instance-sensitive algorithm for the pure exploration problem in the MAB model. The result in \cite{ABM10} was later improved by Karnin et al.~\cite{KKS13} and Chen et al.\cite{CLQ17}, and extended into the problem of identifying multiple items~\cite{BWV13,ZCL14,CCZZ17}. 

Finally, we note that recently, concurrent and independent of our work, Yang~\cite{Yang21} has also studied pure exploration in MNL-bandit. But the definition of instance complexity in \cite{Yang21} is again at the ``assortment-level'' (and thus the results are not directly comparable), and the algorithmic approaches in \cite{Yang21} are also different from ours.  The pull complexity of \cite{Yang21} can be written as $\tilde{O}(H_{yang})$ where $H_{yang} = \sum_{i \in [N]} \frac{1}{(\Delta'_i)^2}$, where $\Delta'_i$ is defined to be the difference between the best reward among assortments that include item $i$ and that among assortments that exclude item $i$.  The following example shows that for certain input instances, our pull complexity based on item-level gaps is better.
\begin{example}
$r_1 = \ldots = r_N = 1, v_1 = \ldots = v_K = 1, v_{K+1} = \ldots = v_N = \eps$.  For $\eps \in (0, 1/K)$ and $\omega(1) \le K \le o(N)$, we have $H_{yang} = \Theta(N K^4)$, while our $K^2 H_2 = \Theta(K^5 + N K^3) = o(H_{yang})$.  
\end{example}

\section{Preliminaries}
\label{sec:preliminary}
Before presenting our algorithms, we would like to introduce some tools in probability theory and give some basic properties of the MNL-bandit model.  Due to space constraints, we leave the tools in probability theory (including Hoeffding's inequality, concentration results for the sum of geometric random variables, etc.) to Appendix~\ref{app:tool}.


The following (folklore) observation gives an effective way to check whether the expected reward of $S$ with respect to $\v$ is at least $\theta$ for a given value $\theta$. The proof can be found in Appendix~\ref{app:proof-ob-reward}.

\begin{observation}
\label{ob:reward}
For any $\theta \in [0, 1]$, $R(S, \v) \ge \theta$ if and only if $\sum_{i \in S} (r_i - \theta) v_i \ge \theta$.
\end{observation}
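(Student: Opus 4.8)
The plan is simply to clear the (strictly positive) denominator in the definition of $R(S,\v)$ and rearrange, so that the equivalence falls out from a short chain of reversible algebraic steps. Write $D \triangleq 1 + \sum_{j \in S} v_j$. Since every $v_j \in (0,1]$ and the empty sum is $0$, we have $D \ge 1 > 0$ whether or not $S$ is empty. Because $D > 0$, multiplying through by $D$ is an equivalence, so $R(S,\v) \ge \theta$ holds if and only if $\sum_{i \in S} r_i v_i \ge \theta D$.

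Next I would expand the right-hand side as $\theta D = \theta + \theta \sum_{j \in S} v_j$ and subtract $\theta \sum_{i \in S} v_i$ from both sides. This turns the inequality into $\sum_{i \in S} r_i v_i - \theta \sum_{i \in S} v_i \ge \theta$, i.e.\ $\sum_{i \in S} (r_i - \theta) v_i \ge \theta$, which is exactly the claimed condition. Reading this chain of equivalences from bottom to top gives the converse implication, so the two statements are equivalent.

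There is no genuine obstacle here: the only facts used are $D > 0$ (which makes each step reversible) and ordinary arithmetic. The one thing worth a sentence of care is the degenerate case $S = \emptyset$, where both sides vanish ($R(\emptyset,\v) = 0$ and the empty sum is $0$), so the claim reads $0 \ge \theta \iff 0 \ge \theta$; and it is worth noting that the equivalence itself does not actually use $\theta \in [0,1]$, that hypothesis being relevant only to the applications of the observation elsewhere in the paper.
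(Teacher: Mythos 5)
Your proposal is correct and matches the paper's own argument, which likewise derives the equivalence directly from Definition~\ref{def:reward} by clearing the positive denominator $1 + \sum_{j \in S} v_j$ and rearranging. You simply spell out the reversible algebraic steps (and the $S = \emptyset$ edge case) in more detail than the paper does.
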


With Observation~\ref{ob:reward}, to check whether the maximum expected reward is at least $\theta$ for a given value $\theta$, we only need to check whether the expected reward of the particular set $S \subseteq [N]$ containing the up to $K$ items with the largest {\em positive} values $(r_i - \theta) v_i$ is at least $\theta$.  

To facilitate the future discussion we introduce the following definition.

\begin{definition}[$\Top(I, \v, \theta)$]
\label{def:top}
Given a set of items $I$ where the $i$-th item has reward $r_i$ and  preference $v_i$, and a value $\theta$, let $T$ be the set of $\min\{K, \abs{I}\}$ items with the largest values $(r_i - \theta)v_i$. Define
\(
\Top(I, \v, \theta) \triangleq T \setminus \{i \in I \mid (r_i - \theta) \le 0 \},
\)
where $\v$ stands for $(v_1, \ldots, v_{\abs{I}})$.
\end{definition}

The following lemma shows that $\Top(I, \v, \theta_\v)$ is exactly the best assortment.  Its proof can be found in Appendix~\ref{app:proof-lem-opt}.
\begin{lemma}\label{lem:opt}
	$\Top(I, \v, \theta_{\v}) = S_{\v}$.
\end{lemma}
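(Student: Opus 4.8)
The plan is to translate everything into the ``linearized'' quantity $f_\theta(S) := \sum_{i\in S}(r_i-\theta)v_i$ and then read off the best assortment from the additive structure of $f$. The first step is to record a slightly sharper form of Observation~\ref{ob:reward}: since $f_\theta(S)\ge\theta$ rearranges to $\tfrac{\sum_{i\in S}r_iv_i}{1+\sum_{i\in S}v_i}\ge\theta$, one actually has $R(S,\v)\ge\theta \iff f_\theta(S)\ge\theta$, and likewise with ``$=$'' and ``$\le$''. Specializing to $\theta=\theta_\v$ and writing $f:=f_{\theta_\v}$, optimality of $\theta_\v$ gives $f(S)\le\theta_\v$ for every $S$ with $\abs{S}\le K$, and $R(S,\v)=\theta_\v \iff f(S)=\theta_\v$; in other words, the best assortments are exactly the subsets $S$ with $\abs{S}\le K$ that maximize $f$ and attain the value $\theta_\v$.

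Next I would note that $T:=\Top([N],\v,\theta_\v)$ is, by construction, one of the maximizers of $f$ subject to $\abs{S}\le K$: $f$ is additive over items, $(r_i-\theta_\v)v_i>0 \iff r_i>\theta_\v$, so the maximum is achieved by taking every item of positive advantage when there are at most $K$ of them, and the $K$ items of largest advantage otherwise --- which is precisely the set produced by Definition~\ref{def:top}. Consequently $f(T)\ge f(S_\v)=\theta_\v$ (using $R(S_\v,\v)=\theta_\v$); combined with $f(T)\le\theta_\v$ this forces $f(T)=\theta_\v$, hence $R(T,\v)=\theta_\v$, so $T$ is a best assortment.

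It then remains to show that $T$ coincides with $S_\v$, i.e.\ that $T$ is the unique best assortment of minimum cardinality. Here I would first argue that any minimum-cardinality best assortment $S$ contains only items with $r_i>\theta_\v$: an item of negative advantage could be deleted, pushing $f$ strictly above $\theta_\v$, which is impossible, and an item of zero advantage could be deleted without changing $f$, contradicting minimality. Since $T$ consists of positive-advantage items by construction, both $S_\v$ and $T$ lie in $P:=\{i:r_i>\theta_\v\}$ and satisfy $f=\theta_\v=\max_{\abs{S}\le K}f(S)$. If $\abs{P}\le K$ then $T=P$, and since every item of $P$ contributes a strictly positive amount to $f$, no proper subset of $P$ attains $f(P)$, so $S_\v=P=T$. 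If $\abs{P}>K$ then $\abs{T}=\abs{S_\v}=K$ (a smaller subset of $P$ could be enlarged within $P$, strictly increasing $f$), and both must equal the set of the $K$ items of largest advantage; this set is unambiguous because in this regime $\abs{S_\v}=K$, so the standing assumption $\Delta_i\neq 0$ together with \eqref{eq:h-2} forces $\eta^{(K)}>\eta^{(K+1)}$. In every case $T=S_\v$.

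I expect the last paragraph to be the main obstacle: one has to be careful about ties at the advantage threshold, about why minimality of $S_\v$ pins down its cardinality, and about pinpointing exactly where the assumption $\Delta_i\neq 0$ enters. The manipulations through Observation~\ref{ob:reward} and the additivity argument are routine by comparison.
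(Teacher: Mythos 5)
Your argument is correct and follows essentially the same route as the paper: linearize via Observation~\ref{ob:reward} and use that $\Top(I,\v,\theta_\v)$ maximizes $\sum_{i\in S}(r_i-\theta_\v)v_i$ over feasible $S$, which is exactly the content of the paper's Observation~\ref{ob:top} and Claim~\ref{cla:top}. Your final paragraph --- pruning zero- and negative-advantage items, pinning down $\abs{S_\v}$ in each case, and invoking the standing assumption $\Delta_i\neq 0$ to rule out ties at the $K$-th advantage --- merely makes explicit the minimality and uniqueness step that the paper compresses into ``Lemma~\ref{lem:opt} follows immediately from this claim.''
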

Note that the set $\Top(I, \v, \theta_{\v})$ is unique by its definition. Therefore by Lemma~\ref{lem:opt} the set $S_\v$ is also uniquely defined.  


We next show a monotonicity property of the expected reward function $R(\cdot,\cdot)$. Given two vectors $\v, \w$ of the same dimension, we write $\v \preceq \w$ if $\forall i, v_i \le w_i$.  We comment that similar properties appeared in \cite{AAGZ16, AAGZ17}, but were formulated a bit differently from ours.  The proof of Lemma~\ref{lem:mono} can be found in Appendix~\ref{app:proof-lem-mono}.

\begin{lemma}
\label{lem:mono}
	If $\v \preceq \w$, then $(\theta_\v =) R(S_\v, \v) \le (\theta_\w =) R(S_\w, \w)$, and for any $S \subseteq I$ it holds that	\[R(S, \w) - R(S, \v) \le \sum_{i \in S} (w_i - v_i).\]
\end{lemma}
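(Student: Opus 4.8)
The plan is to prove the two claims separately, starting with the second (the additive bound on $R(S,\w) - R(S,\v)$ for a fixed $S$), and then derive the first (monotonicity of the optimal reward) from it together with a short argument about how the best assortment changes.

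For the second claim, fix $S \subseteq I$ and write $V_\v = 1 + \sum_{j \in S} v_j$ and $V_\w = 1 + \sum_{j \in S} w_j$, so that $R(S,\v) = \frac{1}{V_\v}\sum_{i \in S} r_i v_i$ and similarly for $\w$. Since $\v \preceq \w$ we have $V_\v \le V_\w$. The natural approach is a hybrid/telescoping argument: define $\v^{(0)} = \v$ and obtain $\v^{(k)}$ from $\v^{(k-1)}$ by raising the $k$-th coordinate (for $k \in S$) from $v_k$ to $w_k$, leaving the rest unchanged, so $\v^{(|S|)} = \w$ on the coordinates in $S$. It then suffices to show that each single-coordinate step increases $R(S,\cdot)$ by at most $w_k - v_k$. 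For a single step, with numerator $A = \sum_{i \in S} r_i v_i$ and denominator $D = V_\v$ before the step, raising coordinate $k$ by $\delta = w_k - v_k \ge 0$ changes the reward to $\frac{A + r_k \delta}{D + \delta}$, and the increase is $\frac{A + r_k\delta}{D+\delta} - \frac{A}{D} = \frac{\delta(r_k D - A)}{D(D+\delta)}$. Since $A = \sum_{i\in S} r_i v_i \ge 0$ and $r_k \le 1$ while $D \ge 1$, we get $r_k D - A \le D \le D + \delta$, hence the increase is at most $\frac{\delta(D+\delta)}{D(D+\delta)} = \frac{\delta}{D} \le \delta = w_k - v_k$ (using $D \ge 1$); it is also nonnegative since $r_k D - A$ could in principle be negative, but nonnegativity of the increase is not needed for this bound — only the upper bound is. Summing over the $|S|$ steps telescopes to $R(S,\w) - R(S,\v) \le \sum_{i \in S}(w_i - v_i)$.

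For the first claim, I would argue $\theta_\v = R(S_\v, \v) \le R(S_\v, \w) \le R(S_\w, \w) = \theta_\w$. The first inequality is the special case of the additive bound above (or even simpler: each single-coordinate step has a nonnegative increase, which follows once we note that if $r_k D - A < 0$ then actually the reward decreases — wait, that would break monotonicity, so I need to be careful here). The cleaner route for monotonicity is: the increase at a single step equals $\frac{\delta(r_k D - A)}{D(D+\delta)}$, and this can be negative; so the step-by-step argument does \emph{not} immediately give $R(S,\v) \le R(S,\w)$ for arbitrary fixed $S$. Indeed monotonicity of $R(S_\v,\cdot)$ need not hold for an arbitrary fixed assortment. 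So instead I would invoke Observation~\ref{ob:reward}: $R(S_\v,\v) = \theta_\v$ means $\sum_{i \in S_\v}(r_i - \theta_\v) v_i \ge \theta_\v$. I want to show $\theta_\w \ge \theta_\v$, equivalently (by Observation~\ref{ob:reward} applied with $\theta = \theta_\v$) that there exists an assortment $S$ with $|S| \le K$ and $\sum_{i\in S}(r_i - \theta_\v) w_i \ge \theta_\v$. Take $S = \{i \in S_\v : r_i > \theta_\v\}$; then $\sum_{i \in S}(r_i - \theta_\v) w_i \ge \sum_{i \in S}(r_i - \theta_\v) v_i = \sum_{i \in S_\v}(r_i-\theta_\v)v_i \ge \theta_\v$, where the first inequality uses $w_i \ge v_i$ and $r_i - \theta_\v > 0$ on $S$, and the equality drops terms with $r_i \le \theta_\v$ (which are $\le 0$). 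Hence by Observation~\ref{ob:reward}, $R(S,\w) \ge \theta_\v$, and since $|S| \le |S_\v| \le K$ we conclude $\theta_\w = \max_{|S'|\le K} R(S',\w) \ge \theta_\v$.

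The main obstacle is exactly the subtlety flagged above: one cannot simply claim $R(S,\v) \le R(S,\w)$ for an arbitrary fixed $S$, because raising a low-reward item's preference can lower the assortment's expected reward. The fix is to route the monotonicity of $\theta$ through Observation~\ref{ob:reward} and the comparison assortment $\{i \in S_\v : r_i > \theta_\v\}$ rather than through $S_\v$ itself, while the telescoping single-coordinate estimate is still the right tool for the additive $\sum_i (w_i - v_i)$ bound (where only the upper bound on each step's change — via $r_k D - A \le D$ and $D \ge 1$ — is needed, so the possible negativity of individual steps is harmless). I would also double-check the edge case $S = \emptyset$ (both sides zero) and note that the bound $\frac{\delta}{D} \le \delta$ crucially uses $D = 1 + \sum_{j\in S} v_j \ge 1$.
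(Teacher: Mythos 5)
Your proof is correct. For the monotonicity of $\theta$ you end up on essentially the paper's route: the paper writes $R(S_\w,\w) \ge R(S_\v,\w) \ge R(S_\v,\v)$, where the middle inequality is obtained exactly as you obtain yours — from $\sum_{i\in S_\v}(r_i-\theta_\v)w_i \ge \sum_{i\in S_\v}(r_i-\theta_\v)v_i \ge \theta_\v$ together with Observation~\ref{ob:reward}. The paper applies this directly to $S_\v$, implicitly using that every $i \in S_\v$ has $r_i > \theta_\v$ (a consequence of Lemma~\ref{lem:opt} and the definition of $\Top$); your restriction to $\{i \in S_\v : r_i > \theta_\v\}$ achieves the same thing without invoking that fact, and the subtlety you flag — that $R(S,\cdot)$ need \emph{not} be monotone in $\v$ for a fixed arbitrary $S$, so one cannot just claim $R(S_\v,\v)\le R(S_\v,\w)$ termwise — is exactly why the detour through Observation~\ref{ob:reward} is needed; you identified the right obstacle and the right fix. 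For the additive bound the paper is more direct: since the denominator only grows, $R(S, \w) - R(S, \v) \le \frac{\sum_{i \in S} r_i (w_i - v_i)}{1 + \sum_{i \in S}v_i} \le \sum_{i \in S} (w_i - v_i)$, using $r_i \le 1$ and the denominator being at least $1$. Your coordinate-by-coordinate telescoping argument is a genuinely different decomposition that reaches the same bound from the same two ingredients ($r_k \le 1$ and $D \ge 1$) in $\abs{S}$ steps; it buys nothing extra for this lemma, though it does isolate a per-item sensitivity estimate, and your observation that individual steps can have negative increments (so only the upper bound per step is used) is accurate.
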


The following is an immediate corollary of Lemma~\ref{lem:mono}.
\begin{corollary}
\label{cor:mono}
	If $\forall{i} : v_i \le w_i \le v_i + \frac{\epsilon}{K}$, then $\theta_{\v} \le \theta_{\w} \le \theta_{\v} + \epsilon$.
\end{corollary}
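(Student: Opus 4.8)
The plan is to derive both inequalities directly from Lemma~\ref{lem:mono}. From the hypothesis $v_i \le w_i$ for all $i$ we immediately get $\v \preceq \w$, so the first assertion of Lemma~\ref{lem:mono} yields $\theta_\v = R(S_\v, \v) \le R(S_\w, \w) = \theta_\w$, which is the lower bound.

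For the upper bound I would apply the second assertion of Lemma~\ref{lem:mono} with the specific choice $S = S_\w$. This gives
\[
R(S_\w, \w) - R(S_\w, \v) \le \sum_{i \in S_\w} (w_i - v_i) \le \abs{S_\w} \cdot \frac{\epsilon}{K} \le \epsilon,
\]
where the last step uses the capacity constraint $\abs{S_\w} \le K$ from Definition~\ref{def:Sv}. Hence $\theta_\w = R(S_\w, \w) \le R(S_\w, \v) + \epsilon$. Since $S_\v$ is, by definition, the assortment of size at most $K$ maximizing $R(\cdot, \v)$, we have $R(S_\w, \v) \le R(S_\v, \v) = \theta_\v$, and therefore $\theta_\w \le \theta_\v + \epsilon$, completing the argument.

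There is essentially no obstacle here; the only point worth highlighting is that the per-coordinate perturbation budget $\epsilon/K$ is exactly calibrated against the cardinality bound $\abs{S_\w} \le K$, so that the total perturbation of the reward over the (at most $K$) items in $S_\w$ telescopes to $\epsilon$. One should just be careful to evaluate the perturbed reward on $S_\w$ (not $S_\v$) so that the left-hand side of the inequality is $\theta_\w$ minus something comparable to $\theta_\v$.
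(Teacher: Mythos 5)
Your proof is correct and is exactly the intended derivation: the paper states the corollary as an immediate consequence of Lemma~\ref{lem:mono} without writing out the details, and your argument (monotonicity for the lower bound; the perturbation bound applied to $S_\w$ together with $\abs{S_\w}\le K$ and the optimality of $S_\v$ for the upper bound) is the standard way to fill them in.
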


\section{The Basic Algorithm}
\label{sec:basic}

In this section, we present our first algorithm for pure exploration in MNL-bandit. The main algorithm is described in Algorithm~\ref{alg:basic}, which calls $\Prune$ (Algorithm~\ref{alg:prune}) and $\Explore$ (Algorithm~\ref{alg:explore}) as subroutines. $\Explore$ describes a pull of the assortment consisting of a single item.

\begin{algorithm}[t]

\caption{$\Explore(i)$}\label{alg:explore}
	\KwIn{Item $i$.}
	\KwOut{$0/1$ (choose or not choose $i$).}
	Offer a singleton set $S_i \gets \{i\}$ and observe a feedback $a$\;
	\lIf{$a = 0$}{\Return $1$}{\Return $0$}
\end{algorithm}

\begin{algorithm}[t]

\caption{$\Prune(I, K, \a, \b)$}\label{alg:prune}
	\KwIn{a set of items $I = \{1, \ldots, N\}$, capacity parameter $K$, two vectors $\a = (a_1, \ldots, a_N), \b = (b_1, \ldots, b_N)$ such that for any $i \in [N]$ it holds that $a_i \le v_i \le b_i$, where $\v = (v_1, \ldots, v_N)$ is the (unknown) preference vector of the $N$ items.}
	\KwOut{a set of candidate items for constructing the best assortment.}
	$\theta_{\a} \gets \max\limits_{S \subseteq I : \abs{S} \le K} R(S, \a)$,
	$\theta_{\b} \gets \max\limits_{S \subseteq I:  \abs{S} \le K} R(S, \b)$\;
	$C \gets \emptyset$\;
	\ForEach{$i \in I$}{
		form a vector $\g = (g_1, \ldots, g_N)$ s.t.\ $g_j = a_j$ for $j \neq i$, and $g_i = b_i$\label{ln:b-1}\;
		\lIf{$\exists \theta \in [\theta_{\a}, \theta_{\b}]$ s.t.\ $i \in \Top(I, \g, \theta)$}{add $i$ to $C$\label{ln:b-2}}
	}
	\Return{$C$}
\end{algorithm}

\begin{algorithm}[t]
\caption{The Fixed Confidence Algorithm for MNL-Bandit}
\label{alg:basic}
	\KwIn{a set of items $I = \{1, \ldots, N\}$, a capacity parameter $K$, a confidence parameter $\delta$.}
	\KwOut{the best assortment.}
	$I_0 \gets I$\;
	set $\epsilon_{\tau}= 2^{-\tau-3}$ for $\tau \ge 0$\;
	set $T_{-1} \gets 0$ and $T_{\tau} \gets \left\lceil \frac{32}{\epsilon^2_\tau} \ln{\frac{16 N (\tau + 1)^2}{\delta}} \right\rceil$ for $\tau \ge 0$\label{ln:c-0}\;
	\For{$\tau = 0, 1, \dotsc$}{
		\lForEach{$i \in I_\tau$}{call $\Explore(i)$ for $(T_{\tau} - T_{\tau - 1})$ times} 
		let $x^{(\tau)}_i$ be the mean of the outputs of the $T_\tau$ calls of $\Explore(i)$\;
		\ForEach{$i \in I_{\tau}$}{set $v^{(\tau)}_i \gets \min\{\frac{1}{x^{(\tau)}_i} - 1, 1\}$, $a^{(\tau)}_i \gets \max\{v^{(\tau)}_i - \epsilon_{\tau}, 0\}$, and $b^{(\tau)}_i \gets \min\{v^{(\tau)}_i + \epsilon_{\tau}, 1\}$\label{ln:c-1}\;}
		let $\a^{(\tau)}$ be the vector containing the $\abs{I_\tau}$ estimated preferences $a_i^{(\tau)}$,  and $\b^{(\tau)}$ be the vector containing the $\abs{I_\tau}$ estimated preferences $b_i^{(\tau)}$\label{ln:c-15}\;
		$C \gets \Prune(I_{\tau}, \a^{(\tau)}, \b^{(\tau)})$\label{ln:c-18}\;
		\If{$\left(|C| \le K\right) \land \left(\bigwedge\limits_{i \in C} \left(r_i > R\left(C, \b^{(\tau)}\right)\right)\right)$\label{ln:c-2}}{
			\Return{$C$\label{ln:c-3}} \;
		}
		$I_{\tau + 1} \gets C$\;
	}
\end{algorithm}

Let us describe the Algorithm~\ref{alg:prune} and \ref{alg:basic} in more detail.  Algorithm~\ref{alg:basic} proceeds in rounds.  In round $\tau$, each ``surviving'' item in the set $I_\tau$ has been pulled by $T_\tau$ times in total.  We try to construct two vectors $\a$ and $\b$ based on the empirical means of the items in $I_\tau$ such that the (unknown) true preference vector $\v$ of $I_\tau$ is tightly sandwiched by $\a$ and $\b$ (Line~\ref{ln:c-1}-\ref{ln:c-15}).  We then feed $I_\tau$, $\a$, and $\b$ to the $\Prune$ subroutine which reduces the size of $I_\tau$ by removing items that have no chance to be included in the best assortment (Line~\ref{ln:c-18}).  Finally, we test whether the output of $\Prune$ is indeed the best assortment (Line~\ref{ln:c-2}). If not we proceed to the next round, otherwise we  return the solution.

Now we turn to the $\Prune$ subroutine (Algorithm~\ref{alg:prune}), which is the most interesting part of the algorithm.  Recall that the two vectors $\a$ and $\b$ are constructed such that $\a \preceq \v \preceq \b$.  We try to prune items in $I$ by the following test: For each $i \in I$, we form another vector $\g$ such that $\g = \a$ in all coordinates except the $i$-th coordinate where $g_i = b_i$ (Line~\ref{ln:b-1}). We then check whether there exists a value $\theta \in [\theta_\a, \theta_\b]$ such that $i \in \Top(I, \g, \theta)$, where $\theta_\a, \theta_\b$ are the maximum expected rewards with $\a$ and $\b$ as the item preference vectors respectively; if the answer is Yes then item $i$ survives, otherwise it is pruned (Line~\ref{ln:b-2}). Note that our test is fairly conservative: we try to put item $i$ in a more favorable position by using the upper bound $b_i$ as its preference, while for other items we use the lower bounds $a_j$ as their preferences.  Such a conservative pruning step makes sure that the output $C$ of the $\Prune$ subroutine is always a superset of the best assortment $S_\v$.

\begin{theorem}
\label{thm:basic}
For any confidence parameter $\delta > 0$, Algorithm~\ref{alg:basic} returns the best assortment with probability $(1 - \delta)$ using at most $\Gamma = O\left(K^2 H_1 \ln\left({ \frac{N}{\delta} \ln (KH_1)}\right)\right)$ pulls.  The running time  of Algorithm~\ref{alg:basic} is bounded by $O\left(N\Gamma + N^2 \ln N \ln\left(\frac{K}{\min_{i \in I} \Delta_i}\right) \right)$.
\end{theorem}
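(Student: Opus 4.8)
The plan is to establish correctness and the pull/time complexity bounds separately, with the bulk of the work going into a careful accounting of how quickly the surviving set $I_\tau$ shrinks. First I would set up the ``good event'' under which all empirical estimates are accurate: for each round $\tau$ and each surviving item $i \in I_\tau$, the estimate $x_i^{(\tau)}$ is within $\epsilon_\tau$-related accuracy of the true $p_i(\{i\}) = v_i/(1+v_i)$, so that (after the transformation in Line~\ref{ln:c-1}) we get $a_i^{(\tau)} \le v_i \le b_i^{(\tau)}$ with $b_i^{(\tau)} - a_i^{(\tau)} \le 2\epsilon_\tau$. Since $\Explore(i)$ returns a Bernoulli sample, Hoeffding's inequality with $T_\tau \ge \frac{32}{\epsilon_\tau^2}\ln\frac{16N(\tau+1)^2}{\delta}$ pulls gives a failure probability at most $\frac{\delta}{8N(\tau+1)^2}$ per item per round; a union bound over all items and all rounds $\tau \ge 0$ yields total failure probability at most $\delta$ (using $\sum_{\tau \ge 0} (\tau+1)^{-2} \le 2$ and being slightly careful with constants). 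Condition on this good event for the rest of the argument.

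Next I would argue correctness given the good event. Two things must be shown: (i) the best assortment $S_\v$ never gets pruned, i.e., $S_\v \subseteq C$ after every call to $\Prune$; and (ii) when the stopping test in Line~\ref{ln:c-2} fires, the returned $C$ actually equals $S_\v$. For (i), fix $i \in S_\v$. By Lemma~\ref{lem:opt}, $i \in \Top(I,\v,\theta_\v)$. Using the vector $\g$ with $g_i = b_i \ge v_i$ and $g_j = a_j \le v_j$ for $j\neq i$ only makes item $i$ look (weakly) more favorable relative to the others in the $(r_j-\theta)v_j$ ordering, and by Corollary~\ref{cor:mono}/Lemma~\ref{lem:mono} the true optimum $\theta_\v$ lies in $[\theta_\a,\theta_\b]$; so taking $\theta = \theta_\v$ witnesses $i \in \Top(I,\g,\theta)$, hence $i$ survives. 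For (ii), if $|C|\le K$ and $r_i > R(C,\b^{(\tau)})$ for all $i \in C$, then by Observation~\ref{ob:reward} and monotonicity (Lemma~\ref{lem:mono}) applied with $\b^{(\tau)} \succeq \v$, the set $C$ is already ``self-consistent'' as a top set at its own reward value; combined with $S_\v \subseteq C$ from (i) and the characterization in Lemma~\ref{lem:opt}, one deduces $C = S_\v$. This step needs the extra $\min_{j\in S_\v}\{r_j - \theta_\v\}$ term in the gap definition \eqref{eq:h-2} to guarantee the test eventually passes — that is exactly the technical subtlety flagged in Remark~\ref{rem:extra}.

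For the pull complexity, the key lemma is: on the good event, once $\epsilon_\tau \le c\,\Delta_i / K$ for a suitable constant $c$, item $i$ is pruned (if $i\notin S_\v$) or the stopping test has fired (if $i \in S_\v$). The reasoning: the $\Prune$ test for $i\notin S_\v$ compares $i$'s most optimistic advantage $(r_i - \theta)b_i$ against the $K$-th largest among the others' pessimistic advantages $(r_j-\theta)a_j$, uniformly over $\theta\in[\theta_\a,\theta_\b]$; each of these advantages is perturbed from its true value by $O(\epsilon_\tau)$ per coordinate, and the interval $[\theta_\a,\theta_\b]$ has width $O(\epsilon_\tau)$ by Corollary~\ref{cor:mono}, so the total distortion in the comparison is $O(K\epsilon_\tau)$ — once this is below $\Delta_i$, the true ordering is certified and $i$ drops out. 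Hence item $i$ survives only while $\epsilon_\tau = \Omega(\Delta_i/K)$, i.e., for at most $O(\log(K/\Delta_i))$ rounds, and in its last surviving round it has been pulled $T_\tau = O\!\big(\frac{K^2}{\Delta_i^2}\ln(\frac{N}{\delta}\ln\frac{K}{\Delta_i})\big)$ times. Summing over $i\in[N]$ and noting $\ln\frac{K}{\Delta_i} = O(\ln(KH_1))$ gives the claimed $\Gamma = O(K^2 H_1 \ln(\frac{N}{\delta}\ln(KH_1)))$; the geometric growth of $T_\tau$ means the total is dominated (up to a constant) by each item's final round. For the running time: there are $O(\log(K/\min_i\Delta_i))$ rounds; each round does $O(|I_\tau|\cdot(T_\tau - T_{\tau-1}))$ work for the pulls (total $O(N\Gamma)$ across all rounds since $\sum_\tau (T_\tau-T_{\tau-1}) = T_{\text{final}}$ and $|I_\tau|\le N$), plus the cost of $\Prune$: for each of $N$ choices of $i$ one must decide whether some $\theta$ in an interval puts $i$ in the top-$K$, which is a one-dimensional threshold search over $O(N)$ breakpoints sorted in $O(N\log N)$ time, giving $O(N^2\log N)$ per round and $O(N^2\log N\log(K/\min_i\Delta_i))$ overall; the two $\max$/$R$ computations in the stopping test are lower order.

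The main obstacle I expect is the tight quantitative analysis of the $\Prune$ subroutine — specifically, showing that the uniform-over-$\theta\in[\theta_\a,\theta_\b]$ top-$K$ test is simultaneously (a) conservative enough never to drop $S_\v$ and (b) aggressive enough to drop every non-optimal $i$ as soon as $\epsilon_\tau = O(\Delta_i/K)$, with the right constant so that the $K^2$ (not $K^3$ or worse) factor comes out. Handling the $|S_\v| < K$ case cleanly (where the relevant gap is $-\eta_i$ and the $\max_i 1/\Delta_i^2$ / $\min_j\{r_j-\theta_\v\}$ terms enter) is the fiddly part of that argument.
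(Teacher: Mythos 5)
Your proposal follows essentially the same route as the paper's own proof: the same good event established by Hoeffding plus a union bound over items and rounds, the same two-part correctness argument (the conservative $\g$-based test in \Prune{} never drops $S_\v$ because $\theta_\v\in[\theta_\a,\theta_\b]$ witnesses membership, and the stopping test certifies $r_i>\theta_\v$ for all $i\in C$), the same key quantitative lemma that an item with gap $\Delta_i$ is eliminated once $\epsilon_\tau = O(\Delta_i/K)$ because the per-coordinate and $[\theta_\a,\theta_\b]$-width distortions total $O(K\epsilon_\tau)$ (the paper's Lemma~\ref{lem:prune-2} with its two cases on $\theta_\v - r_i$), the same $\sum_i T_{\tau(i)}$ accounting, and the same sweep-line implementation of \Prune. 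The only detail worth tightening is the termination step, which the paper resolves exactly as you anticipate via the extra $\min_{j\in S_\v}\{r_j-\theta_\v\}$ term in $\bar{\Delta}$ (inequality~(\ref{eq:g-2})).
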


In the rest of this section we prove Theorem~\ref{thm:basic}.  

\vspace{2mm}
\noindent{\bf Correctness.} We start by introducing the following event which we will condition on in the rest of the proof.  The event states that in any round $\tau$, the estimated preference $v^{(\tau)}_i$ for each item $i$ (computed at Line~\ref{ln:c-1} of Algorithm~\ref{alg:basic}) is at most $\epsilon_{\tau} = 2^{-\tau-3}$ away from the true preference $v_i$.
\begin{equation*}
\label{eq:E}
	\E_1 \triangleq \{\forall {\tau \ge 0}, \forall{i \in I_{\tau}} : \abs{v^{(\tau)}_i - v_i} < \epsilon_{\tau}\}.
\end{equation*}

The proof of the following lemma can be found in Appendix~\ref{app:proof-lem-E1}.  This lemma states that event $\E_1$ holds with high probability. 
\begin{lemma}
\label{lem:E1}
$\Pr[\E_1] \ge 1 - \delta$.
\end{lemma}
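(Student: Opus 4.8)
The plan is to reduce $\E_1$ to a union of Hoeffding-type tail events, one for each (round, item) pair, and then sum the failure probabilities using the convergence of $\sum_{\tau}(\tau+1)^{-2}$. First I would pin down what a call to $\Explore(i)$ measures: offering the singleton $\{i\}$, the user picks the null item with probability $p_0(\{i\}) = v_0/(v_0+v_i) = 1/(1+v_i)$, so $\Explore(i)$ outputs $1$ with probability $y_i := 1/(1+v_i)$, and since $v_i \in (0,1]$ and $v_0 = 1$ we have $y_i \in [\tfrac12, 1)$. Hence $x^{(\tau)}_i$ is the empirical mean of $T_\tau$ i.i.d.\ $\mathrm{Bernoulli}(y_i)$ draws, and $v^{(\tau)}_i = \min\{1/x^{(\tau)}_i - 1, 1\}$ is a clipped plug-in estimate of $v_i = 1/y_i - 1$. (To keep the union bound below clean I would attach to every item $i \in [N]$ an infinite i.i.d.\ sequence of $\Explore(i)$ outcomes and let $x^{(\tau)}_i$ be the mean of its first $T_\tau$ terms; for $i \in I_\tau$ this coincides with the quantity computed by the algorithm.)

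Second, I would establish the deterministic implication: if $\abs{x^{(\tau)}_i - y_i} < \eps_\tau / 8$ then $\abs{v^{(\tau)}_i - v_i} < \eps_\tau$. Since $\eps_\tau = 2^{-\tau-3} \le \tfrac18$, the hypothesis forces $x^{(\tau)}_i > y_i - \tfrac1{64} \ge \tfrac14$, hence $x^{(\tau)}_i y_i > \tfrac18$ and $\abs{1/x^{(\tau)}_i - 1/y_i} = \abs{x^{(\tau)}_i - y_i}/(x^{(\tau)}_i y_i) < 8\cdot(\eps_\tau/8) = \eps_\tau$; a one-line case check (the clip is active only when $1/x^{(\tau)}_i - 1 > 1 \ge v_i$, in which case $1 - v_i < 1/x^{(\tau)}_i - 1 - v_i < \eps_\tau$) shows the clip at $1$ can only decrease the error because $v_i \le 1$. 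Consequently the complement of $\E_1$ is contained in $\bigcup_{\tau \ge 0}\bigcup_{i \in [N]} \{\abs{x^{(\tau)}_i - y_i} \ge \eps_\tau/8\}$.

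Third, I would invoke Hoeffding's inequality (Appendix~\ref{app:tool}) on each event: $\Pr[\abs{x^{(\tau)}_i - y_i} \ge \eps_\tau/8] \le 2\exp(-2T_\tau(\eps_\tau/8)^2) = 2\exp(-T_\tau\eps_\tau^2/32)$, and the bound $T_\tau \ge \frac{32}{\eps_\tau^2}\ln\frac{16N(\tau+1)^2}{\delta}$ makes this at most $\frac{\delta}{8N(\tau+1)^2}$. Union-bounding over the $N$ items and all rounds, $\Pr[\neg\E_1] \le \sum_{\tau\ge 0} N\cdot\frac{\delta}{8N(\tau+1)^2} = \frac{\delta}{8}\sum_{\tau\ge0}\frac{1}{(\tau+1)^2} = \frac{\pi^2}{48}\delta < \delta$, which is the claim.

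I expect no serious obstacle here. The only points needing a little care are that the transfer from the additive accuracy of $x^{(\tau)}_i$ to that of $v^{(\tau)}_i$ uses $v_0 = 1$ in an essential way — it is what guarantees $y_i \ge \tfrac12$, so that $t \mapsto 1/t - 1$ has bounded derivative on the relevant window — and that the constant $32$ together with the $(\tau+1)^{-2}$ weight in the definition of $T_\tau$ are exactly matched to the slack $\eps_\tau/8$ used above and to the summability of the union bound. Note we never need independence across rounds or across items (the samples are reused from round to round); only the marginal tail bounds enter.
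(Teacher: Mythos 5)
Your proof is correct and follows essentially the same route as the paper's: Hoeffding on the empirical mean $x^{(\tau)}_i$ of the Bernoulli$(1/(1+v_i))$ outcomes with slack $\eps_\tau/8$, the choice of $T_\tau$ making each failure probability at most $\delta/(8N(\tau+1)^2)$, a union bound over rounds and items, and the transfer to $|v^{(\tau)}_i - v_i| < \eps_\tau$ via the bounded derivative of $t \mapsto 1/t - 1$ on the window guaranteed by $v_0 = 1$. The only (minor, welcome) additions are your explicit handling of the clip at $1$ in $v^{(\tau)}_i = \min\{1/x^{(\tau)}_i - 1, 1\}$ and the coupling device that makes the union bound over the random sets $I_\tau$ clean, both of which the paper passes over silently.
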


It is easy to see from Line~\ref{ln:c-1} of Algorithm~\ref{alg:basic} that conditioned on $\E_1$, we have
\begin{equation}
\label{eq:c-1}
	\forall \tau \ge 0: \a^{(\tau)} \preceq \v^{(\tau)} \preceq \b^{(\tau)},
\end{equation}
where $\v^{(\tau)}$ is the preference vector of items in $I_\tau$.

The following lemma shows that if (\ref{eq:c-1}) holds, then the $\Prune$ subroutine (Algorithm~\ref{alg:prune}) always produces a set of candidate items $C$ which is a superset of the best assortment.

\begin{lemma}
\label{lem:prune}
If the preference vector $\v$ of $I$ satisfies $\a \preceq \v \preceq \b$, then $\Prune(I, K, \a, \b)$ (Algorithm~\ref{alg:prune}) returns a set $C$ such that $S_\v \subseteq C$.
\end{lemma}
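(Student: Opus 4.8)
The plan is to show that every item $i \in S_\v$ survives the pruning test, i.e., that there exists $\theta \in [\theta_\a, \theta_\b]$ with $i \in \Top(I, \g, \theta)$, where $\g$ agrees with $\a$ except that $g_i = b_i$. The natural candidate is $\theta = \theta_\v$: since $\a \preceq \v \preceq \b$, Lemma~\ref{lem:mono} gives $\theta_\a \le \theta_\v \le \theta_\b$, so $\theta_\v$ lies in the required interval. It then remains to prove that $i \in \Top(I, \g, \theta_\v)$ for each $i \in S_\v$.

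The key observation is a componentwise comparison between $\g$ and $\v$. For the distinguished coordinate $i$ we have $g_i = b_i \ge v_i$, and for every other coordinate $j \ne i$ we have $g_j = a_j \le v_j$. Now fix $i \in S_\v = \Top(I, \v, \theta_\v)$ (using Lemma~\ref{lem:opt}). First, since $i \in S_\v$, Lemma~\ref{lem:opt} together with the definition of $\Top$ forces $(r_i - \theta_\v) > 0$, so $i$ is not removed by the sign filter in the definition of $\Top(I, \g, \theta_\v)$. Second, I must check that $i$ is among the top $\min\{K, |I|\}$ items ranked by the key $(r_j - \theta_\v) g_j$. For the key of item $i$ itself, $(r_i - \theta_\v) g_i = (r_i - \theta_\v) b_i \ge (r_i - \theta_\v) v_i = \eta_i$, because $r_i - \theta_\v > 0$ and $b_i \ge v_i$; so moving to $\g$ only raises $i$'s score relative to its score under $\v$. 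For every other item $j \ne i$, its key satisfies $(r_j - \theta_\v) g_j = (r_j - \theta_\v) a_j$; when $r_j - \theta_\v \ge 0$ this is $\le (r_j - \theta_\v) v_j = \eta_j$ since $a_j \le v_j$, and when $r_j - \theta_\v < 0$ the key is anyway nonpositive, hence no larger than $i$'s (strictly positive) key. Thus under $\g$, the score of $i$ weakly increases and the score of every competitor weakly decreases (or stays nonpositive), so $i$, which was already in the top-$K$ under $\v$, remains in the top-$K$ under $\g$. Therefore $i \in \Top(I, \g, \theta_\v)$, the test at Line~\ref{ln:b-2} passes, and $i$ is added to $C$. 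Since this holds for every $i \in S_\v$, we conclude $S_\v \subseteq C$.

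The only subtlety — and the step I expect to require the most care — is the tie-breaking in the definition of $\Top$ and of $S_\v$: $\Top$ picks "the $\min\{K,|I|\}$ items with the largest values", and $S_\v$ is defined as the smallest-cardinality maximizer. When $|S_\v| < K$, passing from $\v$ to $\g$ could in principle let $i$'s (unchanged-or-larger) score tie with or be overtaken by some item $j \notin S_\v$ whose score also sits near the boundary; I need the inequalities above to be arranged so that $i$ still makes the cut. The clean way to handle this is to note that if $|S_\v| < K$ then $\Top(I,\v,\theta_\v)$ already took every item with $(r_j-\theta_\v)v_j>0$ and $r_j > \theta_v$ (there were fewer than $K$ of them), so any $j$ that could displace $i$ would itself have to have a positive key under $\v$, contradicting $j \notin S_\v$; hence no displacement occurs. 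When $|S_\v| = K$, $i$ is one of the $K$ largest $\eta$-values and its score only goes up while others' go down, so it stays. In both cases $i$ survives, completing the argument.
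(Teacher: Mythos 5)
Your proof is correct and follows essentially the same route as the paper's: both fix $\theta=\theta_\v$ (which lies in $[\theta_\a,\theta_\b]$ by Lemma~\ref{lem:mono}) and observe that replacing $\v$ by $\g$ weakly raises item $i$'s key $(r_i-\theta_\v)g_i$ while pushing every competitor's key down to at most $\max\{(r_j-\theta_\v)v_j,0\}$, so $i\in\Top(I,\v,\theta_\v)$ implies $i\in\Top(I,\g,\theta_\v)$. Your extra discussion of the $\abs{S_\v}<K$ case and tie-breaking is a reasonable elaboration of a step the paper leaves implicit, and does not change the argument.
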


\begin{proof}
First, if $\a \preceq \v \preceq \b$, then by Lemma~\ref{lem:mono} we have $\theta_\v \in [\theta_\a, \theta_\b]$.  

Consider any item $i \in S_{\v}$, by the construction of $\g$ (Line~\ref{ln:b-1} of Algorithm~\ref{alg:prune}) we have for every $j \in I$: 
\begin{itemize}
\item if $j \neq i$, then $(r_j - \theta_{\v})g_j \le \max\{(r_j - \theta_{\v})v_j, 0\}$;

\item if $j = i$, then $(r_j - \theta_{\v})g_j \ge (r_j - \theta_{\v})v_j$.
\end{itemize}
By these two facts and the definition of $\Top(I, \v, \theta_{\v})$, we know that if $i \in \Top(I, \v, \theta_{\v})$, then $i \in \Top(I, \g, \theta_{\v})$.  Therefore for the particular value $\theta = \theta_\v  \in [\theta_\a, \theta_\b]$ we have $i \in \Top(I, \g, \theta)$, and consequently $i$ will be added to the candidate set $C$ at Line~\ref{ln:b-2}, implying that $S_\v \subseteq C$.
\end{proof}

Now suppose Algorithm~\ref{alg:basic} stops after round $\tau$ and outputs a set $C \supseteq S_\v$ of size at most $K$ (Line~\ref{ln:c-2}-\ref{ln:c-3}), then for any $i \in C$, we have $r_i > \theta_\b$. By Lemma~\ref{lem:mono} we also have $\theta_\b \ge \theta_\v$ (since $\v \preceq \b$). We thus have $r_i > \theta_\v$. Consequently, it holds that for every $i \in C$, $(r_i - \theta_\v) > 0$.  We thus have $C = S_\v$.

Up to this point we have shown that conditioned on $\E_1$, if Algorithm~\ref{alg:basic} stops, then it outputs the best assortment $S_\v$. We next bound the number of pulls the algorithm uses.

\vspace{2mm}
\noindent{\bf Pull Complexity.} We again conditioned on event $\E_1$. The next lemma essentially states that an item $i \in I \backslash S_\v$ will be pruned if its reward gap $\Delta_i$ is much larger than $K$ times its preference estimation error $\max\{b_i - v_i, v_i - a_i\}$.  


\begin{lemma}
\label{lem:prune-2}
In $\Prune(I, K, \a, \b)$ (Algorithm~\ref{alg:prune}), if $\a \preceq \v \preceq \b$, and $\forall i \in I: \max\{b_i - v_i, v_i - a_i\} \le \eps/K$ for any $\eps \in (0, 1)$, then any item $i \in I \backslash S_\v$ satisfying $\Delta_i > 8 \epsilon$ will not be added to set $C$.
\end{lemma}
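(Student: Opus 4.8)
The plan is to show that for any $i \in I \setminus S_\v$ with $\Delta_i > 8\eps$, and for every candidate threshold $\theta \in [\theta_\a, \theta_\b]$, we have $i \notin \Top(I, \g, \theta)$, where $\g$ is the vector with $g_i = b_i$ and $g_j = a_j$ for $j \neq i$. Since this is exactly the test on Line~\ref{ln:b-2} of Algorithm~\ref{alg:prune}, failing it for all such $\theta$ means $i$ is never added to $C$. The key quantitative input is that $\a \preceq \v \preceq \b$ together with the $\eps/K$ error bound forces $\theta_\a,\theta_\b$ to be close to $\theta_\v$: by Corollary~\ref{cor:mono} (applied with the pair $\a,\b$, which differ coordinatewise by at most $2\eps/K$), we get $\theta_\b - \theta_\a \le 2\eps$, and since $\theta_\v \in [\theta_\a,\theta_\b]$ by Lemma~\ref{lem:mono}, every $\theta \in [\theta_\a,\theta_\b]$ satisfies $|\theta - \theta_\v| \le 2\eps$.

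Next I would compare the ``advantage under $\g$ at threshold $\theta$'', namely $(r_j - \theta) g_j$, to the true advantage $\eta_j = (r_j - \theta_\v) v_j$. For $j = i$ we have $g_i = b_i \le v_i + \eps/K$, and $|\,(r_i-\theta) b_i - (r_i - \theta_\v) v_i\,|$ is bounded by $|r_i - \theta|\,(b_i - v_i) + |\theta - \theta_\v|\, v_i \le \eps/K + 2\eps \le 3\eps$ (using $r_i,\,v_i,\, |r_i-\theta| \le 1$ and $K \ge 1$); so the $\g$-advantage of $i$ is at most $\eta_i + 3\eps$. For a generic item $j$ with $g_j = a_j \le v_j$, I want a lower bound on how large its $\g$-advantage can be forced to stay; here one must handle the two cases of Definition~\ref{def:gap}. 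If $\abs{S_\v} = K$, then $\Delta_i = \eta^{(K)} - \eta_i$, and I compare $i$ against the $K$ items realizing $\eta^{(1)},\dots,\eta^{(K)}$ (all in $S_\v$): for each such item $\ell$, its $\g$-advantage $(r_\ell - \theta) a_\ell$ is at least $\eta_\ell - 3\eps$ by the same perturbation estimate (now $a_\ell \ge v_\ell - \eps/K$), and also is positive since $\ell \in S_\v$ implies $r_\ell > \theta_\v$ with room to spare once $\bar\Delta > 8\eps$. Then all $K$ of these items have $\g$-advantage at least $\eta^{(K)} - 3\eps = \eta_i + \Delta_i - 3\eps > \eta_i + 8\eps - 3\eps > \eta_i + 3\eps$, which strictly exceeds the $\g$-advantage of $i$; hence $i$ is not among the top $K$ and $i \notin \Top(I,\g,\theta)$. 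If $\abs{S_\v} < K$, then $\Delta_i = -\eta_i$, i.e.\ $\eta_i < -8\eps$, so $(r_i - \theta_\v) v_i < -8\eps$, and since $|(r_i-\theta)b_i - (r_i-\theta_\v)v_i| \le 3\eps$ the $\g$-advantage of $i$ is at most $-8\eps + 3\eps = -5\eps < 0$; by the definition of $\Top$ (which discards items with $r_i - \theta \le 0$ — and here $(r_i-\theta)b_i < 0$ with $b_i > 0$ forces $r_i - \theta < 0$), item $i$ is excluded. Since this holds for every $\theta \in [\theta_\a,\theta_\b]$, item $i$ is never added to $C$.

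The routine but slightly delicate points are: (i) keeping track that the constant in ``$\Delta_i > 8\eps$'' is exactly what makes the strict inequality $\eta_i + \Delta_i - 3\eps > \eta_i + 3\eps$ go through with the chosen $3\eps$ perturbation bound on each side; (ii) in the $\abs{S_\v}=K$ case, making sure the $K$ comparison items are genuinely eligible for $\Top$ at threshold $\theta$, i.e.\ their $\g$-advantage is positive — this is where the extra term $\min_{j \in S_\v}\{r_j - \theta_\v\}$ in $\bar\Delta$ (Definition~\ref{def:gap}) does not directly enter for $i \notin S_\v$, but the bound $r_\ell - \theta_\v > 0$ for $\ell \in S_\v$ combined with $|\theta - \theta_\v| \le 2\eps$ and Observation~\ref{ob:reward}/Lemma~\ref{lem:opt} suffices. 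The main obstacle is therefore bookkeeping the perturbation chain $\theta_\a \le \theta_\v \le \theta_\b$ and the coordinatewise errors simultaneously, rather than any conceptual difficulty; once the ``$\pm 3\eps$ per advantage'' estimate is established, the rest is a short counting argument.
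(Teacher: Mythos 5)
Your proof is correct and follows essentially the same route as the paper's: localize every admissible $\theta$ to an $O(\eps)$-neighborhood of $\theta_\v$ via Corollary~\ref{cor:mono}, bound the perturbation of each advantage $(r_j-\theta)g_j$ away from $\eta_j$ by $O(\eps)$, and then either beat item $i$ by all $K$ members of $S_\v$ (when $\abs{S_\v}=K$) or force $(r_i-\theta)g_i<0$ so that the exclusion clause of $\Top$ applies. The only differences are organizational (you split on $\abs{S_\v}=K$ versus $\abs{S_\v}<K$, whereas the paper splits on whether $\theta_\v - r_i > 8\eps$, and your uniform $\pm 3\eps$ estimate replaces the paper's inline chain ending at $-\Delta_i+4\eps$), and your constants check out.
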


\begin{proof}
	By Corollary~\ref{cor:mono}, if $\a \preceq \v \preceq \b$, and $\forall i \in I: \max\{b_i - v_i, v_i - a_i\} \le \eps/K$, then we have 
	\begin{equation}
	\label{eq:e-2}
	\theta_{\v} - \epsilon \le \theta_{\a} \le \theta_{\v} \le \theta_{\b} \le \theta_{\v} + \epsilon.
	\end{equation}
	Consider any item $i \in I \backslash S_\v$ with $\Delta_i > 8\eps$.  We analyze in two cases.
	
	{\bf Case 1:}  $\theta_\v - r_i > 8 \eps$.  By (\ref{eq:e-2}) we have  $\theta_\a - r_i > 7 \eps$. Therefore, for any $\theta \in [\theta_\a, \theta_\b]$ we have $r_i < \theta_\a \le \theta$, and consequently $i \not\in \Top(I, \g, \theta)$ for any $\theta \in [\theta_\a, \theta_\b]$ by the definition of $\Top()$.
	
	{\bf Case 2:}  $\theta_\v - r_i \le 8 \eps$.  First, note that if $\abs{S_\v} < K$, then we have 
	\begin{equation*}
	\Delta_i = -(r_i - \theta_\v) v_i = (\theta_\v - r_i) v_i \le \theta_\v - r_i \le 8\eps,
	\end{equation*} 
	contradicting our assumption that $\Delta_i > 8\eps$.   We thus focus on the case that $\abs{S_\v} = K$. We analyze two subcases. 
	\begin{enumerate}
		\item {\em $\theta \in (r_i, 1]$.} In this case, by the definition of $\Top()$ and the fact that $r_i - \theta < 0$, we have $i \not\in \Top(I, \g, \theta)$.
		
		\item {\em $\theta \in [\theta_\a, \theta_\b] \cap [0, r_i]$.}  For any $j \in S_\v$, we have
		\begin{eqnarray*}
			&& (r_i - \theta)g_i - (r_j - \theta)g_j \\
			& = & (r_i - \theta)b_i - (r_j - \theta)a_j \\
			& \le & (r_i - \theta)(v_i + \eps) - (r_j - \theta)a_j \quad (\text{since $r_i \ge \theta$})\\
			& \le & (r_i - \theta)v_i - (r_j - \theta)a_j + \eps \\
			& \le & (r_i - \theta_{\v})v_i - (r_j - \theta_{\v})a_j + (1 + a_j + v_i)\eps \quad (\text{by (\ref{eq:e-2})}) \\
			& \le & (r_i - \theta_{\v})v_i - (r_j - \theta_{\v})(v_j - \eps) + 3\eps \quad (\text{since $r_j > \theta_{\v}$})\\
			& \le & (r_i - \theta_{\v})v_i - (r_j - \theta_{\v})v_j + 4\eps \\
			& \le & -\Delta_i + 4\eps \\
			& < & -4\epsilon.	\quad (\text{by the assumption } \Delta_i > 8\eps)
		\end{eqnarray*}
		We thus have that for any  $\theta \in [\theta_\a, \theta_\b] \cap [0, r_i]$, $(r_i - \theta)g_i < (r_j - \theta)g_j$ for any $j \in S_\v$, therefore $i \not\in \Top(I, \g, \theta)$ for any $\theta \in [\theta_\a, \theta_\b]$, and consequently $i \not\in C$.
	\end{enumerate}
\end{proof}

For any $i \in I$, we define 
\begin{equation}
\label{eq:d-0}
		\tau(i) \triangleq \min \left\{\tau \ge 0 : \epsilon_{\tau} \le \frac{\Delta_i}{32 K}\right\}.
\end{equation}

The next lemma shows that item $i$ will {\em not} appear in any set $I_\tau$ with $\tau > \tau(i)$, and thus will not be pulled further after round $\tau(i)$.  

\begin{lemma}
\label{lem:stop}
In Algorithm~\ref{alg:basic}, for any item $i \in I$, we have $i \not\in I_\tau$ for any $\tau > \tau(i)$.
\end{lemma}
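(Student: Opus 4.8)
The sets $I_0\supseteq I_1\supseteq\cdots$ are nested, since $I_{\tau+1}$ is the set $C$ returned by $\Prune(I_\tau,K,\a^{(\tau)},\b^{(\tau)})$ and $\Prune$ only ever inserts elements of its own input into $C$; hence an item that leaves the surviving set never comes back, and it suffices to argue about the single round $\tau(i)$. I would condition throughout on the event $\E_1$ of Lemma~\ref{lem:E1}, which gives, at every round $\tau$, both $\a^{(\tau)}\preceq\v\preceq\b^{(\tau)}$ on $I_\tau$ and $\max\{b^{(\tau)}_j-v_j,\,v_j-a^{(\tau)}_j\}\le 2\epsilon_\tau$ for all $j\in I_\tau$ (the factor $2$ because the interval $[a^{(\tau)}_j,b^{(\tau)}_j]$ widens the estimate $v^{(\tau)}_j$ by $\epsilon_\tau$ and $|v^{(\tau)}_j-v_j|<\epsilon_\tau$), and, by iterating Lemma~\ref{lem:prune}, $S_\v\subseteq I_\tau$ for all $\tau$ --- so the best assortment of $I_\tau$ and the reward gaps $\Delta_i$ of items outside $S_\v$ computed inside $I_\tau$ coincide with the global ones. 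The claim then splits into two cases.

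\noindent\textbf{Items outside the best assortment.} Suppose $i\in I\setminus S_\v$ and $i\in I_{\tau(i)}$ (otherwise nestedness already gives $i\notin I_\tau$ for $\tau\ge\tau(i)$). I would apply Lemma~\ref{lem:prune-2} to $\Prune(I_{\tau(i)},K,\a^{(\tau(i))},\b^{(\tau(i))})$ with $\epsilon:=2K\epsilon_{\tau(i)}$: the per-coordinate error is $\le 2\epsilon_{\tau(i)}=\epsilon/K$; $\epsilon\in(0,1)$ because $\epsilon_{\tau(i)}\le\Delta_i/(32K)$ forces $\epsilon\le\Delta_i/16<1$; and $\Delta_i>8\epsilon=16K\epsilon_{\tau(i)}$ by the same bound on $\epsilon_{\tau(i)}$. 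Lemma~\ref{lem:prune-2} then gives $i\notin C=I_{\tau(i)+1}$, hence $i\notin I_\tau$ for all $\tau>\tau(i)$. (The constant $32$ in the definition of $\tau(i)$ is exactly the product of the factor $2$ from the estimation error and the factor $8$ of Lemma~\ref{lem:prune-2}, with a spare factor $2$.)

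\noindent\textbf{Items in the best assortment, and the obstacle.} For $i\in S_\v$ we have $\tau(i)=\tau(\bar\Delta)$, and since $\Prune$ never discards elements of $S_\v$ (Lemma~\ref{lem:prune}), the only way to have $i\notin I_\tau$ for some $\tau>\tau(i)$ is for the algorithm to have already returned; so I must show that, if it has not returned before round $\tau(\bar\Delta)$, the stopping test of line~\ref{ln:c-2} succeeds at round $\tau(\bar\Delta)$. I would argue (i) that $C=\Prune(I_{\tau(\bar\Delta)},\dots)=S_\v$, because each surviving $j\in I_{\tau(\bar\Delta)}\setminus S_\v$ still has $\Delta_j\ge\bar\Delta\ge 32K\epsilon_{\tau(\bar\Delta)}$ and is therefore pruned exactly as in the previous case (this is the step that needs the precise form of $\bar\Delta$ in \eqref{eq:h-2}, in particular the extra term $\min_{j\in S_\v}\{r_j-\theta_\v\}$ that enters when $\abs{S_\v}<K$, cf.\ Remark~\ref{rem:extra}), so that $\abs{C}=\abs{S_\v}\le K$; and (ii) that for every $j\in C=S_\v$, $R(S_\v,\b^{(\tau(\bar\Delta))})\le R(S_\v,\v)+\sum_{l\in S_\v}(b^{(\tau(\bar\Delta))}_l-v_l)\le\theta_\v+2K\epsilon_{\tau(\bar\Delta)}\le\theta_\v+\bar\Delta/16$ by Lemma~\ref{lem:mono}, while $r_j\ge\theta_\v+\bar\Delta$ by the definition of $\bar\Delta$, so $r_j>R(S_\v,\b^{(\tau(\bar\Delta))})$. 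Together (i) and (ii) make the test fire at round $\tau(\bar\Delta)=\tau(i)$. I expect this last case to be the main obstacle: one has to certify that the two requirements of the stopping rule --- having pruned every item outside $S_\v$, and having driven the empirical upper-confidence reward $R(\cdot,\b^{(\tau)})$ of the surviving set below each $r_j$ with $j\in S_\v$ --- come true at the \emph{same} round, which is only possible because the constants in $T_\tau,\epsilon_\tau,\tau(i)$ and the exact shape of $\bar\Delta$ are tuned so that the $\le 2\epsilon_\tau$ estimation error, after the factor-$K$ loss in $R(\cdot,\b)-R(\cdot,\v)\le\sum_l(b_l-v_l)$, still sits safely inside the $\bar\Delta/32$ margin.
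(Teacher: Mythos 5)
Your proof is correct and follows essentially the same two-case structure as the paper's: items outside $S_\v$ are eliminated at round $\tau(i)$ by invoking Lemma~\ref{lem:prune-2} with $\eps = \Theta(K\epsilon_{\tau(i)})$, and items in $S_\v$ are handled by showing the stopping test fires at round $\bar\tau$ because all other items have already been pruned and $R(C,\b^{(\bar\tau)})$ exceeds $\theta_\v$ by at most $\bar\Delta/16 < \bar\Delta \le r_j - \theta_\v$. Your bookkeeping (the factor-$2$ error from $[a_j,b_j]$, nestedness of the $I_\tau$, and bounding $R(S_\v,\b^{(\bar\tau)})$ via Lemma~\ref{lem:mono} rather than the paper's $\theta_{\b^{(\bar\tau)}}$ via Corollary~\ref{cor:mono}) is if anything slightly more explicit than the paper's, but it is not a different argument.
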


\begin{proof}
	For any $i \in I \backslash S_\v$, setting $\eps = {\Delta_i}/{16}$. By (\ref{eq:d-0}) we have that for any $j \in I_{\tau(i)}$ it holds that 
	\begin{equation}
	\label{eq:f-1}
	\max\left\{v_j - a^{(\tau(i))}_{j}, b^{(\tau(i))}_j - v_{j}\right\} \le \frac{\Delta_i}{16K} = \frac{\eps}{K}.
	\end{equation}  
	Moreover, we have, 
	\begin{equation}
	\label{eq:f-2}
	\Delta_i = 16\eps > 8\eps.
	\end{equation}
	By (\ref{eq:f-1}), (\ref{eq:f-2}) and Lemma~\ref{lem:prune-2}, we have $i \not\in I_{\tau(i)+1}$.

	We next consider items in $S_\v$.  Note that by Definition~\ref{def:gap}, all $i \in S_\v$ have the same reward gap:
	\begin{equation*}
	\Delta_i = \bar{\Delta} \triangleq \min\{\min_{j \in I \backslash S_{\v}} \{\Delta_j\},\min_{j \in S_{\v}} \{r_j - \theta_{\v}\} \} \le \min_{j \in  I \backslash S_{\v}} \{\Delta_j\}.
	\end{equation*} 
	Let 
	\begin{equation}
	\label{eq:bar-tau}
	\bar{\tau} \triangleq \min \left\{\tau \ge 0 : \epsilon_{\tau} \le \frac{\bar{\Delta}}{32 K}\right\}. 
	\end{equation} 
	We thus have $\bar{\tau} = \tau(i)$ for all $i \in S_\v$, and $\bar{\tau} \ge \tau(j)$ for any $j \in  I \backslash S_{\v}$.  Therefore, at the end of round $\bar{\tau}$, all items in $I \backslash S_\v$ have already been pruned, and consequently,
	\begin{equation}
	\label{eq:g-1}
	\abs{C} \le K.
	\end{equation}
	
	By (\ref{eq:d-0}) and Corollary~\ref{cor:mono} we have $\theta_{\b^{(\bar{\tau})}} \le \theta_\v + \bar{\Delta}/16$. Consequently we have
	\begin{eqnarray}
	r_i - R(C, \b^{(\bar{\tau})}) & = & r_i - \theta_{\b^{(\bar{\tau})}} = (r_i - \theta_{\v}) - (\theta_{\b^{(\bar{\tau})}} - \theta_{\v}) \nonumber \\
	& \ge & \bar{\Delta} - \frac{\bar{\Delta}}{16} > 0\,.
	\label{eq:g-2} 
	\end{eqnarray}
	By (\ref{eq:g-1}) and (\ref{eq:g-2}), we know that Algorithm~\ref{alg:basic} will stop after round $\bar{\tau}$ and return $C = S_\v$.
\end{proof}

With Lemma~\ref{lem:stop} we can easily bound the total number of pulls made by Algorithm~\ref{alg:basic}. By (\ref{eq:d-0}) we have $\tau(i) = O\left(\ln\left(\frac{K}{\Delta_i}\right)\right)$. By the definition of $T_\tau$ (Line~\ref{ln:c-0} of Algorithm~\ref{alg:basic}), the total number of pulls is at most
\begin{eqnarray*}
	\sum_{i \in I} T_{\tau(i)} &\le& O\left(\sum_{i \in I} \frac{K^2}{\Delta^2_i} \ln \frac{N \tau^2(i)}{\delta}\right) \\
	&=&  O\left(K^2 H_1  \ln\left({ \frac{N}{\delta} \ln (KH_1)}\right)\right).
\end{eqnarray*}

\begin{remark}
	\label{rem:extra}
	The reason that we introduce an extra term $\min_{j \in S_\v}\{r_j - \theta_\v\}$ in the definition of reward gap $\Delta_i$ for all $i \in S_\v$ (Definition~\ref{def:gap}) is for handling the case when $\abs{S_\v} < K$. More precisely, in the case $\abs{S_\v} < K$ we have to make sure that for all items $i \in I$ that we are going to add into the best assortment $S_\v$, it holds that $r_i > \theta_\v$. In our proof this is guaranteed by (\ref{eq:g-2}). 
	On the other hand, if we are given the promise that $\abs{S_\v} = K$ (or $\abs{S_\v} = K'$ for a fixed value $K' \le K$), then we do not need this extra term: we know when to stop simply by monitoring the size of $I_\tau$, since at the end all items $i \in I/S_\v$ will be pruned.
\end{remark}

\vspace{2mm}
\noindent{\bf Running Time.}  
Finally, we analyze the time complexity of Algorithm~\ref{alg:basic}.  Although the time complexity of the algorithm is not the first consideration in the MNL-bandit model, we believe it is important for the algorithm to finish in a reasonable amount of time for real-time decision making.  Observe that the running time of Algorithm~\ref{alg:basic} is dominated by the sum of the total number of pulls and the running time of the $\Prune$ subroutine, which is the main object that we shall bound next. 

Let us analyze the running time of $\Prune$. Let $n \triangleq \abs{I}$. First, $\theta_\a$ and $\theta_\b$ can be computed in $O(n^2)$ time by an algorithm proposed by Rusmevichientong et al. \cite{RSS10}.  We next show that Line~\ref{ln:b-2} of Algorithm~\ref{alg:prune} can be implemented in $O(n \ln n)$ time, with which the total running time of $\Prune$ is bounded by $O(n^2 \ln n)$.

Consider any item $i \in I$.  We can restrict our search of possible $\theta$ in the range of $\Theta_i = [\theta_\a, \theta_\b] \cap [0, r_i)$, since if $i \in \Top(I, \g, \theta)$, then by the definition of $\Top()$ we have $\theta < r_i$. For each $j \neq i, j \in I$, define 
\begin{equation*}
\Theta_{j} = \{\theta \in \Theta_i \mid  (r_j - \theta)g_j > (r_i - \theta)g_i\}.
\end{equation*}
Intuitively speaking, $\Theta_j$ contains all $\theta$ values for which item $j$ is ``preferred to'' item $i$ for $\Top(I, \g, \theta)$. Consequently, for any $\theta \in \Theta_i$, if the number of $\Theta_j$ that contain $\theta$ is at least $K$, then we have $i \not\in \Top(I, \g, \theta)$; otherwise if the number of such $\Theta_j$ is less than $K$, then we have $i \in \Top(I, \g, \theta)$.  Note that each set $\Theta_j$ can be computed in $O(1)$ time.

Now think each set $\Theta_j$ as an interval.  The problem of testing whether there exists a $\theta \in [\theta_\a, \theta_\b] \cap [0, r_i)$ such that $i \in \Top(I, \g, \theta)$ can be reduced to the problem of checking whether there is a $\theta \in [\theta_\a, \theta_\b] \cap [0, r_i) $ such that $\theta$ is contained in fewer than $K$ intervals $\Theta_j\ (j \neq i)$. The later problem can be solved by the standard sweep line algorithm in $O(n \ln n)$ time.

Recall that the total number of rounds can be bounded by $\tau_{\max} = \max_{i \in I} \tau(i) = O\left(\ln\left(\frac{K}{\min_{i \in I} \Delta_i}\right)\right)$. Therefore the total running time of Algorithm~\ref{alg:basic} can be bounded by
{\small
\begin{equation*}
O\left(\Gamma +  \sum_{\tau = 0}^{\tau_{\max}} \abs{I_{\tau}}^2 \ln \abs{I_{\tau}} \right) = O\left(\Gamma + N^2 \ln N \ln\left(\frac{K}{\min_{i \in I} \Delta_i}\right) \right),
\end{equation*}
}
where $\Gamma = O\left(K^2 H_1 \ln\left({ \frac{N}{\delta} \ln (KH_1)}\right)\right)$ is the total number of pulls made by the algorithm.

\section{The Improved Algorithm}
\label{sec:improve}

In this section we try to improve our basic algorithm presented in Section~\ref{alg:basic}. We design an algorithm whose pull complexity depends on $H_2$ which is asymptotically at most $H_1$.  The improved algorithm is described in Algorithm~\ref{alg:improve}.  

The structure of Algorithm~\ref{alg:improve} is very similar to that of Algorithm~\ref{alg:basic}.  The main difference is that instead of using $\Explore$ to pull a singleton assortment at each time, we use a new procedure $\ExploreS$ (Algorithm~\ref{algo:explore-set}) which pulls an assortment of size up to $K$ (Line~\ref{ln:e-1} of Algorithm~\ref{alg:improve}).  We construct the assortments by partitioning the whole set of items $I_\tau$ into subsets of size up to $K$ (Line~\ref{ln:e-2}-\ref{ln:e-3}). In the $\ExploreS$ procedure, we keep pulling the assortment $S$ until the output is $0$ (i.e., a no-purchase decision is made).  We then estimate the preference of item $i$ using the average number of times that item $i$ is chosen in those \ExploreS\ calls that involve item $i$ (Line~\ref{ln:e-4}).

Intuitively, $\ExploreS$ has the advantage over $\Explore$ in that at each pull, the probability for $\ExploreS$ to return an item instead of a no-purchase decision is higher, and consequently $\ExploreS$ extracts more information about the item preferences.  We note that the $\ExploreS$ procedure was first introduced in \cite{AAGZ19} in the setting of regret minimization.

\begin{algorithm}[t]
	\caption{$\ExploreS(S)$}\label{algo:explore-set}
	\KwIn{a set of items $S$ of size at most $K$.}
	\KwOut{a set of empirical preferences $\{f_i\}_{i \in S}$.}
	Initialize $f_i \gets 0$ for $i \in S$\;
	\Repeat{$a = 0$}{
		offer assortment $S$ and observe a feedback $a$\;
		\lIf{$a \in S$}{
			$f_a \gets f_a + 1$
		}
	}
	\Return{$\{f_i\}_{i \in S}$}
\end{algorithm}

\begin{algorithm}[t]
	\caption{Improved Fixed Confidence Algorithm for MNL-bandit} 
	\label{alg:improve}
	\KwIn{a set of items $I = \{1, \dotsc, N\}$, a capacity parameter $K$, and a confidence parameter $\delta$.}
	\KwOut{the best assortment.}
	set $I_0 \gets I$, and $\epsilon_{\tau} = 2^{-\tau-3}$ for $\tau \ge 0$\;
	set $T_{-1} \gets 0$, and $T_{\tau} \gets \left\lceil \frac{8}{\epsilon^2_{\tau}} \ln{\frac{16 N (\tau + 1)^2}\delta}\right\rceil$ for $\tau \ge 0$\;
	\For{$\tau = 0, 1, \dotsc$}{
		$m_{\tau} \gets \lceil \abs{I_\tau}/{K}\rceil$\label{ln:e-2}\;
		let $S^{\tau}_1 \uplus \dotsc \uplus S^{\tau}_{m_{\tau}}$ be an arbitrary partition of $I_{\tau}$ into subsets of size at most $K$\label{ln:e-3}\;
		\lForEach{$j \in [m_{\tau}]$}{
			call $\ExploreS(S^{\tau}_j)$ for $(T_{\tau} - T_{\tau - 1})$ times \label{ln:e-1}
		}
		\ForEach{$i \in I_\tau$}{let $v^{(\tau)}_i$ be the average of $f_i$'s returned by the multiset of calls $\{\ExploreS(S^{\rho}_j)\ |\ \rho \le \tau, \ j \in [m_\rho], i \in S^{\rho}_j\}$\;\label{ln:e-4}}
		\lForEach{$i \in I_{\tau}$}{
			set $a^{(\tau)}_i \gets \max\{0, v^{(\tau)}_i - \epsilon_{\tau}\}$ and $b^{(\tau)}_i \gets \min\{v^{(\tau)}_i + \epsilon_{\tau}, 1\}$
		}
		let $\a^{(\tau)}$ be the vector containing the $\abs{I_\tau}$ estimated preferences $a_i^{(\tau)}$,  and $\b^{(\tau)}$ be the vector containing the $\abs{I_\tau}$ estimated preferences $b_i^{(\tau)}$\;
		$C \gets \Prune(I_{\tau}, \a^{(\tau)}, \b^{(\tau)})$\;
				\If{$\left(|C| \le K\right) \land \left(\bigwedge\limits_{i \in C} \left(r_i > R\left(C, \b^{(\tau)}\right)\right)\right)$}{
			\Return{$C$} \;
		}
		$I_{\tau + 1} \gets C$ \;
	}
\end{algorithm}

\begin{theorem}
\label{thm:improve}
For any confidence parameter $\delta > 0$, Algorithm~\ref{alg:improve} returns the best assortment with probability $(1 - \delta)$ using at most $\Gamma = O\left(K^2 H_2 \ln \left({ \frac{N}{\delta} \ln (KH_2)}\right)\right)$ pulls. The running time of Algorithm~\ref{alg:improve} is bounded by $O\left(N\Gamma + N^2 \ln N \ln\left(\frac{K}{\min_{i \in I} \Delta_i}\right) \right)$. 
\end{theorem}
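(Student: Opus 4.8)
The plan is to mirror the proof of Theorem~\ref{thm:basic}, isolating the only two places where Algorithm~\ref{alg:improve} genuinely differs from Algorithm~\ref{alg:basic}: (i) the preference estimate $v^{(\tau)}_i$ is now built from the outputs of \ExploreS\ rather than \Explore, and (ii) a single pull is no longer the unit of work, since each \ExploreS\ call performs a random (geometric) number of pulls. Everything else carries over unchanged: the pruning guarantee (Lemma~\ref{lem:prune}), the statement that an item with large $\Delta_i$ is pruned once its estimation error drops below $\Theta(\Delta_i/K)$ (Lemma~\ref{lem:prune-2}), the survival bound $\tau(i)=O(\ln(K/\Delta_i))$ (Lemma~\ref{lem:stop}), and the final ``$C=S_\v$'' stopping argument all depend \emph{only} on having $\a^{(\tau)}\preceq\v\preceq\b^{(\tau)}$ together with $\max\{v_i-a^{(\tau)}_i,\,b^{(\tau)}_i-v_i\}\le 2\epsilon_\tau$, and Algorithm~\ref{alg:improve} produces this by the same $\pm\epsilon_\tau$ clipping of $v^{(\tau)}_i$. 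So it suffices to (a) introduce the analogue $\E'_1\triangleq\{\forall\tau\ge0,\ \forall i\in I_\tau:\ \abs{v^{(\tau)}_i-v_i}<\epsilon_\tau\}$ and show $\Pr[\E'_1]\ge1-\delta$, and then (b) bound the total number of pulls conditioned on $\E'_1$.

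For (a), the key observation I would use is that a single call $\ExploreS(S)$ with $i\in S$ returns an $f_i$ that is a \emph{geometric} random variable with mean $v_i$, independently of $S$: restricting the pull sequence to the pulls whose outcome is $i$ or $0$, each such pull is $i$ with probability $v_i/(1+v_i)$ and $0$ with probability $1/(1+v_i)$, and the call stops at the first $0$, so $\Pr[f_i=m]=\bigl(\tfrac{v_i}{1+v_i}\bigr)^m\tfrac{1}{1+v_i}$ and $\bE[f_i]=v_i$. Hence after round $\tau$ the estimate $v^{(\tau)}_i$ is the average of exactly $T_\tau$ i.i.d.\ copies of $\Geo(1/(1+v_i))$, and applying the concentration bound for sums of geometric random variables from Appendix~\ref{app:tool} with $T_\tau=\lceil\tfrac{8}{\epsilon_\tau^2}\ln\tfrac{16N(\tau+1)^2}{\delta}\rceil$ gives $\Pr[\abs{v^{(\tau)}_i-v_i}\ge\epsilon_\tau]\le\tfrac{\delta}{16N(\tau+1)^2}$; a union bound over $i\in I_\tau$ ($\le N$ items) and over $\tau\ge0$ ($\sum_\tau(\tau+1)^{-2}<2$) then yields $\Pr[\E'_1]\ge1-\delta$. (The improved leading constant, $8$ versus $32$, is precisely what is bought here: a geometric sample of mean $v_i\le1$ concentrates faster than the Bernoulli indicator used in \Explore.) I expect this concentration step to be the main obstacle — specifically, pinning down that the \ExploreS\ output is exactly geometric with the stated mean and invoking a sufficiently sharp geometric-sum tail bound so that the constant $8$ suffices. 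Once $\E'_1$ holds, Lemmas~\ref{lem:prune}--\ref{lem:stop} and the stopping argument of Section~\ref{sec:basic} apply verbatim, so conditioned on $\E'_1$ the algorithm terminates with output $S_\v$, and each item $i$ leaves the candidate set by the end of round $\tau(i)$ with $\tau(i)=O(\ln(K/\Delta_i))$ by \eqref{eq:d-0}.

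For (b), I would start from the fact that the number of pulls in one $\ExploreS(S)$ call is $\Geo\!\bigl(1/(1+\sum_{j\in S}v_j)\bigr)$, hence has expectation $1+\sum_{j\in S}v_j$. In round $\tau$ the algorithm makes $(T_\tau-T_{\tau-1})$ calls on each of the $m_\tau=\lceil\abs{I_\tau}/K\rceil$ blocks, which partition $I_\tau$, so the expected number of pulls in round $\tau$ is $(T_\tau-T_{\tau-1})\bigl(m_\tau+\sum_{i\in I_\tau}v_i\bigr)\le(T_\tau-T_{\tau-1})\bigl(1+\sum_{i\in I_\tau}(v_i+1/K)\bigr)$. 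Conditioned on $\E'_1$ we have $I_\tau\subseteq\{i:\tau(i)\ge\tau\}$, so summing over $\tau$ and re-indexing by item gives $\sum_\tau(T_\tau-T_{\tau-1})\sum_{i\in I_\tau}(v_i+1/K)\le\sum_{i\in[N]}(v_i+1/K)\,T_{\tau(i)}$ and $\sum_\tau(T_\tau-T_{\tau-1})=T_{\bar\tau}$, where $\bar\tau=\max_i\tau(i)$. Using $\epsilon_{\tau(i)}=\Theta(\Delta_i/K)$ and $\epsilon_{\bar\tau}=\Theta(\min_i\Delta_i/K)$ we get $T_{\tau(i)}=O\!\bigl(\tfrac{K^2}{\Delta_i^2}\ln(\tfrac N\delta\ln(KH_2))\bigr)$ and $T_{\bar\tau}=O\!\bigl(K^2\max_i\tfrac1{\Delta_i^2}\ln(\tfrac N\delta\ln(KH_2))\bigr)$, so the expected total number of pulls is $O\!\bigl(K^2H_2\ln(\tfrac N\delta\ln(KH_2))\bigr)=O(\Gamma)$; the $T_{\bar\tau}$ term — that is, the $\max_i1/\Delta_i^2$ summand of $H_2$ — is exactly the cost of the ``$+1$'' in $m_\tau$, which is only nonnegligible when some $\abs{I_\tau}<K$, i.e.\ only when $\abs{S_\v}<K$. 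To turn this into a high-probability bound, I would note that conditioned on the history up to the start of round $\tau$ the per-call pull counts are independent geometrics, so the total pull count concentrates around its conditional expectation by the same geometric-sum tail bound; splitting the confidence budget (say $\delta/2$ for $\E'_1$ and $\delta/2$ for this event) gives that with probability $\ge1-\delta$ the algorithm returns $S_\v$ using $O(\Gamma)$ pulls. The running-time bound follows exactly as in the proof of Theorem~\ref{thm:basic}: the cost is dominated by the $O(\Gamma)$ pulls plus an $O(N^2\ln N)$-time $\Prune$ call in each of the $O(\ln(K/\min_i\Delta_i))$ rounds, while the partitioning and \ExploreS\ bookkeeping add only lower-order terms.
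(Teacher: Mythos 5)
Your proposal is correct and follows essentially the same route as the paper's proof: establish the analogue of event $\E_1$ via the fact that each $f_i$ from $\ExploreS$ is $\Geo(1/(1+v_i))$ together with the additive geometric concentration bound (Lemma~\ref{lem:geo-additive}), reuse Lemmas~\ref{lem:prune}--\ref{lem:stop} verbatim for correctness and pruning, and then bound the total pull count by the per-call geometric pull counts with mean $1+\sum_{j\in S}v_j$, re-indexed by item to yield the $\sum_i(v_i+1/K)T_{\tau(i)}+T_{\bar\tau}$ decomposition and hence $H_2$, with a second half of the confidence budget spent on concentrating the pull counts (the paper does this via the multiplicative tail bound of Lemma~\ref{lem:geo-multi} with $\lambda=5$). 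The only cosmetic differences are that you derive Observation~\ref{ob:geo} directly rather than citing it, and you treat the per-pull bookkeeping of $\ExploreS$ as lower order whereas the paper conservatively charges $O(N)$ per pull, which is why the stated running time has the $N\Gamma$ term; neither affects the validity of the argument for the theorem as stated.
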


Compared with Theorem~\ref{thm:basic}, the only difference in the pull complexity of Theorem~\ref{thm:improve} is that we have used $H_2$ instead of $H_1$.  Since $H_2 = O(H_1)$, the asymptotic pull complexity of Algorithm~\ref{alg:improve} is at least as good as that of Algorithm~\ref{alg:basic}.

\begin{remark}
\label{rem:batch}
Though having a higher pull complexity, Algorithm~\ref{alg:basic} still has an advantage against Algorithm~\ref{alg:improve} in that Algorithm~\ref{alg:basic} can be implemented in the batched setting with $\max_{i \in I}\tau(i) = O\left(\ln\frac{K}{\min_{i \in I}\Delta_i}\right)$ policy changes, which cannot be achieved by Algorithm~\ref{alg:improve} since the subroutine $\ExploreS$ is inherently sequential.
\end{remark}

Compared with the proof for Theorem~\ref{thm:basic}, the challenge for proving Theorem~\ref{thm:improve} is that the number of pulls in each $\ExploreS$ is a random variable. We thus need slightly more sophisticated mathematical tools to bound the sum of these random variables.  
Due to the space constraints, we leave the technical proof of Theorem~\ref{thm:improve} to Appendix~\ref{app:proof-thm-improve}.

\section{Lower Bound}
\label{sec:lb}

We manage to show the following lower bound to complement our algorithmic results.  

\begin{theorem}
\label{thm:lb}
For any algorithm $\A$ for pure exploration in multinomial logit bandit, there exists an input instance such that $\A$ needs
$\Omega(H_2/K^2)$ pulls to identify the best assortment with probability at least $0.6$.
\end{theorem}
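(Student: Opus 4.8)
The plan is to reduce from the classical lower bound for best-arm identification (or more precisely, the lower bound for identifying a biased coin / the top-$K$ subset) in the multi-armed bandit setting, using a change-of-measure (Le Cam / Kullback--Leibler) argument. First I would design a family of hard instances parametrized around a ``base'' instance: take all rewards equal to $1$, fix $K$, and choose preference vectors so that $\abs{S_\v} = K$ and the $K$-th and $(K+1)$-th advantages $\eta^{(K)}, \eta^{(K+1)}$ are separated by a tiny gap $\Delta$, so that every $\Delta_i$ equals $\Delta$ (or $\Theta(\Delta)$). Concretely, one natural choice is $v_1 = \dots = v_K = v$ and $v_{K+1} = \dots = v_N = v - \Delta'$ for suitable $v, \Delta'$, matching the structure of Example~2; then the learner must distinguish, for each index $i$ near the threshold, whether $i$ belongs to the top-$K$ or not. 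For each $i$ in a ``swap set'' I build an alternative instance $\v^{(i)}$ that swaps the roles of one in-set and one out-of-set item, so that the best assortment changes; since the algorithm is correct with probability $\ge 0.6$ on both, the standard argument forces $\Omega(1/\mathrm{KL})$ pulls on the assortments that ``touch'' item $i$.

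The key technical point is to bound the KL divergence between the observation distributions under $\v$ and $\v^{(i)}$ for a single pull of any assortment $S$ with $\abs S \le K$. A pull of $S$ produces a categorical random variable over $S \cup \{0\}$ with probabilities $p_j(S) = v_j/(1 + \sum_{\ell \in S} v_\ell)$. Changing a single coordinate $v_i$ by $\Delta$ perturbs the normalizing constant by at most $\Delta$, so each $p_j(S)$ changes by $O(\Delta/(1+\sum v_\ell))$; since $v_0 = 1$ keeps the denominator $\ge 1$ and the relevant probabilities are $\Theta(v_i / K)$ at worst, a second-order (reverse Pinsker / $\chi^2$) estimate gives $\mathrm{KL}(p(S) \,\|\, p^{(i)}(S)) = O\!\big((v_i + 1/K)\Delta^2\big)$ for an assortment that contains $i$, and $0$ for one that does not. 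This is exactly the per-pull information quantity that, when summed over all pulls, must reach $\Omega(1)$ to allow a correct decision about item $i$ — yielding that the expected number of pulls of assortments containing $i$ is $\Omega\big(1/((v_i+1/K)\Delta_i^2)\big)$.

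Next I would combine these per-item bounds. A single pull of an assortment $S$ can ``help'' at most $K$ different items simultaneously (one per element of $S$), so if $n_i$ denotes the expected number of pulls of assortments containing $i$, then the total number of pulls $N_{\mathrm{tot}}$ satisfies $K \cdot N_{\mathrm{tot}} \ge \sum_{i} n_i = \sum_i \Omega\big(1/((v_i + 1/K)\Delta_i^2)\big)$. Here I must be slightly careful: the naive sum gives $\sum_i 1/((v_i+1/K)\Delta_i^2)$, which is not literally $H_2$; to recover $H_2 = \sum_i (v_i+1/K)/\Delta_i^2 + \max_i 1/\Delta_i^2$ I would instead route the argument through a different scaling of the hard instance — choosing the gaps $\Delta_i$ and preferences $v_i$ so that the family realizes a prescribed profile, and arguing that on the worst member of the family the quantity $\sum_i 1/((v_i + 1/K)\Delta_i^2)$ is $\Omega(H_2)$; the $\max_i 1/\Delta_i^2$ term is handled separately by the sub-instance where $\abs{S_\v} < K$, in which one extra near-threshold comparison ($r_i$ vs.\ $\theta_\v$) must be resolved to $\Omega(1/\Delta_{\min}^2)$ accuracy regardless of $v_i$. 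Dividing by the extra factor $K$ from the ``$K$ items per pull'' counting then gives $N_{\mathrm{tot}} = \Omega(H_2/K^2)$.

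The main obstacle I anticipate is the bookkeeping in the second and third steps: getting the per-pull KL bound to come out as $O((v_i + 1/K)\Delta^2)$ rather than the cruder $O(\Delta^2)$ (which would only give $H_1/K^2$) requires exploiting that when $i$ is one of many small-preference items the probability $p_i(S)$ is itself $O(1/K)$, so the log-likelihood-ratio has small variance; and then aggregating over a correctly chosen hard family so that the resulting sum is genuinely $\Theta(H_2)$ and not merely $\Theta(H_1/K)$. Making the change-of-measure argument rigorous in the sequential/adaptive setting is standard (Wald's identity applied to the stopped log-likelihood ratio, as in \cite{KCG16}), so I would cite that machinery rather than reprove it.
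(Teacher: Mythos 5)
Your overall architecture --- change of measure between a base instance and per-item perturbations, a per-pull information bound, and the ``each pull touches at most $K$ items'' counting --- is the right general shape, but the central quantitative claim is wrong, and wrong in a way that breaks the rest of the argument. Already for a singleton pull $S=\{i\}$, perturbing $v_i$ by $\Delta$ changes a $\mathrm{Bernoulli}\big(v_i/(1+v_i)\big)$ observation into $\mathrm{Bernoulli}\big((v_i+\Delta)/(1+v_i+\Delta)\big)$, whose KL divergence is $\Theta\big(\Delta^2/(v_i+\Delta)\big)$; for a general $S \ni i$ it is $\Theta\big(\Delta^2/((1+\sum_{j\in S}v_j)(v_i+\Delta))\big)$. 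This is not $O\big((v_i+1/K)\Delta^2\big)$ --- the dependence on $v_i$ is inverted. Consequently your per-item bound $n_i = \Omega\big(1/((v_i+1/K)\Delta_i^2)\big)$ is false: with $v_i=10^{-6}$, $\Delta_i=10^{-2}$, $K=10^2$ it asserts $n_i=\Omega(10^6)$, yet a few hundred singleton pulls of $i$ distinguish $v_i$ from $v_i+\Delta_i$ with constant confidence. The correct per-item bound, $n_i=\Omega\big((v_i+\Delta)/\Delta^2\big)$ with $\Delta$ the perturbation in the $v$-coordinate, does not reproduce $H_2/K^2$ after dividing by $K$ for an arbitrary profile (on your own suggested family with $v=\Theta(1)$ it yields only $\Omega(H_2/K^3)$), and you additionally leave unresolved both the aggregation to $H_2$ and the fact that when rewards are unequal the perturbation of $v_i$ needed to flip membership is $\Delta_i/\abs{r_i-\theta_\v}$ rather than $\Delta_i$. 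So the proposal does not close as written.

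The paper sidesteps all of this by exploiting that the statement is existential: it exhibits just two instances on $N=K$ items, identical except that $v_1$ is shifted by $2\delta$; the shift moves $\theta_\v$ across the reward $r_K=\frac{1-\delta}{2-\delta}$ of a designated threshold item and thereby flips whether item $K$ belongs to $S_\v$. (This forces unequal rewards and lives in the $\abs{S_\v}<K$ regime --- exactly the part of Definition~\ref{def:gap} that your all-rewards-equal family cannot exercise.) Both instances have $H_2=\Theta(1/\delta^2)$, and a likelihood-ratio submartingale with Azuma's inequality (increments $O(\delta K)$, drift at least $-O(\delta^2K^2)$ per pull) shows that $o(1/(\delta^2K^2))=o(H_2/K^2)$ pulls cannot separate them with probability $0.6$. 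To salvage your route you would need to (i) correct the KL bound, (ii) fix the preference scale at $v_i=\Theta(1/K)$ where the corrected bound actually matches $H_2/K^2$, and (iii) add a separate gadget for the $\max_i 1/\Delta_i^2$ term --- at which point you have essentially rebuilt the paper's two-instance construction.
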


Note that Algorithm~\ref{alg:improve} identifies the best assortment with probability $0.99$ using at most $\tilde{O}(K^2 H_2)$ pulls (setting $\delta = 0.01$). Therefore our upper and lower bounds match up to a logarithmic factor if $K = O(1)$.

The proof of Theorem~\ref{thm:lb} bears some similarity with the lower bound proof of the paper by Chen et al.~\cite{CLM18}, but there are some notable differences. As mentioned in the introduction, Chen et al.~\cite{CLM18} considered the problem of top-$k$ ranking under the MNL choice model, which differs from the best assortment searching problem in the following aspects:
\begin{enumerate} 
\item The top-$k$ ranking problem can be thought as a special case of the best assortment searching problem where the rewards of all items are equal to $1$.  While to prove Theorem~\ref{thm:lb} we need to choose hard instances in which items have {\em different} rewards.

\item There is {\em no} null item (i.e., the option of ``no purchase'') in the top-$k$ ranking problem.  Note that we cannot treat the null item as the $(N+1)$-th item with reward $0$ since the null item will appear implicitly in every selected assortment.
\end{enumerate} 
These two aspects prevent us to use the lower bound result in Chen et al.~\cite{CLM18} as a blackbox, and some new ideas are needed for proving Theorem~\ref{thm:lb}.  Due to the space constraints, we leave the technical proof to Appendix~\ref{app:lb-proof}.

\section{Concluding Remarks}
\label{sec:conclude}

We would like to conclude the paper by making a few remarks.  First, our upper and lower bounds are almost tight only when $K = O(1)$. Obtaining tight bounds with respect to general $K$ remains to be an interesting open question. 

Second, our algorithms for pure exploration can also be used for regret minimization under the ``exploration then exploitation'' framework.  Setting $\delta = 1/T$, Algorithm~\ref{alg:basic} gives a regret of $O\left(K^2 H_1 \ln\left({ NT \ln (KH_1)}\right)\right)$, and Algorithm~\ref{alg:improve} gives a regret of $O\left(K^2 H_2 \ln \left({NT \ln (KH_2)}\right)\right)$. These bounds are pretty crude since we assume that each pull gives a regret of $1$. Again, these bounds are not directly comparable with those in the previous work due to our new definitions of instance complexities $H_1$ and $H_2$.

Third, our algorithms for pure exploration fall into the category of {\em fixed-confidence} algorithms, that is, for a fixed confidence parameter $\delta$, we want to identify the best assortment with probability at least $(1 - \delta)$ using the smallest number of pulls. Another variant of pure exploration is called {\em fixed-budget} algorithms, where given a fixed pull budget $T$, we try to identify the best assortment with the highest probability.  We leave this variant as future work.

\newpage

\bibliographystyle{plain}
\bibliography{paper,focs19}

\newpage 

\appendix

\noindent\rule{\textwidth/2}{1pt}
\begin{center}
	\textbf{\large Appendix for Instance-Sensitive Algorithms for Pure Exploration in Multinomial Logit Bandit}
\end{center}
\noindent\rule{\textwidth/2}{1pt}

\section{More Preliminaries}
\label{app:preliminary-supplementary}

\subsection{Tools in Probability Theory}
\label{app:tool}
We make use of the following standard concentration inequalities.

\begin{lemma}[Hoeffding’s inequality]
	\label{lem:chernoff}
	Let $X_1, \dotsc, X_n \in [0, 1]$ be independent random variables and $X = \sum\limits_{i = 1}^n X_i$.
	Then
	\[
	\Pr[X > \bE[X] + t] \leq \exp\left(-{2 t^2}/{n}\right)
	\]
	\text{and}
	\[
	\Pr[X < \bE[X] - t] \leq \exp\left(-{2 t^2}/{n}\right)
	\,.
	\]
\end{lemma}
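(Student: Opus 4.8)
The plan is to prove both tail inequalities by the standard exponential-moment (Chernoff) method, deriving the upper tail in full and obtaining the lower tail by a symmetry argument. First I would fix an arbitrary $\lambda > 0$ and apply Markov's inequality to the nonnegative random variable $e^{\lambda(X - \bE[X])}$, which gives
$$\Pr[X > \bE[X] + t] = \Pr\!\left[e^{\lambda(X - \bE[X])} > e^{\lambda t}\right] \le e^{-\lambda t}\, \bE\!\left[e^{\lambda(X - \bE[X])}\right].$$
Since the $X_i$ are independent, the moment generating function factorizes as $\bE[e^{\lambda(X - \bE[X])}] = \prod_{i=1}^n \bE[e^{\lambda(X_i - \bE[X_i])}]$, so the task reduces to bounding the MGF of a single centered bounded variable.

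The crux is Hoeffding's lemma: if $Y$ has $\bE[Y] = 0$ and $Y \in [a,b]$ almost surely, then $\bE[e^{\lambda Y}] \le \exp\!\left(\lambda^2 (b-a)^2/8\right)$. I would establish this using convexity of $x \mapsto e^{\lambda x}$: for $y \in [a,b]$ the point $e^{\lambda y}$ lies below the chord, i.e. $e^{\lambda y} \le \frac{b-y}{b-a}e^{\lambda a} + \frac{y-a}{b-a}e^{\lambda b}$. Taking expectations kills the linear term because $\bE[Y]=0$, leaving an expression that can be written as $e^{\psi(u)}$ with $u = \lambda(b-a)$, $p = -a/(b-a)$, and $\psi(u) = -pu + \ln(1-p+pe^u)$. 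A second-order Taylor expansion with $\psi(0)=\psi'(0)=0$ and the uniform bound $\psi''(u) = \frac{p(1-p)e^u}{(1-p+pe^u)^2} \le \tfrac14$ (by AM--GM on the denominator) then yields $\psi(u) \le u^2/8$, which is exactly the claimed factor. Applying this with $Y = X_i - \bE[X_i]$, where $X_i \in [0,1]$ forces the centered variable into an interval of length $1$, gives $\bE[e^{\lambda(X_i - \bE[X_i])}] \le e^{\lambda^2/8}$ for each $i$.

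Combining these bounds produces $\Pr[X > \bE[X] + t] \le \exp\!\left(-\lambda t + n\lambda^2/8\right)$ for every $\lambda > 0$. I would then minimize the exponent over $\lambda$; the minimizer is $\lambda = 4t/n$, which gives the stated bound $\exp(-2t^2/n)$. For the lower tail I would apply the identical argument to the independent variables $1 - X_i \in [0,1]$: writing $X' = \sum_{i=1}^n (1 - X_i) = n - X$, the event $\{X < \bE[X] - t\}$ is exactly $\{X' > \bE[X'] + t\}$, so the upper-tail bound just proved transfers verbatim to yield $\exp(-2t^2/n)$.

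I expect the only genuine obstacle to be Hoeffding's lemma itself, specifically the uniform estimate $\psi'' \le \tfrac14$ on the log-MGF of a bounded centered variable; every other ingredient (Markov's inequality, factorization by independence, and the one-line optimization over $\lambda$) is routine.
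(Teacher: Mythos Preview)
Your argument is the standard Chernoff--Hoeffding proof and is correct as written: Markov's inequality on the exponential moment, factorization by independence, Hoeffding's lemma for a single bounded centered variable (with the $\psi'' \le 1/4$ bound), optimization over $\lambda$, and the symmetry substitution $X_i \mapsto 1 - X_i$ for the lower tail.

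There is nothing to compare against, however: the paper does not prove this lemma. It is stated in the appendix under ``Tools in Probability Theory'' as a quoted standard concentration inequality and is used as a black box (e.g., in the proof of Lemma~\ref{lem:E1}). So your proposal supplies strictly more than the paper does; it is not a different route so much as the canonical route the paper implicitly defers to.
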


\begin{lemma}[Azuma's inequality]
	\label{lem:azuma}
	Let the sequence $Z_0, \dotsc, Z_n$ be a submartingale and
	\[
	\forall{t \in [n]} : \abs{Z_t - Z_{t - 1}} \le d\,.
	\]
	Then 
	\[
	\Pr[Z_n - Z_0 \le -\eps] \le \exp\left(\frac{-\eps^2}{2 d^2 n}\right)\,.
	\]
\end{lemma}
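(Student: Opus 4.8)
\textbf{Proof proposal for Lemma~\ref{lem:azuma} (Azuma's inequality).}

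The plan is to run the standard exponential-moment (Chernoff) method on the submartingale, adapted to the lower tail. Let $\mathcal{F}_0 \subseteq \dotsb \subseteq \mathcal{F}_n$ be the underlying filtration with respect to which $Z_0, \dotsc, Z_n$ is a submartingale, and set $D_t \triangleq Z_t - Z_{t-1}$. By the submartingale property $\bE[D_t \mid \mathcal{F}_{t-1}] \ge 0$, and by hypothesis $\abs{D_t} \le d$, so $D_t \in [-d, d]$. First I would fix any $\lambda > 0$ and apply Markov's inequality to the random variable $e^{-\lambda (Z_n - Z_0)}$, giving
\[
\Pr[Z_n - Z_0 \le -\eps] = \Pr\!\left[e^{-\lambda (Z_n - Z_0)} \ge e^{\lambda \eps}\right] \le e^{-\lambda \eps}\, \bE\!\left[e^{-\lambda (Z_n - Z_0)}\right].
\]

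Next I would control the exponential moment by peeling off one increment at a time. Writing $Z_n - Z_0 = \sum_{t=1}^n D_t$, I condition on $\mathcal{F}_{n-1}$ and use the tower rule:
\[
\bE\!\left[e^{-\lambda (Z_n - Z_0)}\right] = \bE\!\left[e^{-\lambda \sum_{t=1}^{n-1} D_t}\, \bE\!\left[e^{-\lambda D_n} \mid \mathcal{F}_{n-1}\right]\right].
\]
The core step is bounding the inner conditional expectation via Hoeffding's lemma: for a random variable taking values in an interval of length $2d$, the conditional moment generating function of $-\lambda D_n$ satisfies
\[
\bE\!\left[e^{-\lambda D_n} \mid \mathcal{F}_{n-1}\right] \le \exp\!\left(-\lambda\, \bE[D_n \mid \mathcal{F}_{n-1}] + \frac{\lambda^2 d^2}{2}\right).
\]
Here I would invoke the submartingale property: since $\bE[D_n \mid \mathcal{F}_{n-1}] \ge 0$ and $\lambda > 0$, the mean term is nonpositive and can be dropped, yielding $\bE[e^{-\lambda D_n} \mid \mathcal{F}_{n-1}] \le e^{\lambda^2 d^2 / 2}$ pointwise. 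Iterating this argument down the filtration (or arguing by induction on $n$) gives $\bE[e^{-\lambda (Z_n - Z_0)}] \le e^{n \lambda^2 d^2 / 2}$.

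Finally I would combine the two displays and optimize the free parameter:
\[
\Pr[Z_n - Z_0 \le -\eps] \le \exp\!\left(-\lambda \eps + \frac{n \lambda^2 d^2}{2}\right),
\]
and the right-hand side is minimized at $\lambda = \eps / (n d^2)$, which substitutes to the claimed bound $\exp(-\eps^2 / (2 d^2 n))$. The main obstacle, and the step I would be most careful about, is the Hoeffding-lemma inequality together with the correct use of the submartingale direction: because we are bounding the \emph{lower} tail with $\lambda > 0$, the coefficient on the conditional mean is $-\lambda < 0$, so the nonnegativity $\bE[D_t \mid \mathcal{F}_{t-1}] \ge 0$ is exactly what lets us discard that term; for the upper tail the inequality would go the wrong way, and only a supermartingale (or a genuine martingale) would work. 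Everything else is a routine telescoping and one-variable optimization.
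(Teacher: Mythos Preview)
Your argument is correct and is the standard Chernoff--Hoeffding proof of Azuma's inequality; in particular, you correctly observe that for the lower tail of a submartingale the sign on the conditional-mean term works in your favor. Note, however, that the paper does not actually prove this lemma: it is listed in the appendix as a standard probabilistic tool without proof, so there is no ``paper's own proof'' to compare against.
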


\begin{definition}[geometric random variable; the failure model]
	\label{def:geo}
	Let $p \in [0,1]$. If a random variable $X$ with support $\mathbb{Z}^+$ satisfies $\Pr[X = k] = (1 - p)^{k} p$ for any integer $k \ge 0$, then we say $X$ follows the geometrical distribution with parameter $p$, denoted by $X \sim \Geo(p)$. 
\end{definition}

The following lemma gives the concentration result for sum of geometric random variables with a {\em multiplicative} error term.
\begin{lemma}[\cite{Janson18}]
	\label{lem:geo-multi}
	Let $p \ge 0$, $\lambda \ge 1$, and $X_1, \dotsc, X_n$	be i.i.d.\ random variables from distribution $\Geo(1 / (1 + p))$. We have
	\begin{equation}
	\Pr\left[\sum_{i = 1}^n (X_i + 1) \ge \lambda n (1 + p)\right] \le \exp(- n (\lambda - 1 - \ln \lambda))\,.
	\end{equation}
\end{lemma}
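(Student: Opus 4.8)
The plan is to prove the bound by the standard Chernoff (exponential Markov) method applied to the shifted variables $Y_i \triangleq X_i + 1$, followed by an explicit optimization over the tilting parameter. First I would observe that, under the failure model of Definition~\ref{def:geo}, a variable $X_i \sim \Geo(1/(1+p))$ has $\bE[X_i] = p$, so $Y_i = X_i + 1$ is an ordinary geometric random variable supported on $\{1, 2, \dots\}$ with success probability $q = 1/(1+p)$ and mean $\bE[Y_i] = 1 + p$. Thus the event $\{\sum_i (X_i+1) \ge \lambda n (1+p)\}$ is exactly an upper-tail deviation of $\sum_i Y_i$ above $\lambda$ times its mean, and it suffices to control this one-sided tail. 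The cases $\lambda = 1$ (right-hand side $= 1$) and $p = 0$ (where $X_i \equiv 0$ deterministically) are degenerate and checked directly, so I would assume $\lambda > 1$ and $p > 0$ throughout.

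The core computation is the moment generating function of $Y_i$. A direct summation gives, for $t < \ln(1/(1-q))$,
\[
\bE\!\left[e^{t Y_i}\right] = \frac{q e^t}{1 - (1-q) e^t} = \frac{e^t}{(1+p) - p e^t}.
\]
By independence, $\bE[\exp(t \sum_i Y_i)] = (\bE[e^{tY_i}])^n$, and Markov's inequality applied to the exponential yields, for every admissible $t > 0$,
\[
\Pr\!\left[\textstyle\sum_i Y_i \ge \lambda n (1+p)\right] \le \exp\!\big(n F(t)\big), \quad F(t) = t - \ln\!\big((1+p) - p e^t\big) - t \lambda (1+p).
\]
I would then minimize $F$ over $t$. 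Solving $F'(t) = 0$ gives $p e^{t^*} = (1+p) - 1/\lambda$, hence $(1+p) - p e^{t^*} = 1/\lambda$ and $t^* = \ln\!\big(1 + \tfrac{\lambda-1}{\lambda p}\big) > 0$, which one checks is admissible. Substituting back collapses the logarithm to $\ln \lambda$ and leaves
\[
F(t^*) = \ln \lambda - t^* \big(\lambda(1+p) - 1\big).
\]

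The main obstacle --- and the only nontrivial step --- is to show that this $p$-dependent optimal exponent dominates the clean $p$-independent target, i.e.\ $F(t^*) \le -(\lambda - 1 - \ln \lambda)$. Writing $c \triangleq \tfrac{\lambda - 1}{\lambda p} > 0$, so that $t^* = \ln(1+c)$ and $\lambda(1+p) - 1 = (\lambda - 1)\tfrac{1+c}{c}$, the desired inequality reduces after dividing by $\lambda - 1 > 0$ to the elementary scalar inequality
\[
(1+c) \ln(1+c) \ge c \qquad (c \ge 0).
\]
This follows by noting both sides vanish at $c = 0$ and comparing derivatives: the derivative of the left side is $1 + \ln(1+c)$, which exceeds the derivative $1$ of the right side for all $c > 0$. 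Combining the displays gives $\Pr[\sum_i Y_i \ge \lambda n (1+p)] \le \exp(-n(\lambda - 1 - \ln \lambda))$, as claimed. As a sanity check, equality in $(1+c)\ln(1+c) \ge c$ is approached as $c \to 0$ (i.e.\ $p \to \infty$), matching the fact that, after rescaling by $1+p$, $Y_i$ converges to an $\mathrm{Exp}(1)$ variable whose normalized sum is $\mathrm{Gamma}(n,1)$ with upper-tail Chernoff bound exactly $\exp(-n(\lambda - 1 - \ln\lambda))$; this explains why the stated bound is tight and free of $p$.
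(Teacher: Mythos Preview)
Your Chernoff--MGF argument is correct: the MGF computation, the optimization yielding $(1+p)-pe^{t^*}=1/\lambda$, and the reduction to $(1+c)\ln(1+c)\ge c$ all check out, and the degenerate cases $\lambda=1$, $p=0$ are handled. The paper itself does not prove this lemma at all; it is imported as a black-box result from \cite{Janson18}, so there is no in-paper argument to compare against. Your derivation is essentially the standard Cram\'er--Chernoff proof specialized to geometric variables (and is in the spirit of Janson's original), with the pleasant observation that the $p$-dependence drops out thanks to the elementary inequality above; the asymptotic sanity check against the $\mathrm{Gamma}(n,1)$ tail is a nice touch but not needed for the proof.
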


In our analysis we need the following concentration result for sum of geometric random variables with an {\em additive} error term.

\begin{lemma}
	\label{lem:geo-additive}
	Let $p \in [0, 1]$, $t \in [0, 1]$, and $X_1, \ldots, X_n$ be i.i.d.\ random variables from distribution $\Geo(1 / (1 + p))$. We have
	\begin{equation}
	\Pr\left[\abs{\frac{1}{n} \sum_{i=1}^n (X_i - p)} \ge t\right] \le 2\exp\left(- \frac{n t^2}{8}\right).
	\end{equation}
\end{lemma}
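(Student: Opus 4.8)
The plan is to remove the unboundedness of the geometric variables by passing to a waiting-time (negative binomial) representation, and then to apply Hoeffding's inequality (Lemma~\ref{lem:chernoff}) to an associated Binomial. Concretely, let $Y_1, Y_2, \dots$ be i.i.d.\ Bernoulli trials with success probability $q = 1/(1+p)$, and let $M$ be the index of the trial on which the $n$-th success occurs. Then $M - n$ has the same distribution as $\sum_{i=1}^n X_i$, and since $\bE[X_i] = (1-q)/q = p$, the event $\frac1n\sum_{i=1}^n(X_i - p) \ge t$ is exactly $M \ge n(1+p+t)$, while $\frac1n\sum_{i=1}^n(X_i - p) \le -t$ is exactly $M \le n(1+p-t)$. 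The benefit of this reformulation is that events about $M$ become events about a Binomial: $\{M \ge m\} = \{\text{the first } m-1 \text{ trials contain at most } n-1 \text{ successes}\}$ and $\{M \le m\} = \{\text{the first } m \text{ trials contain at least } n \text{ successes}\}$. I would then bound each of the two one-sided probabilities by $\exp(-nt^2/8)$ and finish with a union bound.

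For the upper tail, set $m_0 = \lceil n(1+p+t)\rceil$ (legitimate since $M$ is integer-valued), so the event becomes $\{\mathrm{Bin}(m_0-1,q) \le n-1\}$. Using $m_0 \ge n(1+p+t)$ and $q = 1/(1+p)$, a short computation gives $(m_0-1)q - (n-1) \ge nt/(1+p) > 0$, so we are in the lower-deviation regime and Hoeffding's inequality bounds the probability by $\exp\!\big(-2(nt/(1+p))^2/(m_0-1)\big)$; then $m_0 - 1 < n(1+p+t)$ together with $p,t \le 1$ (hence $(1+p)^2 \le 4$ and $1+p+t \le 3$) makes this at most $\exp(-nt^2/6) \le \exp(-nt^2/8)$. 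The lower tail is easier: if $p < t$ the event $\sum_i X_i \le n(p-t)$ is impossible because each $X_i \ge 0$; otherwise, with $m_1 = \lfloor n(1+p-t)\rfloor$ the event becomes $\{\mathrm{Bin}(m_1,q) \ge n\}$, and since $\bE[\mathrm{Bin}(m_1,q)] = m_1 q \le n - nt/(1+p)$ while $m_1 \le n(1+p) \le 2n$, Hoeffding gives a bound of $\exp\!\big(-2(nt/(1+p))^2/m_1\big) \le \exp(-nt^2/4) \le \exp(-nt^2/8)$. Adding the two sides yields $2\exp(-nt^2/8)$.

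The main obstacle is conceptual rather than computational: $\sum_i X_i$ is unbounded and only sub-exponential, so neither Hoeffding's nor Azuma's inequality applies to it directly, and the multiplicative bound of Lemma~\ref{lem:geo-multi}, when specialized to an additive deviation via $\lambda = 1 + t/(1+p)$, only gives a rate of roughly $t^2/16$ rather than $t^2/8$ in the exponent. The waiting-time representation is exactly what circumvents this, trading an unbounded sum of geometrics for a bounded sum of Bernoullis; the remaining work — keeping track of the floors/ceilings and the $(1+p)$ factors so that the constant lands at $8$ — is routine, and it works precisely because $t \le 1$ and $\bE[X_i] = p \le 1$, so the deviations in question lie in the Gaussian regime. (An alternative is to estimate the moment generating function $\bE[e^{\lambda(X_i-p)}] = \big(1 + p(1-e^\lambda)\big)^{-1} e^{-\lambda p}$ directly and optimize over $\lambda$, but the Binomial reduction keeps the constants transparent and reuses a tool already stated in the paper.)
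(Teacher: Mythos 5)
Your proof is correct, and it takes a genuinely different route from the paper's. The paper proves this lemma by invoking a multiplicative concentration bound for sums of geometric variables imported from an external reference (its Lemma~\ref{lem:geo-multi-2}), instantiated with $\lambda = 1 \pm t/p$, together with a separate hand argument for the lower tail when $t \ge p$ (where the event forces $\sum_i X_i = 0$). You instead use the waiting-time duality: $\sum_{i=1}^n X_i \stackrel{d}{=} M - n$ where $M$ is the index of the $n$-th success in i.i.d.\ Bernoulli$(1/(1+p))$ trials, so that both tails become binomial tail events to which Hoeffding's inequality (Lemma~\ref{lem:chernoff}) applies directly; your bookkeeping of the ceilings/floors and the $(1+p)$ factors is accurate, and the resulting exponents $nt^2/6$ and $nt^2/4$ both dominate the claimed $nt^2/8$. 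What your approach buys is self-containment --- it uses only a tool already stated in the paper and sidesteps the external lemma entirely, and it also avoids the paper's case split $t \ge p$ versus $t < p$ on the lower tail except for the trivial impossibility observation. What the paper's approach buys is brevity, conditional on having the imported lemma available (it is also reused elsewhere, in Lemma~\ref{lem:geo-multi} form, for the pull-count analysis of \ExploreS). Incidentally, your reduction quietly repairs a small blemish in the paper's own argument: in its case $t \ge p$ the paper writes $\Pr[X_i = 0] = p/(1+p)$, whereas under Definition~\ref{def:geo} this probability is $1/(1+p)$, so the paper's intermediate bound $2^{-n}$ does not literally follow (though the final inequality still holds since $(1+p)^{-n} \le (1+t)^{-n} \le e^{-nt^2/8}$ for $t \le 1$); your argument never touches this step.
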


\begin{proof}
We use the following lemma to derive Lemma~\ref{lem:geo-additive}.
\begin{lemma}[\cite{JLWZ19}]
\label{lem:geo-multi-2}
	Let $p > 0$ and $X_1, \dotsc, X_n$ be i.i.d. random variables from $\Geo(1 / (1 + p))$, then for $\lambda \in (0, 1]$ we have
	\begin{equation}
	\label{eq:geo-1}
	\Pr\left[\frac{1}{n}\sum_{i = 1}^n X_i \le \lambda p\right] \le \exp\left(-n \cdot \frac{p(\lambda - 1)^2}{2 (1 + p)}\right)\,
	\end{equation}
	for $\lambda \in [1, 2)$
	\begin{equation}
	\label{eq:geo-2}
	\Pr\left[\frac{1}{n}\sum_{i = 1}^n X_i \ge \lambda p\right] \le \exp\left(-n \cdot \frac{p(\lambda - 1)^2}{4 (1 + p)}\right)\,
	\end{equation}
	and for $\lambda \ge 2$ 
	\begin{equation}
	\label{eq:geo-3}
	\Pr\left[\frac{1}{n} \sum_{i = 1}^n X_i \ge \lambda p \right] \le \exp\left(-n \cdot \frac{p(\lambda - 1)}{4 (1 + p)}\right)\,.
	\end{equation}
\end{lemma}

Note that the lemma holds trivially for $p = 0$. We thus focus on the case $p > 0$.  We first show  
\begin{equation}
\label{eq:i-1}
\Pr\left[\frac{1}{n}\sum_{i = 1}^n (X_i - p) \le -t \right] \le \exp\left(- \frac{n t^2}{8}\right).
\end{equation} 
We analyze in two cases.
\begin{enumerate}
\item If $1 \ge t \ge p$, then we have 

\begin{eqnarray*}
&&\Pr\left[\frac{1}{n}\sum_{i = 1}^n (X_i - p) \le -t\right] \\
&\le& \Pr\left[\sum_{t=1}^n X_i = 0\right] = \prod_{i = 1}^n \Pr[X_i = 0] \\
&=& \left({\frac{p}{1 + p}}\right)^n \le 2^{-n} 
\le \exp\left(- \frac{n t^2}{8}\right).
\end{eqnarray*}

\item If $t < p \le 1$, then by (\ref{eq:geo-1}), setting $\lambda = 1 - \frac{t}{p}$, we have
\begin{eqnarray*}
\Pr\left[\frac{1}{n}\sum_{i = 1}^n (X_i - p) \le -t\right] 
&\le&  \exp\left(- \frac{np\cdot(t/p)^2}{2 (1 + p)}\right) \\
&\le& \exp\left(- \frac{n t^2}{8}\right).
\end{eqnarray*}
\end{enumerate} 
We next show 
\begin{equation}
\label{eq:i-2}
\Pr\left[\frac{1}{n}\sum_{i = 1}^n (X_i - p) \ge t \right] \le \exp\left(- \frac{n t^2}{8}\right).
\end{equation} 
We analyze in two cases.
\begin{enumerate}
\item If $t \le p$, then by (\ref{eq:geo-2}), setting $\lambda = 1 + \frac{t}{p}$, we have
\begin{eqnarray*}
	\Pr\left[\frac{1}{n}\sum_{i = 1}^n (X_i - p) \ge t\right] &\le& \exp\left(-\frac{np \cdot (t/p)^2}{4 (1 + p)}\right) \\
	&\le& \exp\left(- \frac{n t^2}{8}\right).
\end{eqnarray*}
	
\item If $p < t \le 1$, then by (\ref{eq:geo-3}), setting $\lambda = 1 + \frac{t}{p}$, we have
\begin{eqnarray*}
	\Pr\left[\frac{1}{n}\sum_{i = 1}^n (X_i - p) \ge t\right] &\le& \exp\left(-\frac{np \cdot (t/p)^2}{4 (1 + p)}\right) \\
	&\le&  \exp\left(- \frac{n t^2}{8}\right).
\end{eqnarray*}
\end{enumerate}
\end{proof}

\subsection{Proof of Observation~\ref{ob:reward}}
\label{app:proof-ob-reward}
\begin{proof}
	The observation follows directly from the definition of expected reward (Definition~\ref{def:reward}). That is, $R(S, \v) \ge \theta$ means $\frac{\sum_{i \in S} r_i v_i}{1 + \sum_{i \in S}v_i} \ge \theta$, which implies $\sum_{i\in S} (r_i - \theta)v_i \ge \theta$. The other direction can be shown similarly.
\end{proof}

\subsection{Proof of Lemma~\ref{lem:opt}}
\label{app:proof-lem-opt}

The following is an easy observation by the definition of $\Top(I, \v, \theta)$.
\begin{observation}\label{ob:top}
	For any $S \subseteq I$ of size at most $K$ and any $\theta \in [0, 1]$, it holds that
	\begin{equation*}
	\sum_{i \in S}(r_i - \theta)v_i \le \sum_{i \in \Top(I, \v, \theta)}(r_i - \theta)v_i\,.
	\end{equation*}
\end{observation}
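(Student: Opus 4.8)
The plan is to recognize $\Top(I,\v,\theta)$ as a maximizer of the set function $T \mapsto \sum_{i \in T}(r_i-\theta)v_i$ over all $T \subseteq I$ with $|T| \le K$; the desired inequality then follows at once because the given $S$ is one such subset.

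To carry this out, first I would discard negative contributions: writing $S^+ = \{i \in S : (r_i-\theta)v_i > 0\}$ we have $\sum_{i \in S}(r_i-\theta)v_i \le \sum_{i \in S^+}(r_i-\theta)v_i$ and $|S^+| \le |S| \le K$. Next I would observe, straight from Definition~\ref{def:top}, that $\Top(I,\v,\theta)$ is exactly the set of the $\min\{K,|I^+|\}$ items of largest value $(r_i-\theta)v_i$ among $I^+ = \{i \in I : (r_i-\theta)v_i > 0\}$ (taking the $\min\{K,|I|\}$ largest values over all of $I$ and then deleting the non-positive ones yields precisely this). Then I split into two cases. If $|I^+| \le K$, then $S^+ \subseteq I^+ = \Top(I,\v,\theta)$ and every index in $\Top(I,\v,\theta) \setminus S^+$ contributes a positive amount, so $\sum_{i \in S^+}(r_i-\theta)v_i \le \sum_{i \in \Top(I,\v,\theta)}(r_i-\theta)v_i$. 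If $|I^+| > K$, then $\Top(I,\v,\theta)$ is a set of the $K$ largest values among $\{(r_i-\theta)v_i : i \in I^+\}$, so the sum over any subset of $I^+$ of size at most $K$ — in particular $S^+$ — is at most the sum over $\Top(I,\v,\theta)$. Chaining the inequalities proves the claim.

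I do not anticipate a genuine obstacle here, since this is the ``easy observation'' announced in the text. The only spot needing care is lining up the two descriptions of $\Top$ — Definition~\ref{def:top}'s ``$\min\{K,|I|\}$ largest values, then drop the non-positive ones'' versus the ``largest positive values'' formulation used above — which I would state explicitly; ties among the values $(r_i-\theta)v_i$ are broken by the fixed convention of Definition~\ref{def:top} and do not change the value of any of the sums involved, so they can be safely ignored.
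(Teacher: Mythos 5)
Your proof is correct and matches the paper's (implicit) reasoning: the paper states Observation~\ref{ob:top} without proof as an immediate consequence of Definition~\ref{def:top}, and your argument --- drop the non-positive terms of $S$, identify $\Top(I,\v,\theta)$ as the $\min\{K,|I^+|\}$ largest positive values, and split on whether $|I^+|\le K$ --- is exactly the standard way to make that precise. The tie-breaking and sign-filter details you flag are handled correctly and do not affect any of the sums.
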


The following claim gives a crucial property of $\Top(I, \v, \theta)$.  Lemma~\ref{lem:opt} follows immediately from this claim.

\begin{claim}
\label{cla:top}
For any $\theta \in [0, 1]$,
$\theta \le \theta_\v$  if and only if $\sum\limits_{i \in \Top(I, \v, \theta)} (r_i - \theta)v_i \ge \theta$.
\end{claim}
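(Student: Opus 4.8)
The plan is to prove Claim~\ref{cla:top} by relating the quantity $\sum_{i \in \Top(I,\v,\theta)}(r_i-\theta)v_i$ to the question of whether some assortment achieves reward at least $\theta$, using Observation~\ref{ob:reward} and Observation~\ref{ob:top} as the two main tools. First I would establish the forward direction: suppose $\theta \le \theta_\v$, i.e. there exists an assortment $S$ with $|S| \le K$ and $R(S,\v) \ge \theta$. By Observation~\ref{ob:reward}, this is equivalent to $\sum_{i \in S}(r_i - \theta)v_i \ge \theta$. Now apply Observation~\ref{ob:top} with this $S$: we get $\sum_{i \in \Top(I,\v,\theta)}(r_i-\theta)v_i \ge \sum_{i \in S}(r_i-\theta)v_i \ge \theta$, which is exactly the right-hand condition. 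Note that here one should be slightly careful when $\theta = 0$, but then the inequality reads $\sum(r_i-0)v_i \ge 0$, which holds trivially since $\Top$ discards all items with $r_i \le \theta$, so every surviving term is nonnegative; in fact it is cleanest to handle $\theta = 0$ separately at the outset, as both sides of the iff are then immediate.

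For the reverse direction, suppose $\sum_{i \in \Top(I,\v,\theta)}(r_i-\theta)v_i \ge \theta$. Let $T = \Top(I,\v,\theta)$; by definition $|T| \le K$. I would then invoke Observation~\ref{ob:reward} in the other direction applied to the set $T$: the inequality $\sum_{i\in T}(r_i-\theta)v_i \ge \theta$ gives $R(T,\v) \ge \theta$. Since $\theta_\v = \max_{S : |S| \le K} R(S,\v) \ge R(T,\v) \ge \theta$, we conclude $\theta \le \theta_\v$, as desired. This completes the equivalence.

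Finally, I would note how Lemma~\ref{lem:opt} follows from the claim: taking $\theta = \theta_\v$ in Claim~\ref{cla:top}, the ``if'' part shows $\sum_{i \in \Top(I,\v,\theta_\v)}(r_i-\theta_\v)v_i \ge \theta_\v$, hence by Observation~\ref{ob:reward} the set $\Top(I,\v,\theta_\v)$ has reward at least $\theta_\v$ and therefore is an optimal assortment; one then argues it is the optimal assortment of smallest cardinality (matching Definition~\ref{def:Sv}), since any item with $(r_i-\theta_\v)v_i \le 0$ can only decrease the reward and $\Top$ already excludes all such items and keeps only the top $K$ by value. The main obstacle — really the only subtle point — is the careful handling of ties and boundary cases: ensuring the inequalities are the right (non-strict) direction so that $\Top$ is genuinely a maximizer of $\sum_{i\in S}(r_i-\theta)v_i$ over $|S|\le K$ even when several items share the same value $(r_i-\theta)v_i$, and confirming that the edge case $\theta \in \{0,1\}$ does not break either direction. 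Everything else is a direct application of the two observations.
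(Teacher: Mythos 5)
Your proof is correct and takes essentially the same route as the paper's: the forward direction is identical (Observation~\ref{ob:reward} applied to an optimal assortment, then Observation~\ref{ob:top}), and your reverse direction is simply the paper's contrapositive argument run directly --- you apply Observation~\ref{ob:reward} to $T=\Top(I,\v,\theta)$ and conclude $\theta_\v \ge R(T,\v) \ge \theta$, whereas the paper assumes $\theta>\theta_\v$ and derives $\sum_{i\in\Top(I,\v,\theta)}(r_i-\theta)v_i<\theta$. There are no gaps; the extra remarks on $\theta=0$ and on deducing Lemma~\ref{lem:opt} are consistent with the paper.
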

\begin{proof}
	First consider the case $\theta \le \theta_\v$. By Observation~\ref{ob:reward}, $R(S_\v, \v) = \theta_\v \ge \theta$ implies $\sum_{i \in S_\v}(r_i - \theta)v_i \ge \theta$. Then by Observation~\ref{ob:top} we have $\sum_{i \in \Top(I, \v, \theta)} (r_i - \theta)v_i \ge \sum_{i \in S_\v}(r_i - \theta)v_i \ge \theta$.
	
	Next consider the case $\theta > \theta_\v$.  For any $S \subseteq I$ with $\abs{S} \le K$, by the definition of $\theta_\v$ we have $R(S, \v) \le \theta_\v$. Then by Observation~\ref{ob:reward} we have $\sum_{i \in S} (r_i - \theta) v_i \le \theta_\v < \theta$ for {\em any} $S \subseteq I$ with $\abs{S} \le K$.  Consequently, we have $\sum_{i \in \Top(I, \v, \theta)} (r_i - \theta) v_i < \theta$.
\end{proof}

\subsection{Proof of Lemma~\ref{lem:mono}}
\label{app:proof-lem-mono}

\begin{proof}
	If $\v \preceq \w$, then by definition of $S_{\w}$ we have
	\(
	R(S_\w, \w) \ge R(S_\v, \w)
	\), and
	
	\begin{eqnarray*}
		\sum_{i \in S_\v}(r_i - \theta_\v)w_i &\ge& \sum_{i \in S_\v}(r_i - \theta_\v)v_i   \quad \text{($\v \preceq \w$)} \\
		&\ge& \theta_\v \ \  \text{($R(S_\v, \v) \ge \theta_\v$ and Observation~\ref{ob:reward})} \\ 
		&=& R(S_\v, \v).
	\end{eqnarray*}
We thus have $R(S_{\w}, \w) \ge R(S_{\v}, \w) \ge R(S_{\v}, \v)$.
	
	The second part of the lemma is due to the following simple calculation. Recall that $r_i \in (0,1]$ for any $i \in S$.
	\begin{equation*}
	R(S, \w) - R(S, \v) \le \frac{\sum_{i \in S} r_i (w_i - v_i)}{1 + \sum_{i \in S}v_i} \le \sum_{i \in S} (w_i - v_i)\,.
	\end{equation*}
\end{proof}


\section{Proof of Lemma~\ref{lem:E1}}
\label{app:proof-lem-E1}
\begin{proof}
	The output of $\Explore(i)$ is a Bernoulli random variable with mean $x_i = \frac{1}{1+v_i}$. By Hoeffding's inequality (Lemma~\ref{lem:chernoff}) we have
	\begin{equation*}
	\Pr\left[\abs{x^{(\tau)}_i - x_i} \ge \frac{\epsilon_{\tau}}{8}\right] \le 2\exp\left(-\frac{\epsilon^2_{\tau}T_{\tau}}{32} \right) \le \frac{\delta}{8 N (\tau + 1)^2}.
	\end{equation*}
	By a union bound we have 
	\begin{eqnarray}
	&& \Pr\left[\forall{\tau \ge 0}, \forall{i \in I_\tau} : \abs{x^{(\tau)}_i - x_i} < \frac{\epsilon_{\tau}}{8}\right] \nonumber \\
	&\ge& 1 - \sum_{\tau = 0}^{\infty} \sum_{i \in I_\tau} \frac{\delta}{8 N (\tau + 1)^2}  \ge 1 - \delta.
	\label{eq:b-2}
	\end{eqnarray}
	Since at Line~\ref{ln:c-1} of Algorithm~\ref{alg:basic} we have set $v^{(\tau)}_i = \frac{1}{x^{(\tau)}_i} - 1$, with probability $(1 - \delta)$ we have
	\begin{eqnarray*}
		\abs{v_i^{(\tau)} - v_i} &=& \abs{\frac{1}{x^{(\tau)}_i} - \frac{1}{x_i}} = \abs{\frac{x_i - x^{(\tau)}_i}{x^{(\tau)}_i x_i}}\\
		&\le& \frac{\eps_\tau/8}{x^{(\tau)}_i x_i} \quad (\text{holds with prob.\ $(1 - \delta)$ by (\ref{eq:b-2})})\\
		&\le& \frac{\eps_\tau/8}{1/2 \cdot 3/8} \\
		&<& \eps_\tau,
	\end{eqnarray*}
	where the second inequality holds since (i) $x_i = \frac{1}{1+v_i} \ge 1/2$ given $v_i \in [0, 1]$, and (ii) $x^{(\tau)}_i \ge 3/8$ given $\abs{x^{(\tau)}_i - x_i} < {\epsilon_{\tau}}/{8} < 1/8$.
\end{proof}

\section{Proof of Theorem~\ref{thm:improve}}
\label{app:proof-thm-improve}

First, we have the following two observations for the procedure $\ExploreS$. 
\begin{observation}[\cite{AAGZ19}]
\label{ob:geo}
	For any $i \in S$, $f_i \sim \Geo(1 / (1 + v_i))$.
\end{observation}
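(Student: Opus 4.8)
The plan is to analyze a single invocation of $\ExploreS(S)$ and compute the law of $f_i$ directly. Recall that each iteration of the repeat loop offers the fixed assortment $S$ and, independently of the past, returns outcome $a \in S \cup \{0\}$ with probability $p_a(S)$; in particular $p_i(S) = v_i / (1 + \sum_{j \in S} v_j)$ and $p_0(S) = 1 / (1 + \sum_{j \in S} v_j)$. The loop terminates the first time $a = 0$, and $f_i$ records how many iterations returned $a = i$ before that termination.

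The cleanest route is a thinning argument. Consider the infinite i.i.d.\ sequence of outcomes one would obtain by offering $S$ forever, and delete from it every outcome lying in $S \setminus \{i\}$. Since deleting outcomes from an i.i.d.\ sequence according to membership in a fixed subset of the alphabet again yields an i.i.d.\ sequence governed by the conditional law, the retained subsequence is i.i.d.\ over $\{i, 0\}$ with $\Pr[\text{outcome} = i] = \frac{p_i(S)}{p_i(S) + p_0(S)} = \frac{v_i}{1 + v_i}$ and $\Pr[\text{outcome} = 0] = \frac{1}{1+v_i}$. Because $0$ is retained, the first $0$ in the original sequence is exactly the first $0$ in the retained subsequence, and the deleted outcomes contribute nothing to $f_i$; hence $f_i$ equals the number of $i$'s appearing before the first $0$ in this retained subsequence. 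That count is the number of Bernoulli successes (success probability $\frac{v_i}{1+v_i}$) before the first failure, so $\Pr[f_i = k] = \left(\frac{v_i}{1+v_i}\right)^k \cdot \frac{1}{1+v_i}$ for every integer $k \ge 0$. Comparing with Definition~\ref{def:geo} (taking $p = 1/(1+v_i)$, so $1 - p = v_i/(1+v_i)$) gives $f_i \sim \Geo(1/(1+v_i))$, as claimed. Separately, distinct invocations of $\ExploreS$ use fresh, independent pulls, so the $f_i$ values collected across invocations are independent — a fact the later concentration arguments will need.

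If one prefers to avoid appealing to properties of thinned i.i.d.\ sequences, the same conclusion follows by a direct summation: writing $Z = 1 + \sum_{j \in S} v_j$ and $u = \sum_{j \in S \setminus \{i\}} v_j$, a terminating run with $f_i = k$ consists of some $m \ge 0$ iterations returning items of $S \setminus \{i\}$ and $k$ iterations returning $i$ in any order, followed by a single $0$; summing $\binom{k+m}{k}(v_i/Z)^k (u/Z)^m (1/Z)$ over $m \ge 0$ and using $\sum_{m \ge 0}\binom{k+m}{k}x^m = (1-x)^{-(k+1)}$ with $x = u/Z < 1$ and $1 - x = (1+v_i)/Z$ yields the same formula $\Pr[f_i = k] = v_i^k/(1+v_i)^{k+1}$.

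The statement is elementary, so there is no serious obstacle; the only point requiring a little care is the justification that discarding the ``other-item'' outcomes disturbs neither the conditional law of the retained outcomes nor the location of the first $0$ — equivalently, in the direct proof, getting the combinatorial bookkeeping (the binomial coefficient, the range of $m$, convergence $x = u/Z < 1$) and the negative-binomial series identity right.
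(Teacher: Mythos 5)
Your proposal is correct. Note that the paper itself gives no proof of this observation: it is imported verbatim from \cite{AAGZ19}, so there is nothing internal to compare against. Both of your derivations are sound and self-contained. The direct summation checks out: with $Z = 1 + \sum_{j \in S} v_j$ and $u = \sum_{j \in S \setminus \{i\}} v_j$, the negative-binomial series gives
\begin{equation*}
\Pr[f_i = k] \;=\; \sum_{m \ge 0} \binom{k+m}{k}\left(\frac{v_i}{Z}\right)^{k}\left(\frac{u}{Z}\right)^{m}\frac{1}{Z} \;=\; \left(\frac{v_i}{Z}\right)^{k}\frac{1}{Z}\left(\frac{Z}{1+v_i}\right)^{k+1} \;=\; \frac{v_i^k}{(1+v_i)^{k+1}},
\end{equation*}
which matches Definition~\ref{def:geo} with $p = 1/(1+v_i)$. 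The thinning argument is the slicker route and is presumably closest in spirit to the original derivation in \cite{AAGZ19}: conditioning each i.i.d.\ outcome on landing in $\{i,0\}$ yields success probability $v_i/(1+v_i)$, and since $0$ is never discarded, the stopping time is unaffected. Your added remark that $f_i$ values from distinct invocations are independent is also correct and is indeed what Lemma~\ref{lem:geo-additive} implicitly relies on in the proof of Lemma~\ref{lem:E-2}.
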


\begin{observation}
\label{ob:explore-time}
	The number of pulls made in $\ExploreS(S)$ is $(X + 1)$ where \(X \sim \Geo(1 / (1 + \sum_{i \in S} v_i))\).
\end{observation}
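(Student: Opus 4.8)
The plan is to reduce the stopping rule of the repeat-until loop in $\ExploreS(S)$ to a sequence of independent Bernoulli trials and then read off the geometric distribution directly from Definition~\ref{def:geo}.

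First I would note that each iteration of the loop offers the \emph{same} assortment $S$, and that the user's choices across iterations are mutually independent draws from the MNL choice model. In any single pull, the feedback equals the null item $0$ (which is exactly the termination condition $a = 0$) with probability
\[
p_0(S) = \frac{v_0}{1 + \sum_{j \in S} v_j} = \frac{1}{1 + \sum_{j \in S} v_j},
\]
where the second equality uses the model convention $v_0 = 1$. Write $q \triangleq p_0(S)$ for this stopping probability; with the complementary probability $1 - q = \frac{\sum_{j \in S} v_j}{1 + \sum_{j \in S} v_j}$ the pull returns some item of $S$ and the loop continues.

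Next I would let $X$ denote the number of pulls whose feedback lies in $S$ (the ``continue'' pulls) before the loop halts, so that the final, terminating pull is the one returning $0$. Since the pulls are independent and each halts with probability $q$, the event $\{X = k\}$ is precisely the event that the first $k$ pulls all return items of $S$ and the $(k+1)$-th pull returns $0$; hence $\Pr[X = k] = (1 - q)^k q$ for every integer $k \ge 0$. This is exactly the defining probability mass function in Definition~\ref{def:geo} with parameter $q$, so $X \sim \Geo(q) = \Geo\!\left(1 / (1 + \sum_{i \in S} v_i)\right)$.

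Finally, the total number of pulls made in $\ExploreS(S)$ is the $X$ continue-pulls together with the single terminating pull, i.e.\ exactly $X + 1$, which is the claim. There is no real obstacle here; the only point requiring care is to align the ``number of failures before the first success'' convention of Definition~\ref{def:geo} with the counting above, so that the geometric parameter is the no-purchase probability $q$ itself (and not its complement), and to invoke $v_0 = 1$ when simplifying $p_0(S)$.
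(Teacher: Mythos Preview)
Your argument is correct and is exactly the straightforward verification one would give: independent pulls, stopping probability $p_0(S)=1/(1+\sum_{i\in S}v_i)$, hence the number of non-terminating pulls is $\Geo(p_0(S))$ in the failure-count convention of Definition~\ref{def:geo}, and the total count is that plus one. The paper itself does not supply a proof for this observation; it is simply stated as evident, so there is nothing further to compare.
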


\paragraph{Correctness.} We define the following event which we will condition on in the rest of the proof. 

\begin{equation}
\E_2 \triangleq \{\forall{\tau \ge 0, i \in I_{\tau}} : \abs{v^{(\tau)}_i - v_i} < \epsilon_{\tau}\}
\end{equation}

We have the following lemma regarding $\E_2$.

\begin{lemma}
	\label{lem:E-2}
	$\Pr[\E_2] \ge 1 - {\delta}/{2}$.
\end{lemma}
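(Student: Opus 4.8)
The plan is to prove $\Pr[\E_2] \ge 1 - \delta/2$ by showing that for each fixed round $\tau$ and each fixed item $i \in I_\tau$, the estimate $v_i^{(\tau)}$ deviates from $v_i$ by at least $\epsilon_\tau$ with probability at most $\frac{\delta}{8 N (\tau+1)^2}$, and then finishing with a union bound exactly as in the proof of Lemma~\ref{lem:E1}: $\sum_{\tau \ge 0} \sum_{i \in I_\tau} \frac{\delta}{8 N (\tau+1)^2} \le \sum_{\tau \ge 0} \frac{\delta}{8 (\tau+1)^2} \le \delta/2$ (using $\sum_{\tau \ge 0}(\tau+1)^{-2} = \pi^2/6 < 2$, hence $\le \delta \cdot \frac{\pi^2/6}{8} < \delta/2$). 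So the real content is the per-item tail bound.

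The key difference from Lemma~\ref{lem:E1} is that $v_i^{(\tau)}$ is now the average of the $f_i$ values returned over all \ExploreS\ calls through round $\tau$ that contain $i$, and by Observation~\ref{ob:geo} each such $f_i \sim \Geo(1/(1+v_i))$ i.i.d.; moreover item $i$ participates in exactly $T_\tau$ such calls by construction (each round $\rho \le \tau$ contributes $T_\rho - T_{\rho-1}$ calls containing $i$, telescoping to $T_\tau$). Here I would invoke Lemma~\ref{lem:geo-additive} with $p = v_i$, $n = T_\tau$, and $t = \epsilon_\tau$: this gives
\begin{equation*}
\Pr\!\left[\abs{v_i^{(\tau)} - v_i} \ge \epsilon_\tau\right] \le 2\exp\!\left(-\frac{T_\tau \epsilon_\tau^2}{8}\right).
\end{equation*}
Plugging in $T_\tau = \lceil \frac{8}{\epsilon_\tau^2}\ln\frac{16 N(\tau+1)^2}{\delta}\rceil \ge \frac{8}{\epsilon_\tau^2}\ln\frac{16 N(\tau+1)^2}{\delta}$ yields $2\exp(-\ln\frac{16 N(\tau+1)^2}{\delta}) = \frac{2\delta}{16 N (\tau+1)^2} = \frac{\delta}{8 N (\tau+1)^2}$, which is precisely the per-item bound needed above. (Note the constant in $T_\tau$ for Algorithm~\ref{alg:improve} is $8$ rather than the $32$ used in Algorithm~\ref{alg:basic}, which is exactly what compensates for the weaker exponent $t^2/8$ in Lemma~\ref{lem:geo-additive} compared to the $2t^2$ Hoeffding bound; it is worth double-checking this arithmetic matches.)

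The main obstacle — and the only real subtlety — is justifying that the collection $\{f_i : i \text{ appears in the call}\}$ aggregated in Line~\ref{ln:e-4} is genuinely a set of $T_\tau$ i.i.d.\ $\Geo(1/(1+v_i))$ random variables, so that Lemma~\ref{lem:geo-additive} applies verbatim. This requires checking that (i) the number of \ExploreS\ calls containing $i$ up through round $\tau$ is deterministically $T_\tau$ (independent of the random outcomes, since the partition sizes and repetition counts are fixed in advance and $i \in I_\rho$ for all $\rho \le \tau$ on the relevant event — or more carefully, conditioned on $i$ surviving to round $\tau$), and (ii) the $f_i$'s across different calls are independent because each \ExploreS\ invocation uses fresh user feedback. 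Once this bookkeeping is in place, the argument is a direct transcription of the Lemma~\ref{lem:E1} proof with Hoeffding replaced by Lemma~\ref{lem:geo-additive}. I would state it concisely, emphasizing the call-counting identity and the substitution of the concentration inequality, and leave the union-bound series as a one-line computation.
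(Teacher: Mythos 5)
Your proposal is correct and follows essentially the same route as the paper's proof: invoke Observation~\ref{ob:geo} together with Lemma~\ref{lem:geo-additive} (with $n = T_\tau$ and $t = \epsilon_\tau$) to get the per-item tail bound $\frac{\delta}{8N(\tau+1)^2}$, then union bound over $i$ and $\tau$. Your extra remarks on the call-counting identity and the independence of the $f_i$'s across \ExploreS\ invocations are bookkeeping the paper leaves implicit, but they do not change the argument.
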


\begin{proof}
	By Observation~\ref{ob:geo} and Lemma~\ref{lem:geo-additive}, we have that for any $\tau \ge 0$ and $i \in I_{\tau}$, it holds that
	\begin{equation}
	\Pr\left[\abs{v^{(\tau)}_i - v_i} \ge \epsilon_{\tau}\right] \le 2 \exp\left(-\frac{\epsilon^2_{\tau} T_{\tau}}{8}\right) \le \frac{\delta}{8 N (\tau + 1)^2} \,.
	\end{equation}
	By a union bound we have
	\begin{eqnarray*}
	\Pr[\bar{\E_2}]  &\le& \sum_{\tau = 0}^{\infty}\Pr\left[\abs{v^{(\tau)}_i - v_i} \ge \epsilon_{\tau}\right] \nonumber \\
	&\le& \sum_{\tau = 0}^{\infty} \sum_{i \in I_{\tau}} \frac{\delta}{8 N (\tau + 1)^2} \le \frac{\delta}{2}\,.
	\end{eqnarray*}
\end{proof}

By the same arguments as that for Theorem~\ref{thm:basic}, we can show that Algorithm~\ref{alg:improve} returns the correct answer given that event $\E_2$ holds. Then by Lemma~\ref{lem:E-2}, Algorithm~\ref{alg:improve} succeeds with probability at least $1 - {\delta}/{2}$.

\paragraph{Pull Complexity.}  Now we turn to the number of pulls that Algorithm~\ref{alg:improve} makes. For any $i \in I$ we again define
\begin{equation}
\tau(i) \triangleq \min \left\{\tau \ge 0 : \epsilon_{\tau} \le \frac{\Delta_i}{32 K} \right\}.
\end{equation}

The following lemma is identical to Lemma~\ref{lem:stop} in the proof for Theorem~\ref{thm:basic}.

\begin{lemma}
	\label{lem:stop-2}
	In Algorithm~\ref{alg:improve}, for any item $i \in I$, we have $i \not\in I_\tau$ for any $\tau > \tau(i)$.
\end{lemma}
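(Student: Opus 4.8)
The plan is to follow the same argument structure used for Lemma~\ref{lem:stop} in the proof of Theorem~\ref{thm:basic}, with the only substantive change being that the preference estimates $v_i^{(\tau)}$ now come from the $\ExploreS$ calls rather than from $\Explore$. Concretely, I would condition on the event $\E_2$ (which by Lemma~\ref{lem:E-2} holds with probability at least $1-\delta/2$), so that for every round $\tau$ and every surviving item $i\in I_\tau$ we have $\abs{v_i^{(\tau)}-v_i}<\epsilon_\tau$, and hence $\a^{(\tau)}\preceq\v^{(\tau)}\preceq\b^{(\tau)}$ together with the per-coordinate bound $\max\{v_i-a_i^{(\tau)},\,b_i^{(\tau)}-v_i\}\le 2\epsilon_\tau$. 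This is exactly the role that $\E_1$ played in Section~\ref{sec:basic}, so all the downstream consequences carry over verbatim.

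The core of the proof is unchanged: for $i\in I\setminus S_\v$, set $\eps=\Delta_i/16$; by the definition of $\tau(i)$ we get $2\epsilon_{\tau(i)}\le \Delta_i/(16K)=\eps/K$, so the hypotheses of Lemma~\ref{lem:prune-2} are met with this $\eps$, and since $\Delta_i=16\eps>8\eps$, Lemma~\ref{lem:prune-2} gives $i\notin I_{\tau(i)+1}$; inductively $i\notin I_\tau$ for all $\tau>\tau(i)$. For $i\in S_\v$, I would invoke Definition~\ref{def:gap} to note that all such $i$ share the common gap $\bar\Delta=\Delta_i\le\min_{j\in I\setminus S_\v}\Delta_j$, define $\bar\tau$ as in \eqref{eq:bar-tau}, observe $\bar\tau=\tau(i)$ for $i\in S_\v$ and $\bar\tau\ge\tau(j)$ for every $j\in I\setminus S_\v$, so that by the end of round $\bar\tau$ every non-optimal item is pruned, giving $\abs{C}\le K$; then Corollary~\ref{cor:mono} yields $\theta_{\b^{(\bar\tau)}}\le\theta_\v+\bar\Delta/16$, hence $r_i-R(C,\b^{(\bar\tau)})\ge\bar\Delta-\bar\Delta/16>0$ for each $i\in C$, so the stopping test at the \texttt{If} line fires and the algorithm returns $C=S_\v$ in round $\bar\tau$. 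Thus no item $i$ survives past round $\tau(i)$.

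Since the statement of Lemma~\ref{lem:stop-2} is literally identical to that of Lemma~\ref{lem:stop} and depends only on the structural facts $\a^{(\tau)}\preceq\v\preceq\b^{(\tau)}$ and $\max\{v_i-a_i^{(\tau)},b_i^{(\tau)}-v_i\}\le 2\epsilon_\tau$ on the conditioning event --- both of which hold under $\E_2$ just as they held under $\E_1$ --- there is essentially no new work here; I would simply write ``The proof is identical to that of Lemma~\ref{lem:stop}, with $\E_2$ in place of $\E_1$.'' The genuinely new difficulty in proving Theorem~\ref{thm:improve} is \emph{not} in this lemma but in bounding the total pull count afterward: each $\ExploreS(S^\tau_j)$ call performs a random number of pulls, namely $X+1$ with $X\sim\Geo\!\big(1/(1+\sum_{i\in S^\tau_j}v_i)\big)$ by Observation~\ref{ob:explore-time}, so converting ``item $i$ is pulled only up to round $\tau(i)$'' into a high-probability bound of $\tilde O(K^2H_2)$ requires the geometric-sum concentration inequalities (Lemmas~\ref{lem:geo-multi} and~\ref{lem:geo-additive}) and a careful accounting of how the $v_i+1/K$ weights and the extra $\max_i 1/\Delta_i^2$ term arise from the partition structure --- that is the step I expect to be the main obstacle, and it is handled separately in the remainder of Appendix~\ref{app:proof-thm-improve}.
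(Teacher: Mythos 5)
Your proposal is correct and matches the paper exactly: the paper itself gives no separate argument for Lemma~\ref{lem:stop-2}, stating only that it is identical to Lemma~\ref{lem:stop}, and your reduction to that proof (conditioning on $\E_2$ in place of $\E_1$, verifying $\a^{(\tau)}\preceq\v\preceq\b^{(\tau)}$ and the $2\epsilon_\tau$ per-coordinate bound, then rerunning the two cases via Lemma~\ref{lem:prune-2} and the stopping test) is precisely what is intended. Your observation that the genuinely new work for Theorem~\ref{thm:improve} lies in the random pull counts of $\ExploreS$, not in this lemma, is also accurate.
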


We next show that Algorithm~\ref{alg:improve} will not make too many pulls in each round. 

The following lemma is a direct consequence of Observation~\ref{ob:explore-time} and Lemma~\ref{lem:geo-multi} (setting $\lambda = 5$).
\begin{lemma}
	\label{lem:round-concentration}
	For any $T > 0$, let random variables $X_t\ (t = 1, \ldots, T)$ be the number of pulls made at the $t$-th call $\ExploreS(S)$. We have
	\begin{equation*}
	\Pr\left[\sum_{t = 1}^{T} X_t \ge 5 \left(1 + \sum_{i \in S} v_i \right) T \right] \le  \exp(-2T)\,.
	\end{equation*}
\end{lemma}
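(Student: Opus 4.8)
The plan is to apply Lemma~\ref{lem:geo-multi} almost verbatim, with $n = T$ and $\lambda = 5$. First I would set $p \triangleq \sum_{i \in S} v_i$ and observe, via Observation~\ref{ob:explore-time}, that the number of pulls $X_t$ made during the $t$-th call to $\ExploreS(S)$ can be written as $X_t = Y_t + 1$ where $Y_t \sim \Geo(1/(1+p))$; since different calls to $\ExploreS(S)$ consume fresh, independent user feedback, the variables $Y_1, \dots, Y_T$ are i.i.d. Hence $\sum_{t=1}^T X_t = \sum_{t=1}^T (Y_t + 1)$ is precisely the sum that Lemma~\ref{lem:geo-multi} controls.

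Next, I would invoke Lemma~\ref{lem:geo-multi} with $\lambda = 5$ (which satisfies $\lambda \ge 1$), which gives
$$
\Pr\left[\sum_{t=1}^T X_t \ge 5 T (1 + p)\right] \le \exp\bigl(-T(5 - 1 - \ln 5)\bigr) = \exp\bigl(-T(4 - \ln 5)\bigr).
$$
Finally, since $5 < e^2$ we have $\ln 5 < 2$, hence $4 - \ln 5 > 2$, so $\exp\bigl(-T(4 - \ln 5)\bigr) \le \exp(-2T)$, which is the claimed bound.

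There is essentially no obstacle here: the only two points needing a (trivial) check are the independence of the $Y_t$'s across the $T$ calls to $\ExploreS(S)$ — immediate from the definition of the subroutine, which restarts from scratch on each call — and the numerical inequality $4 - \ln 5 \ge 2$. Thus the lemma follows directly, exactly as the statement advertises.
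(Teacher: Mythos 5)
Your proof is correct and follows exactly the route the paper takes: the paper also derives this lemma directly from Observation~\ref{ob:explore-time} and Lemma~\ref{lem:geo-multi} with $\lambda = 5$, and your verification that $5 - 1 - \ln 5 = 4 - \ln 5 > 2$ supplies the only arithmetic the paper leaves implicit.
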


For each round $\tau$, applying Lemma~\ref{lem:round-concentration} with $T = T_\tau - T_{\tau-1}$ for each $S \in \{S^{\tau}_1, \ldots, S^\tau_{m_\tau}\}$ we get
\begin{eqnarray*}
	&&\Pr\left[\sum_{t = 1}^{T_{\tau} - T_{\tau - 1}} X_t   \ge  5 \left(1 + \sum_{i \in S} v_i\right)\left(T_{\tau} - T_{\tau - 1}\right)\right] \\
	& \le & \exp(-2(T_{\tau} - T_{\tau - 1})) \\
	& \le & \exp\left(-T_{\tau}\right) \le  \frac{\delta}{8 N (\tau + 1)^2},
\end{eqnarray*}
where in the second inequality we have used the fact $T_\tau - T_{\tau - 1} \ge T_\tau/2$ (by the definition of $T_\tau$). 

By a union bound over $S \in \{S^{\tau}_1, \ldots, S^{\tau}_{m_\tau}\}$ and $\tau \ge 0$, with probability 
\begin{equation}
\label{eq:j-0}
1 - \sum_{\tau \ge 0} \left( m_\tau \cdot \frac{\delta}{8 N (\tau+1)^2} \right) \ge 1 - \frac{\delta}{2},
\end{equation}
the total number of pulls made by Algorithm~\ref{alg:improve} is bounded by
\begin{eqnarray}
&& 5 \sum_{\tau \ge 0, I_{\tau} \neq \emptyset} \left(\left\lceil\frac{\abs{I_{\tau}}}{K}\right\rceil + \sum_{i \in I_{\tau}} v_i\right) \left(T_{\tau} - T_{\tau - 1}\right) \\
&\le& 5  \sum_{\tau \ge 0, I_{\tau} \neq \emptyset} \left(\frac{\abs{I_{\tau}}}{K}+ 1 + \sum_{i \in I_{\tau}} v_i\right) \left(T_{\tau} - T_{\tau - 1}\right) \nonumber \\
&=& 5 \sum_{\tau \ge 0, I_{\tau} \neq \emptyset} \left(T_{\tau} - T_{\tau - 1}\right)\\
&& +
5 \sum_{\tau \ge 0, I_{\tau} \neq \emptyset} \left(\sum_{i \in I_{\tau}} \left(v_i + \frac{1}{K}\right)\right) \left(T_{\tau} - T_{\tau - 1}\right). \label{eq:j-1}
\end{eqnarray}
By Lemma~\ref{lem:stop-2} we know that for any $ \tau > \bar{\tau} \triangleq \max_{i \in I}\{ \tau(i)\}$, it holds that $I_{\tau} = \emptyset$. We thus have
\begin{equation}
\label{eq:j-2}
\sum_{\tau \ge 0, I_{\tau} \neq \emptyset}(T_{\tau} - T_{\tau-1}) \le  T_{\bar{\tau}}.
\end{equation}
Again by Lemma\ref{lem:stop-2} we have
\begin{equation}
\label{eq:j-3}
\sum_{\tau \ge 0}\left(\sum_{i \in I_{\tau}}\left(v_i + \frac{1}{K}\right)\right)(T_{\tau} - T_{\tau - 1})  \le \sum_{i \in I}\left(v_i + \frac{1}{K}\right)T_{\tau(i)}.
\end{equation}

Combining (\ref{eq:j-0}), (\ref{eq:j-1}), (\ref{eq:j-2}), (\ref{eq:j-3}) and Lemma~\ref{lem:E-2}, we have that with probability $1 - (\delta/2 + \delta/2) = 1-\delta$, the total number of pulls made by Algorithm~\ref{alg:improve} is bounded by 
\begin{equation}
\label{eq:j-4}
O \left(T_{\bar{\tau}} + \sum_{i \in I_{\tau}}\left(v_i + \frac{1}{K}\right)T_{\tau(i)}\right)\,.
\end{equation}
By the definitions of $\tau(i)$ and $T_\tau$ we have $$T_{\tau(i)} = O\left(\frac{K^2}{\Delta^2_i} \cdot \ln\left(\frac{N}{\delta} \tau(i) \right)\right),$$ where $\tau(i) = O(\ln(K/\Delta_i)) = O(\ln (KH_2))$. Plugging these values to (\ref{eq:j-4}) we can bound the total number of pulls by $= O\left(K^2 H_2 \ln\left(\frac{N}{\delta}  \ln (KH_2)\right)\right)$.

\paragraph{Running Time.}  The analysis of the running time of Algorithm~\ref{alg:improve} is very similar as that for Algorithm~\ref{alg:basic}. The main difference is that the time complexity for each call of \ExploreS\ is bounded $O(N \beta)$ (instead of $O(\beta)$ for $\Explore$) in the worst case, where $\beta$ is the number of pulls in the call.  This is why the first term in the time complexity in Theorem~\ref{thm:improve} is $NT$ instead of $T$ as that in Theorem~\ref{thm:basic}.  The second term concerning the $\Prune$ subroutine is the same as that in Theorem~\ref{thm:basic}.

\section{Proof of Theorem~\ref{thm:lb} (The Lower Bound)}
\label{app:lb-proof}

We consider the following two input instances.  Let $\delta  \in \left(0, \frac{1}{4 K}\right)$ be a parameter.

\begin{itemize}
\item {\em Instance $I_1$}.  $I_1$ contains $N = K$ items with rewards $r_1 = \ldots = r_{K-1} = 1, r_K = \frac{1 - \delta}{2 - \delta}$, and preferences $v_1 = \ldots = v_{K-1} = \frac{1}{K-1}, v_K = 1$.

\item {\em Instance $I_2$}. $I_2$ contains $N = K$ items with rewards $r_1 = \ldots = r_{K-1} = 1, r_K = \frac{1 - \delta}{2 - \delta}$, and preferences $v_1 = \frac{1}{K-1} - 2\delta, v_2 = \ldots = v_{K-1} = \frac{1}{K-1}, v_K = 1$.
\end{itemize}

Before proving Theorem~\ref{thm:lb}, we first bound the instance complexities of $I_1$ and $I_2$.

\paragraph{Instance complexity of $I_1$.}
The optimal expected reward of $I_1$ is $1/2$, achieved on the set $[K - 1]$. Indeed, all items from $[K - 1]$ should be included in the best assortment since their rewards are all $1$, and this already gives an expected reward of 
$$\sum_{i \in [K-1]} \frac{1 \cdot v_i}{1 + \sum_{j \in [K-1]} v_j} = \frac{1}{2}.$$ 
While the reward of Item $K$ is $\frac{1 - \delta}{2 - \delta} < \frac{1}{2}$, and thus Item $K$ should be excluded in the best assortment. 

By Definition~\ref{def:gap}, we have
\begin{equation*}
		\Delta_K = \frac{1}{2} - \frac{1 - \delta}{2 - \delta} = \frac{2 - \delta - 2 + 2 \delta}{2(2 - \delta)} \ge \frac{\delta}{4}\,.
\end{equation*}
For every $i \in [K-1]$, we have
\begin{equation*}
		\Delta_i = \min\left\{1 - \frac{1}{2}, \Delta_K\right\} = \Delta_K.
\end{equation*} 
We can thus bound 
\begin{equation}
\label{eq:m-2}
	H_2(I_1) = \sum_{i \in [K]} \frac{v_i + 1 / K}{\Delta_i^2} + \max_{i  \in [K-1]}\left\{\frac{1}{\Delta_i^2}\right\} \le \frac{4}{\Delta_K^2} \le \frac{64}{\delta^2}\,.
\end{equation}

\paragraph{Instance complexity of $I_2$.}
The optimal expected reward of $I_2$ is at least that of the assortment $[K]$, which can be bounded as
\begin{equation*}
\left(\sum_{i \in [K-1]} \frac{1 \cdot v_i}{1 + \sum_{j \in [K]} v_j}\right) + \frac{\frac{1 - \delta}{2 - \delta} \cdot v_K}{1 +  \sum_{j \in [K]} v_j} \ge \frac{1 - 2\delta}{2 - 2\delta}.  
\end{equation*}
Thus, for every $i \in [K]$, we have
\begin{equation*}
		\Delta_K \ge \frac{1 - \delta}{2 - \delta} - \frac{1 - 2\delta}{2 - 2\delta} \ge \frac{\delta}{4}\,.
\end{equation*}
We can again bound 
\begin{equation}
	H_2(I_2) = \sum_{i \in [K]} \frac{v_i + 1 / K}{\Delta_i^2} + \max_{i  \in [K]}\left\{\frac{1}{\Delta_i^2}\right\} \le \frac{4}{\Delta_K^2} \le \frac{64}{\delta^2}\,. \label{eq:m-4}
\end{equation}

By (\ref{eq:m-2}) and (\ref{eq:m-4}), to prove Theorem~\ref{thm:lb} it suffices to show the following.

\begin{lemma}
\label{lem:lb-inside}
	Any algorithm that uses less than $\frac{c}{4  \delta^2 K^2}$ pulls for $c < 10^{-4}$  outputs the wrong answer on at least one instance among $I_1$ and $I_2$ with the probability at least $0.4$.
\end{lemma}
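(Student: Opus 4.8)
The plan is to use a standard change-of-measure (Le Cam two-point) argument, exactly tailored to the fact that instances $I_1$ and $I_2$ differ only in the preference of item $1$ (with $v_1$ changing by $2\delta$), and that the best assortment differs between them: on $I_1$ the best assortment is $[K-1]$, whereas on $I_2$ item $K$ enters (the optimal reward on $I_2$ exceeds $r_K$, so the promise $\Delta_i\ne 0$ forces $S_\v$ to include item $K$, hence $S_{I_2}\ne S_{I_1}$). So any correct algorithm must, with good probability, output different answers on the two instances, and this discrimination requires enough observations involving item $1$.

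First I would set up the probability spaces. Fix a deterministic algorithm $\A$ (the randomized case follows by averaging / Yao). Let $\mathcal{F}$ be the $\sigma$-algebra generated by the whole transcript of assortments offered and feedback received, stopped after $n \triangleq \frac{c}{4\delta^2 K^2}$ pulls. Let $P_1, P_2$ be the laws of the transcript under $I_1, I_2$ respectively. The key point is that the two instances induce identical choice distributions for \emph{any} assortment $S$ with $1\notin S$; they differ only on assortments containing item $1$, and even then only in the probabilities assigned to the outcomes $\{1\}$ vs.\ everything else. I would then invoke Pinsker / the Bretagnolle--Huber inequality: if $\mathrm{KL}(P_1\Vert P_2)$ is small then $P_1$ and $P_2$ cannot be well separated, so $\A$ errs on at least one instance with probability $\ge 0.4$. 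Concretely, it suffices to show $\mathrm{KL}(P_1\Vert P_2)\le$ some small constant (say $\le 0.1$) whenever $n \le \frac{c}{4\delta^2K^2}$.

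The core computation is bounding $\mathrm{KL}(P_1\Vert P_2)$ by a chain-rule / Wald-type identity: it equals $\bE_{P_1}[N_1]\cdot d$, where $N_1$ is the number of pulls in which the offered assortment contains item $1$ and $d$ is (an upper bound on) the per-pull KL divergence between the two single-step outcome distributions, maximized over assortments $S\ni 1$. A single pull of an assortment $S\ni 1$ yields a categorical outcome; the only coordinates whose probabilities change are that of outcome $1$ and (collectively) the rest, and the change in $v_1$ is $2\delta$ out of a denominator $1+\sum_{j\in S}v_j \ge 1$, so the probability of outcome $1$ shifts by $O(\delta)$ from a base value of at most $v_1/(1+v_1+\dots)=O(1/K)$ (since $v_1\le \frac1{K-1}$ and $v_K=1$ sits in the denominator, $p_1(S)\le \frac{1/(K-1)}{2}=O(1/K)$). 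A second-order expansion of KL for a two-cell partition gives $d = O\!\big(\frac{(2\delta)^2}{p_1(S)}\big) = O(\delta^2 K)$; one must be a little careful that $p_1$ under $I_1$ is also bounded \emph{below} by $\Omega(1/K)$ so the $\chi^2\!\approx\!\mathrm{KL}$ bound $\frac{(\Delta p)^2}{p(1-p)}$ stays $O(\delta^2 K)$. Since trivially $N_1\le n$ (every pull offers at most one assortment, counted once), $\mathrm{KL}(P_1\Vert P_2)\le n\cdot O(\delta^2 K) = O(c/K)\le 0.1$ for $c<10^{-4}$. Then Bretagnolle--Huber yields $P_1(\text{err})+P_2(\text{err})\ge \frac12 e^{-\mathrm{KL}} \ge 0.8$, so one of them is $\ge 0.4$.

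The main obstacle is the second step's bookkeeping: one must argue cleanly that only pulls involving item $1$ contribute to the KL (so a Wald-style stopping-time identity is needed, since $N_1$ is itself random and the stopping may be adaptive), and one must get the per-pull KL bound right — in particular verifying that $p_1(S)$ is bounded away from $0$ under $I_1$ uniformly over all $S\ni1$, which is where the presence of item $K$ with $v_K=1$ and the cap $v_i\le\frac1{K-1}$ for $i<K$ are used to keep $p_1(S)=\Theta(1/K)$. Everything else (choosing the explicit constants $0.4$, $c<10^{-4}$, and checking $\delta<\frac1{4K}$ keeps all preferences in $(0,1]$ and keeps $S_{I_1}\ne S_{I_2}$) is routine arithmetic. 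I would also remark that the constraint $\delta<\frac1{4K}$ is exactly what makes $v_1=\frac1{K-1}-2\delta>0$ and is harmless.
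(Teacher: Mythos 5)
Your overall route is genuinely different from the paper's. The paper controls the log-likelihood ratio $\ln\bigl(g_2(\mathcal{T}_T)/g_1(\mathcal{T}_T)\bigr)$ \emph{pointwise with high probability}: it shows the shifted process $Z_t + 4\delta^2K^2t$ is a submartingale with increments bounded by $3\delta K$, applies Azuma's inequality, and then does a direct change of measure on the high-probability event where the likelihood ratio is at least $e^{-(\eps+c)}$. You instead bound the \emph{expected} KL divergence of the transcript via the chain rule and a per-pull $\chi^2$-type estimate, and then invoke an information inequality. Your per-pull computation is sound and in fact matches the paper's (the paper's per-step drift bound is $4\delta^2(K-1)/(1+V_t)=O(\delta^2 K)$, loosely relaxed to $4\delta^2K^2$), and your observation that the categorical outcome reduces to a two-cell (item $1$ vs.\ not) comparison is exactly the structure the paper exploits, since $\ln(q_i/p_i)$ takes only two values. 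The paper's high-probability control is what lets it push the error threshold close to $1/2$ with clean constants; your expected-KL route can also reach $0.4$, but only through the right inequality, which brings me to the one genuine flaw.

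Your concluding step is wrong as written: Bretagnolle--Huber gives $P_1(\mathrm{err})+P_2(\mathrm{err})\ge \tfrac12 e^{-\mathrm{KL}}$, whose right-hand side is at most $\tfrac12$, so it can never certify a sum of errors $\ge 0.8$ and hence never an individual error $\ge 0.4$; the claimed ``$\tfrac12 e^{-\mathrm{KL}}\ge 0.8$'' is impossible for any nonnegative KL. The fix is to use Pinsker (or total variation directly): if both errors were below $0.4$, then letting $\B$ be the event of outputting $[K-1]$ you would have $P_1(\B)>0.6$ and $P_2(\B)<0.4$, hence $\mathrm{TV}(P_1,P_2)>0.2$ and $\mathrm{KL}(P_1\Vert P_2)\ge 2(0.2)^2=0.08$. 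So you need $\mathrm{KL}<0.08$ --- note your stated target of ``$\le 0.1$'' is marginally too weak --- but your actual bound $\mathrm{KL}\le n\cdot O(\delta^2K)=O(c/K)$ with $c<10^{-4}$ is on the order of $10^{-2}$ at worst, so the argument closes once the constant in the per-pull bound is tracked explicitly (using $1+\sum_{j\in S}v_j\le 3$ and $v_1-2\delta\ge \tfrac{1}{2(K-1)}$ to keep the denominator of the $\chi^2$ term at $\Omega(1/K)$). With that repair, and the chain-rule justification for adaptive assortment choices that you already flagged, your proof is correct.
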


In the rest of this section we prove Lemma~\ref{lem:lb-inside}.  
We can focus on deterministic algorithms, since for any randomized algorithm we can always fix its randomness and obtain the deterministic algorithm with the smallest error on the input. 

Let $\mathcal{T}_t = (U_1, o_1), \dotsc, (U_t, o_t)$ be the transcript of algorithm up to the $t$-th pull.  We use $g_1(\mathcal{T}_t)$ and $g_2(\mathcal{T}_t)$ to denote the probabilities of observing the transcript $\mathcal{T}_t$ on instances $I_1$ and $I_2$ respectively.  The following lemma is the key for proving Lemma~\ref{lem:lb-inside}.

\begin{lemma}
\label{lem:divergence-2}
	Let $c > 0$ and $T = \frac{c}{4 \delta^2 K^2}$. For all $\eps > 0$, we have
	\begin{equation*}
		\Pr_{\mathcal{T}_T \sim g_1}\left[ \ln \frac{g_2(\mathcal{T}_T)}{g_1(\mathcal{T}_T)} \le -(\eps + c)\right]  \le \exp\left(\frac{-\eps^2}{9 c}\right).
	\end{equation*}
\end{lemma}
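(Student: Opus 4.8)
The plan is to view the log-likelihood ratio $\ln\frac{g_2(\mathcal{T}_T)}{g_1(\mathcal{T}_T)}$ as a sum of per-pull increments along the transcript generated under $g_1$, and then apply a martingale (Azuma-type) concentration bound. Because the algorithm is deterministic, at step $t$ the assortment $U_t$ is a function of the past transcript $\mathcal{T}_{t-1}$, so conditioned on $\mathcal{T}_{t-1}$ the observation $o_t$ is drawn from the MNL choice distribution on $U_t$ under instance $I_1$, and the increment is $Y_t \triangleq \ln\frac{p^{(2)}_{o_t}(U_t)}{p^{(1)}_{o_t}(U_t)}$, where $p^{(\ell)}$ denotes the choice probabilities under $I_\ell$. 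Thus $\ln\frac{g_2(\mathcal{T}_T)}{g_1(\mathcal{T}_T)} = \sum_{t=1}^T Y_t$. The two instances differ only in the preference of item $1$ (which changes from $\frac{1}{K-1}$ to $\frac{1}{K-1}-2\delta$), so $Y_t$ is nonzero only when $1 \in U_t$, and in that case each of the ratios $\frac{v^{(2)}_j}{v^{(1)}_j + (\text{normalizer difference})}$ is within $1 \pm O(\delta)$ of $1$. I would first establish the two deterministic facts I need: (i) a uniform bound $|Y_t| \le d$ with $d = O(\delta)$ (more precisely something like $d \le c'\delta$ for a small absolute constant), obtained by bounding $|\ln(1+x)| \le 2|x|$ for $|x|$ small together with the fact that $v_1$ moves by $2\delta < \frac{1}{2K}$ and the denominators $1+\sum_{j\in U_t} v_j$ are bounded below by $1$; and (ii) a bound on the conditional mean $\mu_t \triangleq \bE_{g_1}[Y_t \mid \mathcal{T}_{t-1}]$, which is exactly $-D_{\mathrm{KL}}\!\big(p^{(1)}(U_t)\,\|\,p^{(2)}(U_t)\big)$ and hence satisfies $-C\delta^2 \le \mu_t \le 0$ for an absolute constant $C$, using the standard quadratic bound on KL divergence between nearby distributions. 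Summing, $\sum_{t=1}^T \mu_t \ge -C\delta^2 T = -\frac{Cc}{4K^2} \ge -c$ for $c$ small and $K\ge 1$ (adjusting the constant $10^{-4}$ so that $C/(4K^2)\le 1$).

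Next I would form the submartingale $Z_t \triangleq \sum_{s\le t}(Y_s - \mu_s)$, which has $Z_0=0$, bounded increments $|Z_t - Z_{t-1}| = |Y_t - \mu_t| \le 2d = O(\delta)$, and is a martingale (so in particular a submartingale). By Azuma's inequality (Lemma~\ref{lem:azuma}),
\[
\Pr_{g_1}\!\left[Z_T \le -\eps\right] \le \exp\!\left(\frac{-\eps^2}{2(2d)^2 T}\right).
\]
Since $\ln\frac{g_2(\mathcal{T}_T)}{g_1(\mathcal{T}_T)} = Z_T + \sum_{t\le T}\mu_t \ge Z_T - c$, the event $\{\ln\frac{g_2}{g_1} \le -(\eps+c)\}$ is contained in $\{Z_T \le -\eps\}$, so the probability is at most $\exp(-\eps^2/(8d^2 T))$. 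Finally I substitute $T = \frac{c}{4\delta^2 K^2}$ and $d = O(\delta)$: the exponent's denominator becomes $8d^2 T = 8\cdot O(\delta^2)\cdot\frac{c}{4\delta^2 K^2} = \frac{O(c)}{K^2} \le O(c)$. Choosing the absolute constants appropriately (this is where the particular numeric constants $9$ and $10^{-4}$ in the statement come from) yields $8d^2 T \le 9c$, giving the claimed bound $\exp(-\eps^2/(9c))$.

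The main obstacle I anticipate is the bookkeeping of absolute constants: I must pin down explicit bounds for both $|Y_t|$ and $|\mu_t|$ in terms of $\delta$ (and verify they are genuinely $O(\delta)$ and $O(\delta^2)$ respectively even when $1\in U_t$ and the denominator shift is accounted for), then track how these propagate through Azuma so that the final denominator is at most $9c$ and the drift term is at most $c$; this is what forces the constraint $c < 10^{-4}$. A secondary technical point is justifying the martingale decomposition cleanly: since the algorithm is deterministic, $U_t = U_t(\mathcal{T}_{t-1})$ is predictable, so $\bE_{g_1}[Y_t \mid \mathcal{T}_{t-1}]$ is well-defined and equals the negative KL divergence between the two choice distributions on the (deterministically determined) assortment $U_t$ — I would state this carefully since it is the crux of why the increments have the right conditional mean. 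Everything else — the $|\ln(1+x)|$ estimate, the KL $\le \chi^2$-type bound, and the application of Lemma~\ref{lem:azuma} — is routine.
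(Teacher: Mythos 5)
Your overall route is exactly the paper's: write $\ln\frac{g_2(\mathcal{T}_T)}{g_1(\mathcal{T}_T)}$ as a sum of per-pull log-likelihood increments (nonzero only when $1\in U_t$), lower-bound the conditional drift, compensate to get a submartingale, and apply Azuma with $T=\frac{c}{4\delta^2K^2}$. The structure is sound and the martingale bookkeeping (predictability of $U_t$, conditional mean equal to minus a KL divergence) is stated correctly.

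However, your two key quantitative claims are each off by a factor of $K$, and this is not a harmless constant. The two instances change $v_1$ from $\frac{1}{K-1}$ to $\frac{1}{K-1}-2\delta$, a \emph{multiplicative} change of $1-2\delta(K-1)$; so when the observation is item $1$ the increment contains the term $\ln\bigl(1-\frac{2\delta}{v_1}\bigr)=\ln\bigl(1-2\delta(K-1)\bigr)$, which is $\Theta(\delta K)$, not $O(\delta)$. Correspondingly the per-step drift is $\Theta(\delta^2 K)$ rather than $O(\delta^2)$ (the paper uses the cruder but valid bounds $2\delta K$ and $4\delta^2K^2$). Your statement that ``each of the ratios is within $1\pm O(\delta)$ of $1$'' is therefore false for item $1$ itself. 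With the corrected bounds the proof still closes, because $T$ carries the $\frac{1}{K^2}$ that absorbs these factors: the drift sum is still $\ge -c$ and the Azuma denominator $2d^2T$ becomes $\Theta(c)$ with \emph{no} leftover $1/K^2$. So the spare factor of $K^2$ you believe you have in the exponent (your ``$O(c)/K^2$'') does not exist, and the constant $9$ in the lemma is correspondingly tight; you should redo the two estimates with the explicit constants $|Z_t-Z_{t-1}|\le 2\delta K$ and $\bE[Z_t-Z_{t-1}\mid Z_{t-1}]\ge -4\delta^2K^2$ (using $2\delta(K-1)\le\frac12$ from $\delta<\frac{1}{4K}$) and then verify that $2\cdot(3\delta K)^2\cdot T\le \frac{9c}{2}$, which is what actually yields $\exp(-\eps^2/(9c))$.
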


To see Lemma~\ref{lem:divergence-2} implies Lemma~\ref{lem:lb-inside}, we set $\eps = \frac{1}{5}$, $c = \frac{1}{2250}$, and define event $\Q$ as
\begin{equation}
\label{eq:n-1}
	\Q \triangleq \left\{\ln \frac{g_2(\mathcal{T}_T)}{g_1(\mathcal{T}_T)} > -(\eps + c)\right\}.
\end{equation}
By Lemma~\ref{lem:divergence-2}, it holds that $\Pr_{\mathcal{T}_T \sim g_1}[\bar{Q}] \le e^{-10}$.
Let $\B$ be the event that algorithm $\A$ outputs the set $[K - 1]$.  We have
\begin{eqnarray}
\Pr_{\mathcal{T}_T \sim g_1}[\B] & = & \Pr_{\mathcal{T}_T \sim g_1}\left[\B \land \bar{\Q}~\right] + \Pr_{\mathcal{T}_T \sim g_1}[\B \land \Q] \nonumber \\
& \le &  \Pr_{\mathcal{T}_T \sim g_1}\left[\bar{\Q}~\right] + \Pr_{\mathcal{T}_T \sim g_1}[\B \land \Q] \nonumber \\
& \le & e^{-10} + \Pr_{\mathcal{T}_T \sim g_1}[\B \land \Q] \nonumber \\
& = & e^{-10} + \sum_{\mathcal{T}_T : \B \land \Q} g_1(\mathcal{T}_T) \nonumber\\
& \stackrel{\eqref{eq:n-1}}{\le} & e^{-10} + e^{\eps + c}\sum_{\mathcal{T}_T : \B \land \Q} g_2(\mathcal{T}_T) \nonumber\\
& \le & e^{-10} + e^{\eps + c} \Pr_{\mathcal{T}_T \sim g_2}[\B] \nonumber \\
& = & e^{-10} + e^{\eps + c} - e^{\eps + c} \Pr_{\mathcal{T}_T \sim g_2}\left[~\bar{\B}~\right]. \nonumber
\end{eqnarray}
Therefore, we have
\begin{equation}
	\Pr_{\mathcal{T}_T \sim g_1}[\B] + e^{\eps + c}\Pr_{\mathcal{T}_T \sim g_2}\left[~\bar{\B}~\right] \le e^{-10} + e^{\eps + c},  \nonumber
\end{equation}
and consequently, 
\begin{equation}
	\min\left\{\Pr_{\mathcal{T}_T \sim g_1}[\B], \Pr_{\mathcal{T}_T \sim g_2}\left[~\bar{\B}~\right]\right\} \le \frac{e^{-10} + e^{\eps + c}}{1 + e^{\eps + c}} \le 0.6.
	\label{eq:o-1}
\end{equation}

\eqref{eq:o-1} indicates that one of the followings hold: (1) Event $\B$ holds with probability at most $0.6$ when $\mathcal{T}_T \sim g_1$, and (2) Event $\bar{\B}$ holds with probability at most $0.6$ when  $\mathcal{T}_T \sim g_2$.  In the first case, it indicates that algorithm $\A$ errors on input instance $I_1$ with probability at least $0.4$. In the second case, it indicates that  algorithm $\A$ errors on input instance $I_2$ with probability at least $0.4$. 
  
We now prove Lemma~\ref{lem:divergence-2}.

\begin{proof} (of Lemma~\ref{lem:divergence-2}) 
We define a sequence of random variables $Z_0, Z_1, \ldots, Z_T$ when the transcript $\mathcal{T}_t\ (0 \le t \le T)$ is produced by applying algorithm $\A$ on the input instance $I_1$:
\begin{equation*}
\label{eq:p-1}
	Z_t = \ln \frac{g_2(\mathcal{T}_t)}{g_1(\mathcal{T}_t)}.
\end{equation*} 
Let $V_t = \sum_{i \in U_t} v_i$.  $Z_i$ has the following properties.
\begin{itemize}
	\item  If $1 \not\in U_t $, then $Z_t - Z_{t - 1} = 0$, and  $\bE[Z_t - Z_{t - 1} \mid  Z_{t - 1}] = 0$.
	
	\item If $1 \in U_t$, then with probability $\frac{1 + V_t - v_1}{1 + V_t}$,
	\begin{equation*}
	Z_t - Z_{t - 1} = -\ln\left(1 - \frac{2\delta}{1 + V_t}\right),
	\end{equation*}
	and with probability $\frac{v_1}{1 + V_t}$, 
	\begin{equation*}
	Z_t - Z_{t - 1} =	-\ln\left(1 - \frac{2\delta}{1 + V_t}\right) + \ln\left(1  - \frac{2\delta}{v_1}\right)\,.
	\end{equation*}
We thus have 
\begin{eqnarray}
		\bE[Z_t - Z_{t - 1} \mid Z_{t - 1}] &=& -\ln\left(1 - \frac{2\delta}{1 + V_t}\right) \nonumber \\
		&&+ \frac{v_1}{1 + V_t} \ln\left(1 - \frac{2\delta}{v_1}\right).
		\label{eq:q-1}
\end{eqnarray}
\end{itemize}
Using inequalities $\ln(1 + x) \le x$ and $\ln(1 - x) \ge -x - x^2$ for $x \in [0, 0.5]$, and noting that $2\delta / v_1 = 2 \delta (K - 1) \le 0.5$, we have
\begin{eqnarray}
	\eqref{eq:q-1} &\ge& \frac{2\delta}{1 + V_t} - \frac{2\delta}{1 + V_t} - \frac{4\delta^2}{(1 + V_t)v_1} \ge -\frac{4 \delta^2 (K - 1)}{1 + V_t} \nonumber \\ 
	&\ge& -4 \delta^2 K^2\,.
	\label{eq:q-2}
\end{eqnarray}
Note that in the case that $1 \not\in U_t$, the inequality $\bE[Z_t - Z_{t - 1} \mid Z_{t - 1}] = 0 \ge -{4\delta^2 K^2}$ holds trivially.

We can also bound the difference of two adjacent variables in the sequence $\{Z_0, Z_1, \ldots, Z_T\}$.
\begin{equation}
\label{eq:q-3}
	\abs{Z_t - Z_{t - 1}} \le \abs{\ln\left(1 - \frac{2\delta}{1 + V_t}\right)} + \abs{\ln\left(1 - \frac{2\delta}{v_1}\right)} \le {2\delta K}\,.
\end{equation}

Define $Z^{\prime}_t \triangleq Z_t + 4 \delta^2 K^2 t$.  By \eqref{eq:q-2} it follows that $Z'_t$ is a submartingale and satisfies 
\begin{equation}
	\bE[Z^{\prime}_{t + 1} \mid Z^{\prime}_t] \ge Z^{\prime}_t\,.
\end{equation}
By \eqref{eq:q-3} and the fact that $\delta <  \frac{1}{4 K}$, we have
\begin{equation}\label{eq:q-4}
	\abs{Z^{\prime}_t - Z^{\prime}_{t - 1}} \le 4\delta^2 K^2 + 2\delta K\le 3\delta K\,.
\end{equation}
By \eqref{eq:q-4} and Azuma's inequality (Lemma~\ref{lem:azuma}), for $T = \frac{c}{4 \delta^2 K^2}$, we get
\begin{eqnarray*}
&&	\Pr_{\mathcal{T}_T \sim g_1}[Z_T \le -(\eps + c)] = \Pr_{\mathcal{T}_T \sim g_1}[Z^{\prime}_T \le -\eps] \\
	&<& \exp\left(\frac{-\eps^2}{18 T \delta^2 K^2}\right) \le \exp\left(\frac{-2\eps^2}{9 c}\right)\,.
\end{eqnarray*}
\end{proof}

The lemma follows from (\ref{eq:i-1}) and (\ref{eq:i-2}).

\end{document}